\title{SGD as Free Energy Minimization: \\ A Thermodynamic View on Neural Network Training}
\author{
Ildus Sadrtdinov$\bf{}^{1}$\thanks{Equal contribution.}\:, 
Ivan Klimov$\bf{}^{1}$\footnotemark[1]\:, 
Ekaterina Lobacheva$^{2,3}$\:, 
Dmitry Vetrov$\bf{}^{1}$\\
${}^1$ Constructor University, Bremen \quad
${}^2$ Mila -- Quebec AI Institute, \quad
${}^3$ Université de Montréal\\
\texttt{\{isadrtdinov,iklimov,dvetrov\}@constructor.university}\\
\texttt{ekaterina.lobacheva@mila.quebec}
}
\begin{document}

\maketitle

\begin{abstract}
We present a thermodynamic interpretation of the stationary behavior of stochastic gradient descent (SGD) under fixed learning rates (LRs) in neural network training.
We show that SGD implicitly minimizes a free energy function $F=U-TS$, balancing training loss $U$ and the entropy of the weights distribution $S$, with temperature $T$ determined by the LR.
This perspective offers a new lens on why high LRs prevent training from converging to the loss minima and how different LRs lead to stabilization at different loss levels.
We empirically validate the free energy framework on both underparameterized (UP) and overparameterized (OP) models.
UP models consistently follow free energy minimization, with temperature increasing monotonically with LR, while for OP models, the temperature effectively drops to zero at low LRs, causing SGD to minimize the loss directly and converge to an optimum.
We attribute this mismatch to differences in the signal-to-noise ratio of stochastic gradients near optima, supported by both a toy example and neural network experiments.
\end{abstract}

\section{Introduction}

Modern neural networks (NNs) are typically trained using stochastic gradient descent (SGD) and its numerous variants~\cite{KingmaB14,loshchilov2018decoupled,chen2023symbolic}.
Understanding the behavior of these optimization methods is essential for developing high-performing models.
One of the most important hyperparameters in such iterative algorithms is the learning rate (LR), 
as it affects both the convergence speed and the properties of the final solution~\cite{bengio2012practical,li2019towards,you2019does,kodryan2022training,andriushchenko2023sgd}.
Notably, when using a fixed LR, SGD often reduces the training loss only to a certain level~\cite{kodryan2022training,andriushchenko2023sgd}; further improvement typically requires LR annealing~\cite{li2019towards,Sadrtdinov2024largelr}.
Interestingly, different fixed LRs cause SGD to plateau at different training loss levels (see Figure~\ref{fig:loss_entropy_iters} and~\cite{kodryan2022training}), indicating distinct stationary behaviors depending on the LR.

Direct theoretical analysis of individual training trajectories in complex models is intractable, so we adopt an alternative approach and introduce a thermodynamic framework to describe stationary behavior of SGD under different LRs.
Specifically, we show that training with SGD minimizes a form of \emph{free energy} given by $F=U-TS$, which determines a trade-off between the training loss $U$ and the entropy of weights distribution $S$ with the temperature parameter $T$, controlled by the LR.
This perspective offers a thermodynamic interpretation for two commonly observed phenomena:
\begin{enumerate}
\item Why training fails to converge to an optimum at high LRs --- \emph{because SGD minimizes the free energy rather than the training loss alone}.
\item Why different LRs yield different final training losses --- \emph{because such stationary distributions minimize the free energy at temperatures corresponding to these LRs}.
\end{enumerate}
Although these research questions have been partly discussed in the literature (see Section~\ref{sec:related}), our work presents a novel, self-consistent perspective, supported by extensive empirical results.

\begin{figure}
    \centering
    \addtolength{\tabcolsep}{-0.4em}

    \begin{tabular}{ccccc}
        \multicolumn{2}{c}{Undeparameterized (UP)} & \multicolumn{2}{c}{Overparameterized (OP)} & \multirow{2}{*}{\includegraphics[width=0.09\textwidth]{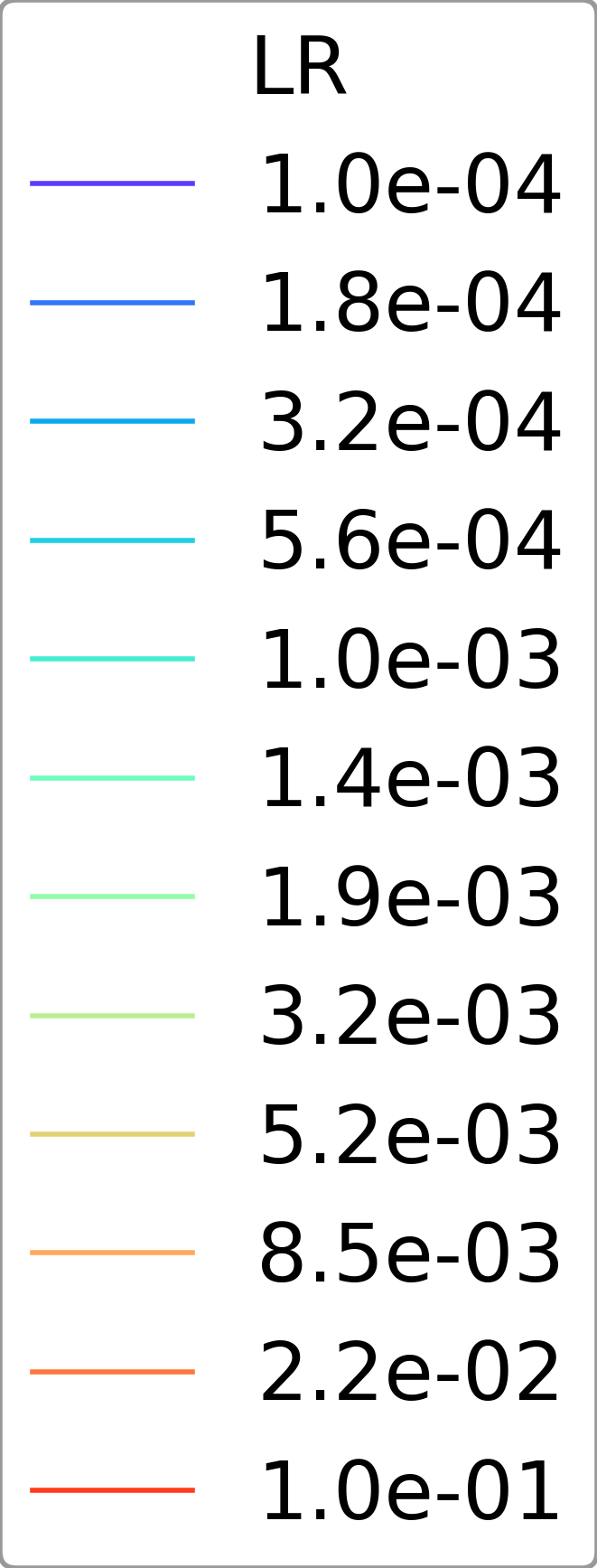}}\\
        \includegraphics[width=0.21\textwidth]{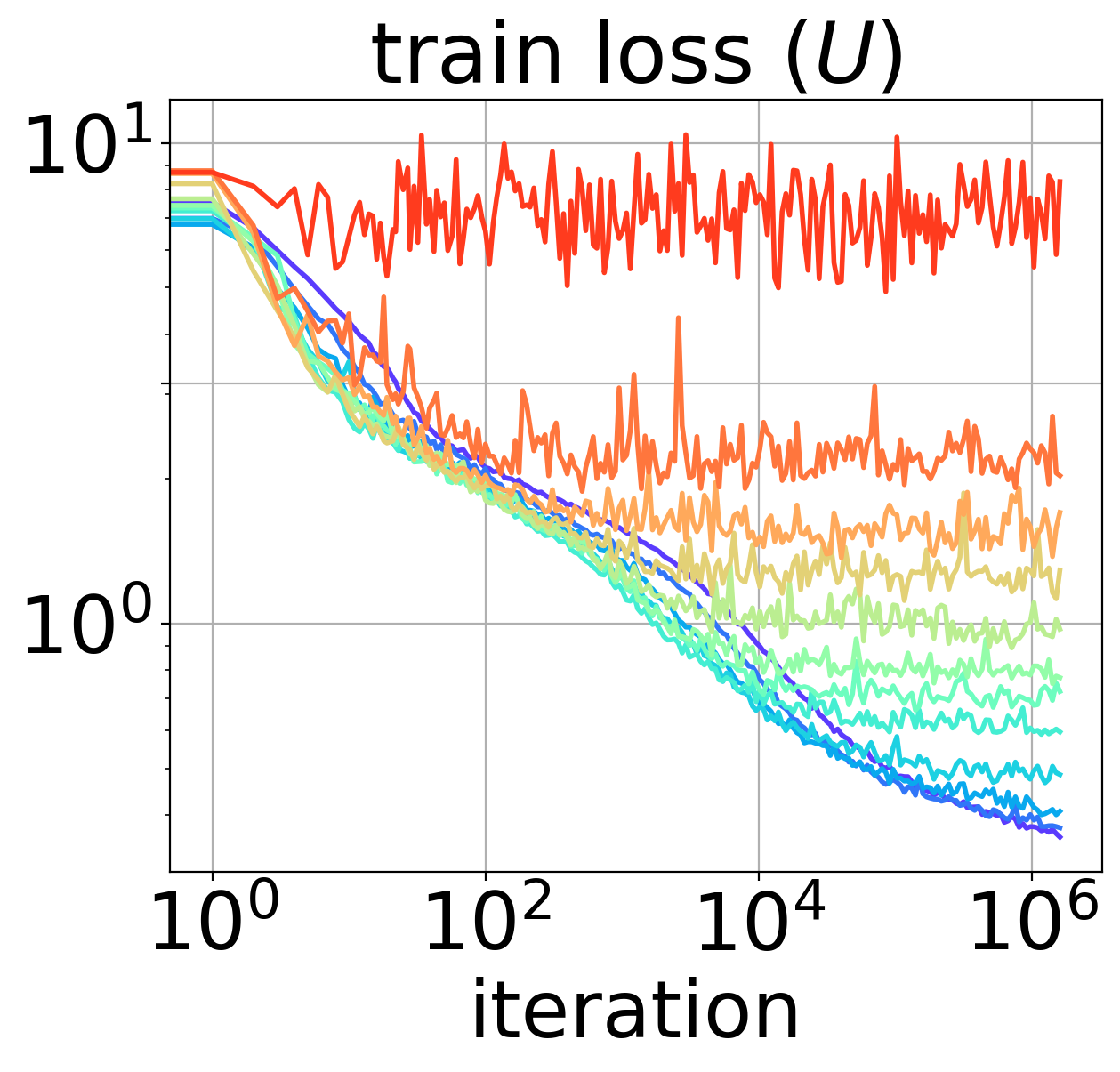} & 
        \includegraphics[width=0.203\textwidth]{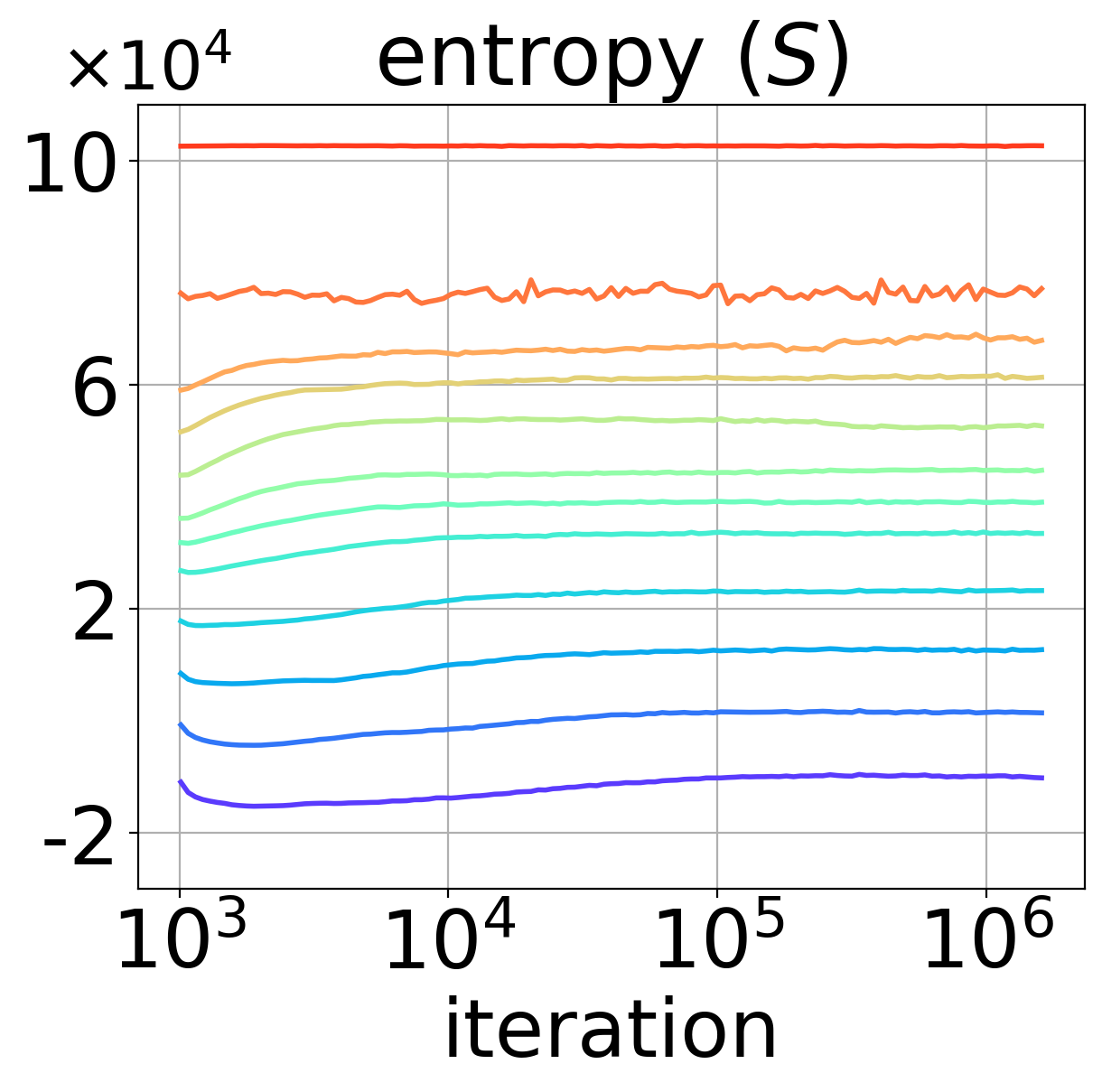} & 
        \includegraphics[width=0.218\textwidth]{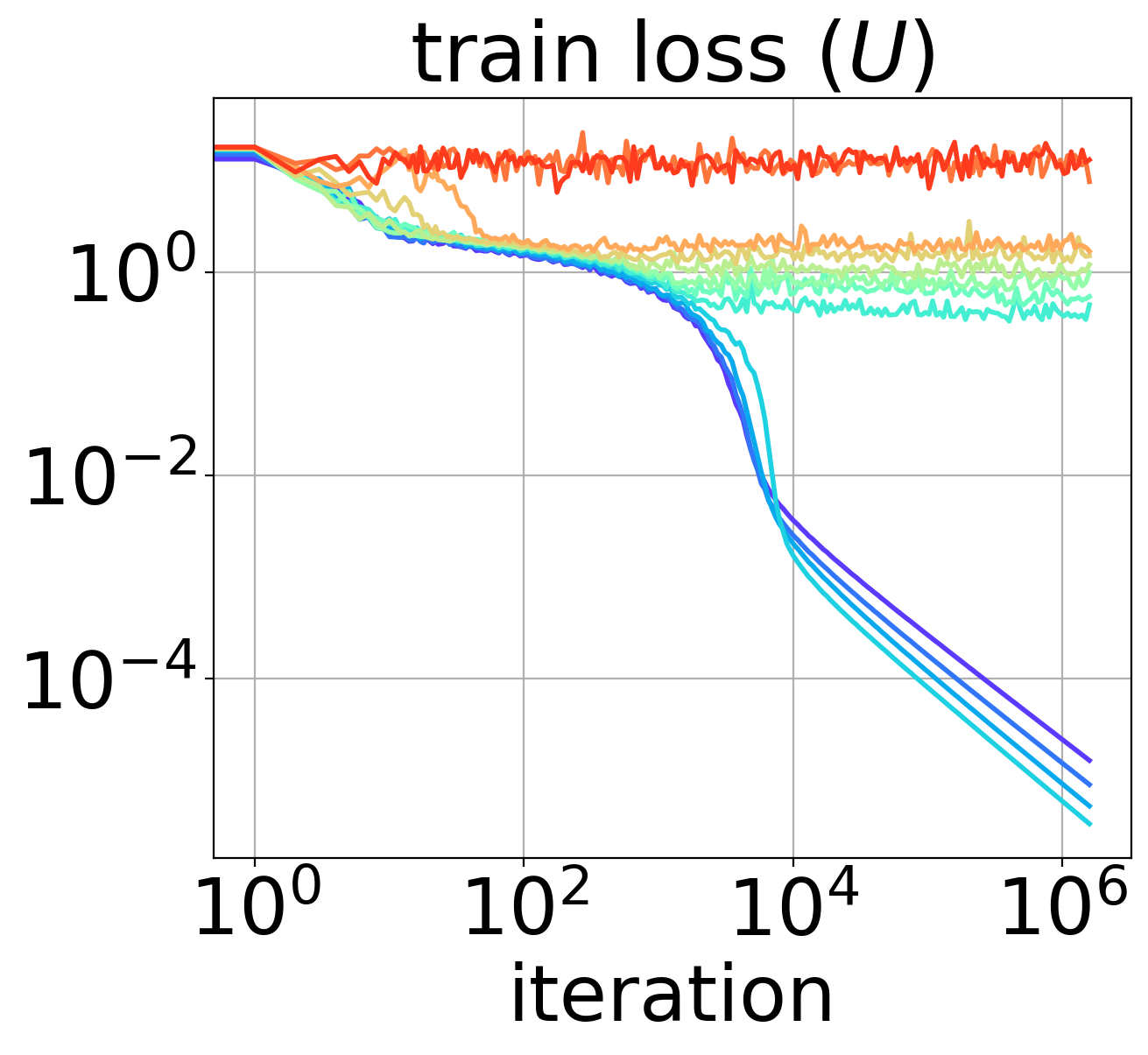} & 
        \includegraphics[width=0.208\textwidth]{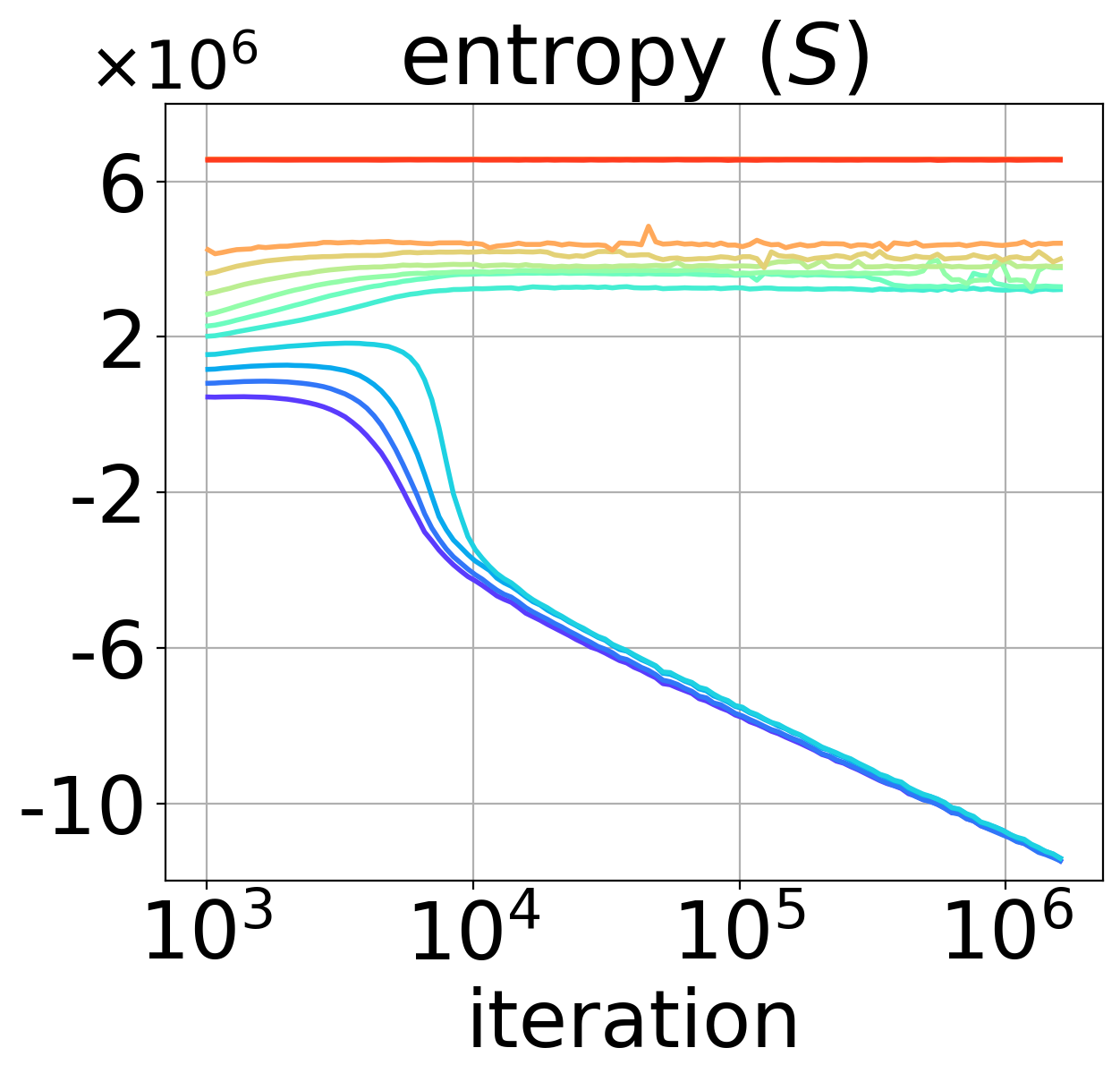} & 
    \end{tabular}
    \caption{Stationary loss and entropy for UP and OP settings. ConvNet on CIFAR-10.}
    \label{fig:loss_entropy_iters}
\end{figure}

\begin{figure}[!b]
\end{figure}

Since the behavior of a model depends heavily on its ability to fit the entire training dataset~\cite{interp_sgd}, we consider NNs of different size.
In practice, many modern models—especially large-scale foundation models~\cite{wav2vec, gpt_3, bert, clip}—are \emph{underparameterized (UP)}, meaning they lack capacity to memorize all training samples due to the vast size of the datasets involved.
Despite this, UP models exhibit remarkable properties, such as adaptability to new tasks and domains~\cite{adaptation2, adaptation1}, as well as zero-shot inference~\cite{gpt_3, lrfm_benchmark, clip}, making them widely adopted in both research and industry.
Conversely, \emph{overparameterized (OP)} models contain significantly more parameters than training examples, enabling them to achieve perfect accuracy on the training set.
Such models also appear in practice—for instance, during transfer learning~\cite{big_transfer}, where large pretrained models are fine-tuned on small datasets.
OP models demonstrate phenomena not typically observed in UP models, such as grokking~\cite{grokking}, double descent~\cite{double_descent2, double_descent}, and mode connectivity~\cite{draxler2018essentially, garipov2018loss}. 
In this work, we compare the stationary behavior of UP and OP models through our thermodynamic lens.
Our \textbf{key contributions} are:
\begin{enumerate}
    \item We propose and empirically validate a free energy minimization framework that characterizes the stationary behavior of SGD.
    \item We empirically construct the temperature function for both under- and overparameterized models, showing that in the underparameterized case, temperature remains positive and increases smoothly with learning rate, while in the overparameterized case, it drops to zero at small learning rates, enabling convergence to an optimum.
    \item We explain this mismatch between the parameterization regimes as a consequence of differences in the signal-to-noise ratio (SNR) of stochastic gradients near optima.
\end{enumerate}
Our code is available at: \url{https://github.com/isadrtdinov/sgd-free-energy}

\section{Related works}
\label{sec:related}

{\bf SGD dynamics} 
SGD training differs from loss minimization via gradient flow in two key ways: stochasticity and the use of a finite non-zero LR. Both have been shown to introduce implicit regularization, preventing SGD from minimizing the original loss directly. A finite LR leads to regularization on the norm of the full gradient~\cite{barrett2021implicit} and weight norms~\cite{pmlr-v139-liu21ad,ziyin2022strength}, with stronger effects at higher LRs. Meanwhile, stochasticity induces regularization on the variance of stochastic gradients, which becomes more pronounced with smaller batch sizes~\cite{smith2021on}. In practice, these implicit regularization effects help SGD converge to flatter, more generalizable optima~\cite{barrett2021implicit,smith2021on,10.5555/3524938.3525778}, and may explain its superior generalization compared to full-batch gradient descent~\cite{geiping2022stochastic}. Our thermodynamic perspective is closely related to this view, as we show that SGD optimizes a free energy function that includes a noise-dependent entropy term, rather than minimizing the loss alone.

In OP models where zero loss is achievable, SGD with a sufficiently small finite LR can converge to a minimum point~\cite{4576,pmlr-v89-nacson19a,zou2018stochasticgradientdescentoptimizes,du2018gradient}. 
However, at larger LRs, which are common in practice, SGD instead reaches a stationary regime where the loss stabilizes at a non-zero level determined by the LR~\cite{kodryan2022training,andriushchenko2023sgd,NEURIPS2024_29496c94}. In UP settings, even small LRs lead to stationarity due to irreducible gradient noise~\cite{pmlr-v84-chee18a}. This behavior has been studied by modeling SGD as a stochastic differential equation (SDE) and analyzing its stationary distribution~\cite{yaida2018fluctuationdissipation,pmlr-v139-liu21ad,10.1162/neco_a_01626,jastrzebski2017three}, which reflects a modified loss influenced by both gradient noise and curvature—an effect similar to implicit regularization. Empirically, loss stabilization caused by large LRs in the beginning of training has been shown to improve generalization by guiding training toward regions of the loss landscape containing only wide optima~\cite{Sadrtdinov2024largelr}, and promoting learning more stable and low-rank features~\cite{li2019towards,andriushchenko2023sgd,chen2023stochastic}.

{\bf Thermodynamic perspective} 
Several prior works interpret the stochasticity of SGD through the lens of thermodynamics by introducing a temperature parameter defined as $T\propto\eta/B$, where $\eta$ is the learning rate and $B$ is the batch size. This temperature, grounded in an optimization perspective, controls the magnitude of noise in the updates and leads to different training regimes~\cite{sgd_regimes_thermo} and convergence to minima of different sharpness~\cite{bayes_sgd_thermo}. \citet{chaudhari2018stochastic} further incorporate this notion into a free energy framework for SGD. In contrast, we develop a thermodynamic analogy based on the principle of free energy minimization, where the temperature $T$ is not predefined but emerges from the stationary behavior of the learning dynamics. Although empirically linked to the learning rate (and implicitly to batch size, which we hold fixed), our $T$ is defined through the trade-off between training loss and the entropy of the weight distribution at stationarity. This framework enables us to empirically test the free energy minimization hypothesis and analyze how the learning rate influences the temperature, and consequently, the training dynamics in UP and OP models.

Beyond SGD dynamics, several works explore thermodynamic perspectives in the context of generalization and define temperature based on the parameter-to-data ratio $N/P$~\cite{Zhang17112018}, or treat it as an explicit hyperparameter to force convergence to more generalizable minima~\cite{chaudhari2017entropysgd}. 
Thermodynamic analogies also appear in the context of representation learning~\cite{alemi2019therml,gao2020}, particularly through the lens of the Information Bottleneck (IB) principle~\cite{tishby2015learning}, which formalizes a trade-off between compression and predictive power of learned features.
Finally, Bayesian perspectives on learning, such as viewing SGD as approximate Bayesian inference~\cite{Mandt2017,nagayasu2023bayesianfreeenergydeep,Welling2011}, share structural similarities with thermodynamic formulations 
by focusing on the balance between a data-fit and a complexity term.

\section{SGD, thermodynamics and free energy}

In this section, we present the motivation for an analogy between training NNs with SGD and thermodynamics.
We begin by conceptually comparing SGD to its continuous counterpart---the gradient flow.
In the case of a full (i.~e., non-stochastic) gradient flow with an infinitesimal LR, it is well established that the dynamics converge to a stationary point of the loss function (i.e., a minimum or a saddle point)~\cite{grad_flow}.
This behavior resembles that of classical mechanical systems, where a system in stable equilibrium settles into a configuration that (locally) minimizes its total \emph{potential energy}.

However, the behavior of SGD differs significantly due to two key factors: (1) a finite non-zero LR and (2) stochasticity of gradients.
These factors introduce noise into the training, causing SGD to stabilize in a distribution near a stationary point rather than exactly at the minimum.
We denote the stationary distribution of model weights $w$ as $p(w)$.
In essence, the ''noise'' introduced by (1) and (2) prevents SGD from fully minimizing the loss function.
A similar phenomenon arises in thermodynamics, which accounts for the intrinsic thermal fluctuations of microscopic particles—another form of ''noise''.
Due to these fluctuations, the evolution of thermodynamic systems cannot be described solely by the minimization of \emph{internal energy}~$U$ (analogous to potential energy in mechanics).

Our central idea is to adopt this thermodynamic perspective to analyze the stationary behavior of SGD.
In our experiments, we fix the learning rate and batch size, which jointly control the noise level in SGD.
This setup corresponds to a thermodynamic system at constant temperature, where temperature intuitively represents the magnitude of chaotic thermal fluctuations. 
Specifically, we focus on the (Helmholtz) free energy, which is minimized at equilibrium in systems with fixed temperature~$T$ and volume~$V$. This quantity is defined as $F=U-TS$, where $S$ denotes the thermodynamic entropy.
Free energy shows how much of a system's internal energy is available to do useful work, excluding the energy lost due to entropy (i.~e., disorder in the system).
This concept is similar to stochastic optimization, where the noise prevents the training loss from decreasing to its minimum, so we employ the same formula to describe such kind of behavior.
In our analogy, we define:
\begin{itemize}
    \item the expected \emph{training loss} as internal energy $U=\mathbb{E}_{p(w)} [L(w)]$, where $L(w)$ is the training loss for weights $w$
    \item the differential \emph{entropy} of the distribution as thermodynamic entropy $S=-\mathbb{E}_{p(w)} [\log p(w)]$
\end{itemize}
The temperature~$T$ is treated as a function of the optimizer hyperparameters.
Specifically, we focus on the impact of the learning rate, while maintaining the batch size fixed across all experiments.
We further discuss the choice of Helmholtz free energy as an appropriate potential in Appendix~\ref{app:free_energy}.

The free energy $F$ captures a trade-off between minimizing the training loss and maximizing the entropy of the weight distribution.
At zero temperature, this reduces to pure loss minimization, corresponding to the behavior of gradient flow.
For low positive temperatures (i.~e., small LRs), the loss term dominates, and the resulting stationary distribution is localized around a loss minimum, yielding low values of both $U$ and $S$.
In contrast, for higher temperatures (i.e., larger LRs), the entropy term becomes more significant, leading to broader distributions with higher values of both $U$ and $S$.
These observations motivate what we refer to as the \textbf{free energy hypothesis}:
\begin{tcolorbox}[colback=blue!5!white,colframe=blue!75!black]
\centering
SGD with a fixed learning rate~$\eta$ minimizes the free energy $F = U - T S$, where $U$ is the expected training loss, $S$ is the entropy of the weight distribution, and $T = T(\eta)$ is an effective temperature that increases monotonically with the learning rate.
\end{tcolorbox}

We verify this hypothesis empirically through the following steps:
\begin{enumerate}
    \item We measure the training loss $U(\eta)$ and estimate the entropy $S(\eta)$ of the stationary distribution resulting from training with various fixed LRs~$\eta$.
    \item We demonstrate the existence of a monotonically increasing function~$T(\eta)$ such that the free energy $F = U - T(\eta) S$ attains its minimum at the observed values $U = U(\eta)$ and $S = S(\eta)$, among all stationary distributions corresponding to other LRs.
    \item Finally, we discuss the scope of applicability of the proposed framework.
\end{enumerate}

\section{Experimental setup and methodology}
\label{sec:methodology}

The goal of this study requires us to carefully fix the LR.
However, this is non-trivial in modern NNs, which often include normalization layers.
These layers induce \emph{scale-invariance} on the weights of the preceding layer, meaning the network's output depends only on the direction of the weight vector, but not its norm.
Despite this, the norm of scale-invariant parameters still influences training dynamics by altering the \emph{effective learning rate} (ELR)~\cite{lobacheva2021periodic,kodryan2022training}, even when the LR remains constant.
To control for this effect, we adopt the experimental setup proposed in prior work~\cite{kodryan2022training}: we train fully scale-invariant networks on the unit sphere using projected SGD with a fixed LR, which ensures that ELR is held constant throughout training.
In this setup, training dynamics follow one of the three regimes: (1) convergence, observed only for OP models, as it requires reaching 100\% training accuracy, (2) chaotic equilibrium with the stabilization of training loss, and (3) divergence, which is characterized by near random quality (as the fixed weight norm prevents overflow, what usually happens with regular networks at large LRs).
We train the ConvNet model on the CIFAR-10 dataset~\cite{cifar10} with $8$ and $64$ channels, for UP and OP setups, respectively.
We use a fixed batch size $B=128$ with independent (i.~e., non-epoch) sampling of batches and train for $t=1.6 \cdot 10^6$ iterations.
In Appendix~\ref{app:exp_results}, we also provide results for the ResNet-18~\cite{deep_resnet} architecture and the CIFAR-100 dataset~\cite{cifar100}.

While computing training loss is straightforward, estimating entropy is far more challenging.
In general, accurate entropy estimation requires a number of samples that grows exponentially with the dimension $D$ of the weight vector, making it infeasible for NNs. 
Instead, we use an approximate estimator that is assumed to correlate with the true entropy.
Specifically, we adapt an estimator based on a $k$-nearest neighbor ($k$-NN) graph constructed from the samples of the desired distribution~\cite{entropy}:
\[
S = D \left( \log L_k - \frac{D-1}{D} \log N \right),
\]
where $L_k$ is the total edge length in the $k$-NN graph and $N$ is the number of samples.
This formula estimates entropy up to an additive constant independent of the distribution, which, as we show later, is sufficient for inferring temperature.
In practice, we apply this estimator to segments of SGD trajectories using a sliding window of size $N=1000$, and the number of neighbors set to $k=50$.
This approach captures the contributions of both finite LR and stochastic noise to the entropy.

We now describe our protocol for temperature estimation.
For each learning rate $\eta$, we compute the loss $U(\eta)$ and entropy $S(\eta)$ at the stationary distribution.
Then, for a given LR $\eta^*$, we seek a scalar temperature $T(\eta^*)$ such that the function $F(\eta) = U(\eta) - T(\eta^*)  S(\eta)$ achieves its minimum at $\eta^*$.
This procedure corresponds to setting the temperature value to $T(\eta^*) = dU(\eta^*)/dS(\eta^*)$\footnote{A similar definition of temperature $T=(\partial U/\partial S)_V$ is well-established as the most general in physics~\cite{survey_td}.}.
The value $dS$ does not depend on the additive constant, so our entropy estimate is suitable here.
In practice, we determine $T(\eta^*)$ such that $F(\eta^*) \le \min_\eta F(\eta) + \varepsilon$, with a small $\varepsilon > 0$, to accommodate estimation inaccuracies and yield a confidence interval for $T$.
This way, we define a function $T(\eta)$ that satisfies the free energy minimization condition by construction.
To validate our free energy hypothesis, we need to show that this function (1) is well-defined over a broad range of LRs and (2) increases monotonically with LR.
A more detailed description of the experimental setup and metrics evaluation protocol is available in Appendix~\ref{app:exp_setup}.

\section{Free energy in neural networks}

\begin{figure}
    \centering
    \addtolength{\tabcolsep}{-0.4em}

    \begin{tabular}{ccc}
        \multicolumn{3}{c}{Underparameterized (UP)} \\
        \includegraphics[width=0.33\textwidth]{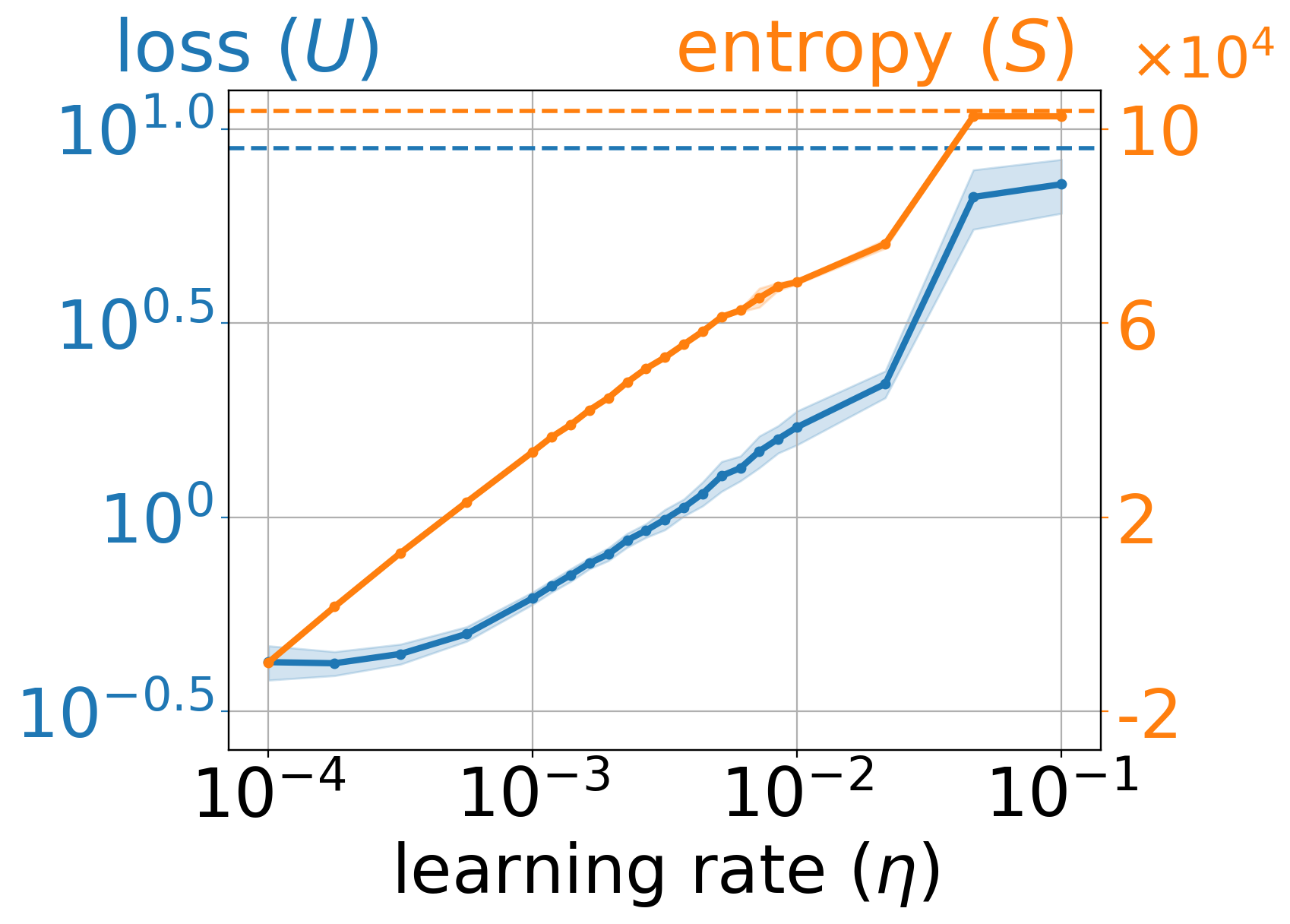} &
        \includegraphics[width=0.31\textwidth]{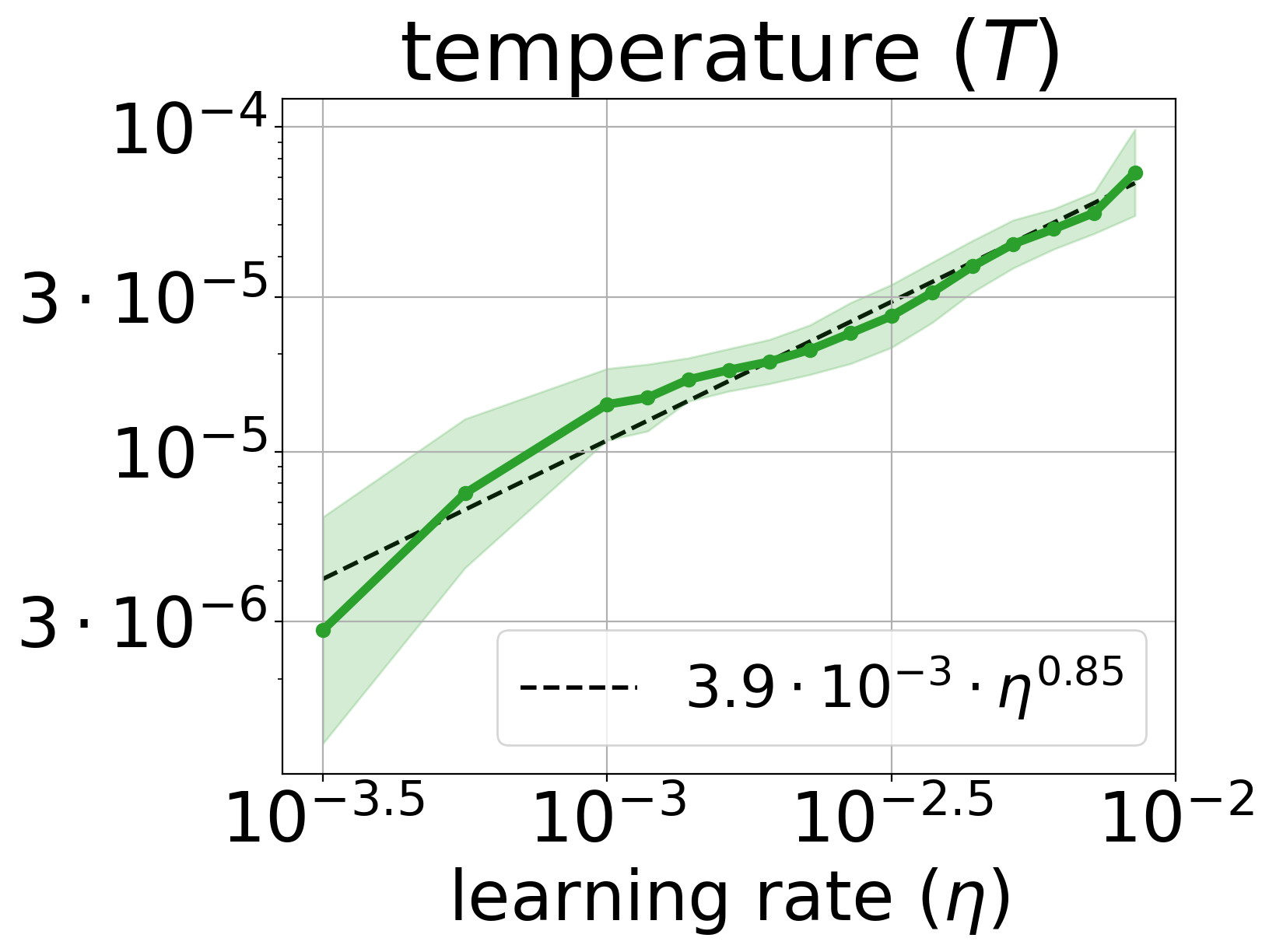} & 
        \includegraphics[width=0.35\textwidth]{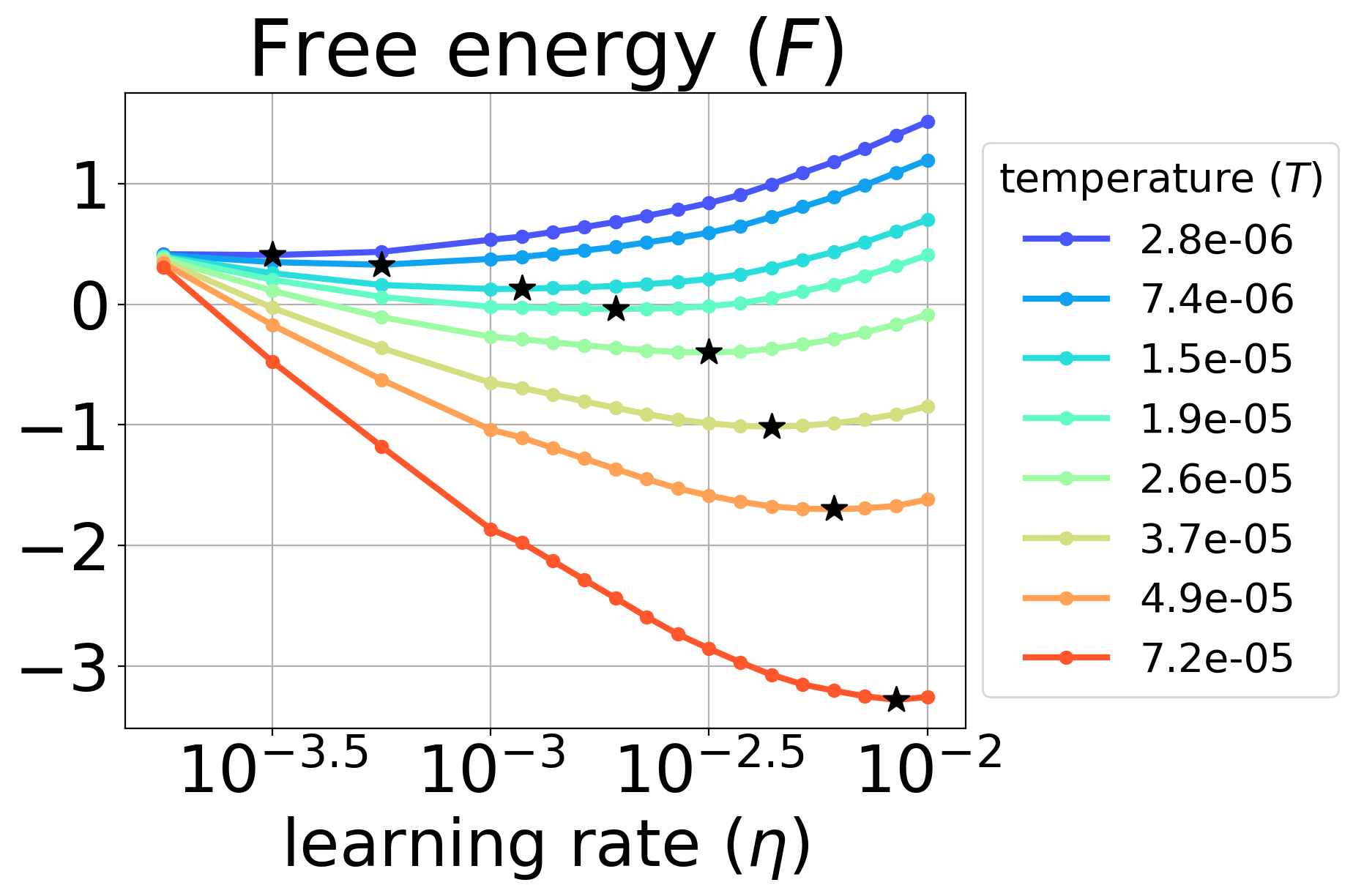} \\
        \multicolumn{3}{c}{Overparameterized (OP)} \\
        \includegraphics[width=0.33\textwidth]{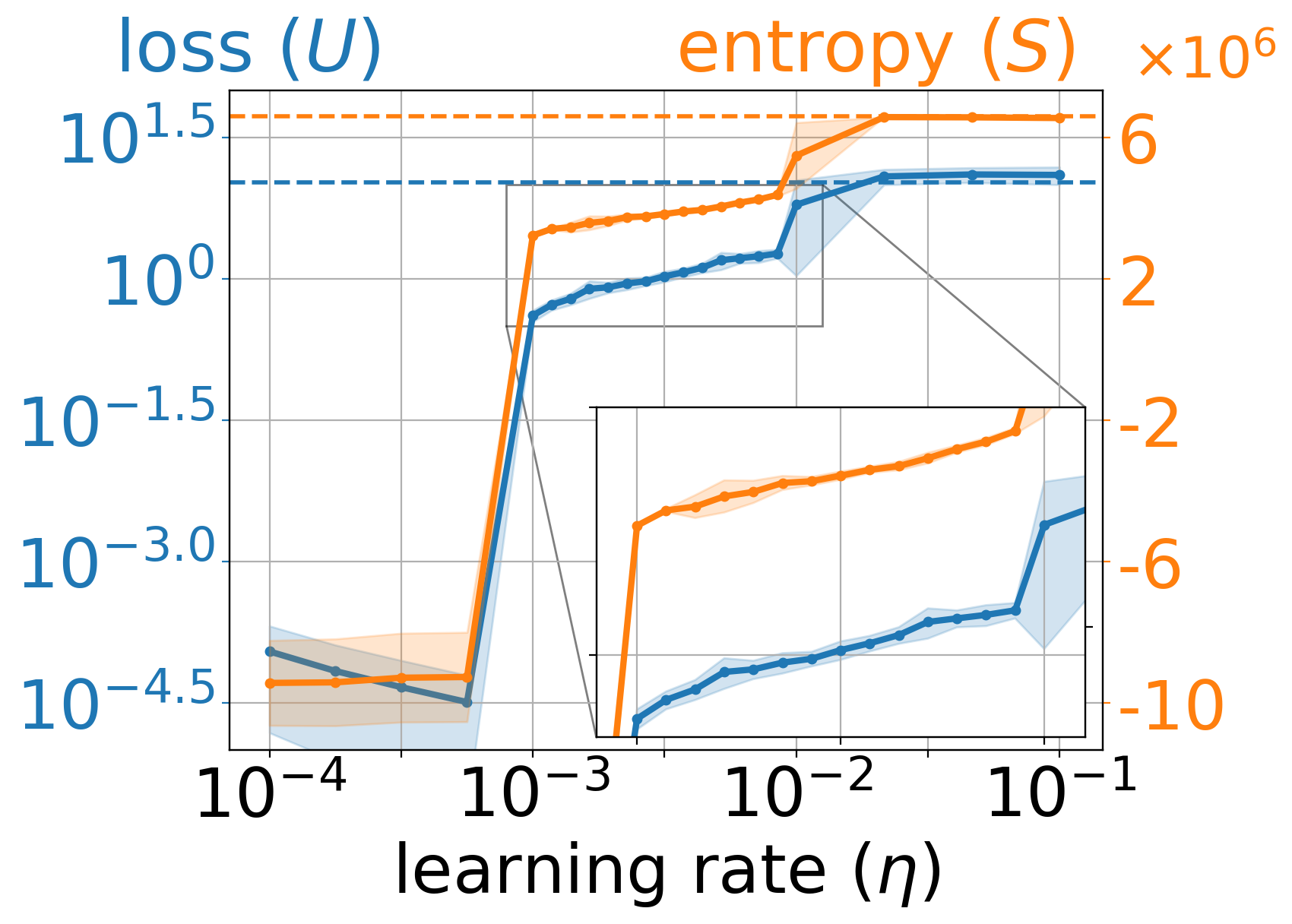} &
        \includegraphics[width=0.307\textwidth]{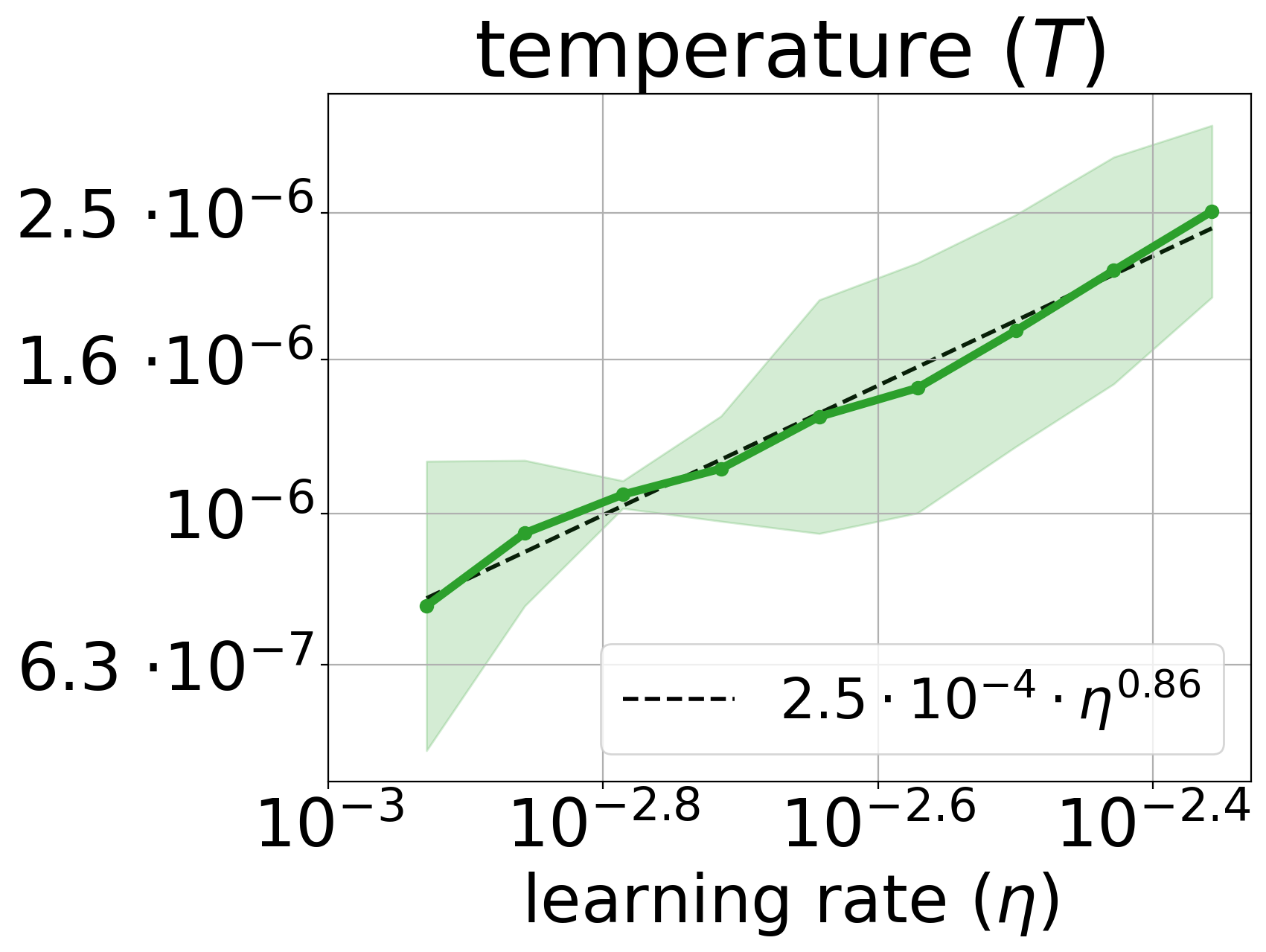} & 
        \includegraphics[width=0.35\textwidth]{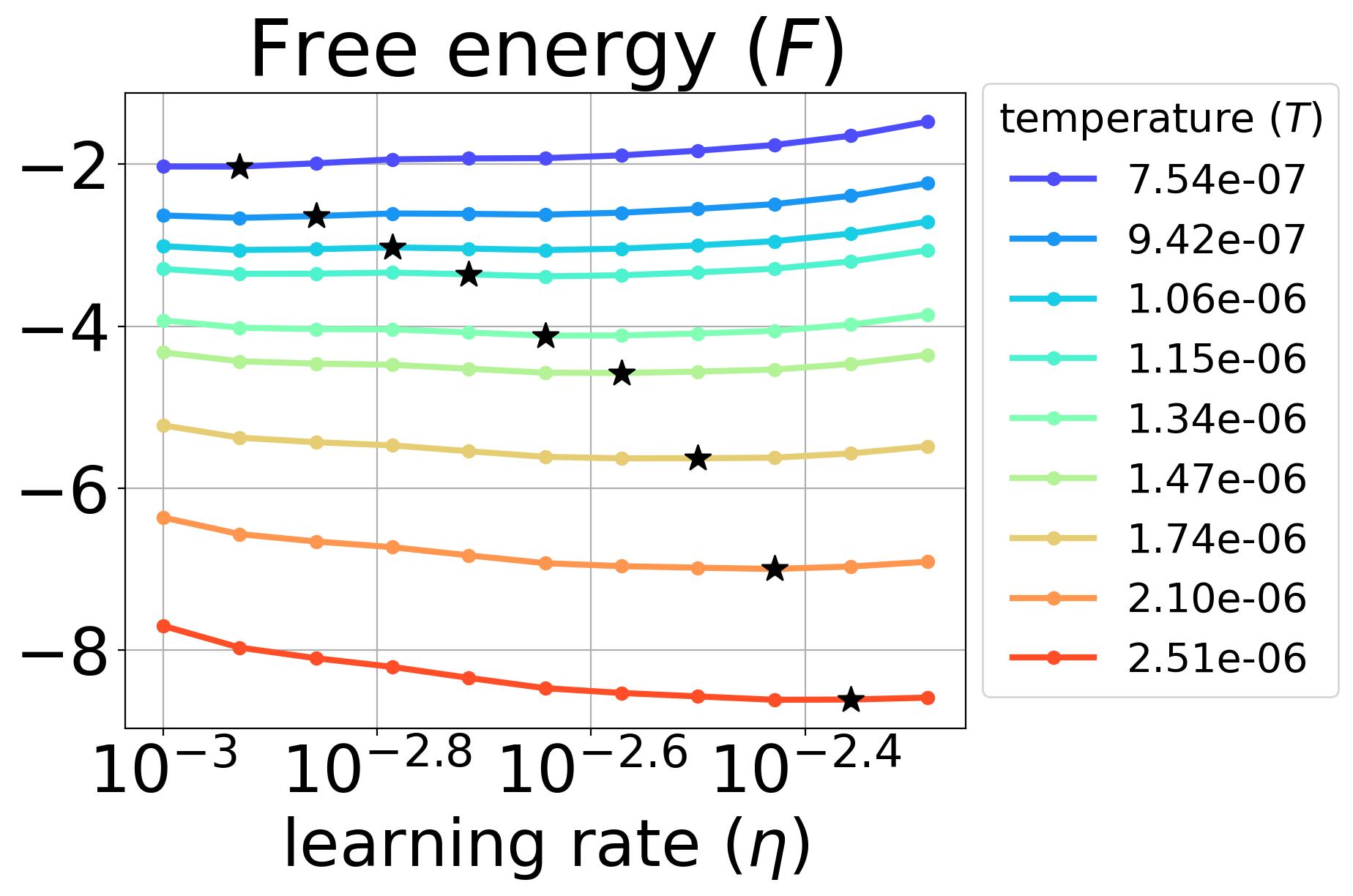} \\
    \end{tabular}
    \caption{Loss and entropy (left), estimated temperature (center) and free energy for different temperatures (right) vs. LR for UP (top row) and OP (bottom row) ConvNet on CIFAR-10. Orange and blue dashed lines denote the entropy and loss estimates of a uniform distribution on the sphere surface. Black dashed lines show power-law approximation to the temperature. Black stars indicate minima of free energy, which are achieved at LRs, corresponding to temperature values. Standard deviation for loss and entropy is computed over several iterations of the same training run, while translucent filling in the temperature plot indicates confidence intervals.}
    \label{fig:temperature_up_op}
\end{figure}

\subsection{Underparameterized setup}

We begin by discussing the UP setting.
As shown in the left part of Figure~\ref{fig:loss_entropy_iters}, both the training loss and entropy stabilize by the end of training, except for the smallest LRs, which require more iterations to reach the stationary state.
The top left subplot of Figure~\ref{fig:temperature_up_op} presents the stationary values of these two metrics across different LRs.
Both training loss and entropy increase monotonically with LR, eventually approaching the values expected under a uniform distribution over the sphere.
This behavior corresponds to regime 3 described in~\cite{kodryan2022training}, which is to some extent similar to a random walk on the unit sphere.
For smaller LRs, the $U$ and $S$ curves behave such that their combination—the free energy $F = U - T S$—is convex, with its minimum shifting depending on the value of $T$ (Figure~\ref{fig:temperature_up_op}, top right).
To construct the temperature function, we exclude certain LRs: (1) very small LRs that have not yet converged, and (2) large LRs corresponding to regime 3, where entropy has saturated and further increases in LR no longer affect the metrics.
We also smooth the $U$ and $S$ curves to reduce estimation noise (details provided in Appendix~\ref{app:exp_setup}).
The resulting temperature curve (Figure~\ref{fig:temperature_up_op}, top center) is well-defined and increases monotonically with LR, supporting the free energy hypothesis.
When the suitable temperature is substituted into the free energy formula, its minimum occurs at the corresponding LR (Figure~\ref{fig:temperature_up_op}, top right).

\subsection{Overparameterized setup}

\begin{wrapfigure}{r}{0.3\textwidth}
\vspace{-0.5cm}
  \begin{center}
    \includegraphics[width=0.3\textwidth]{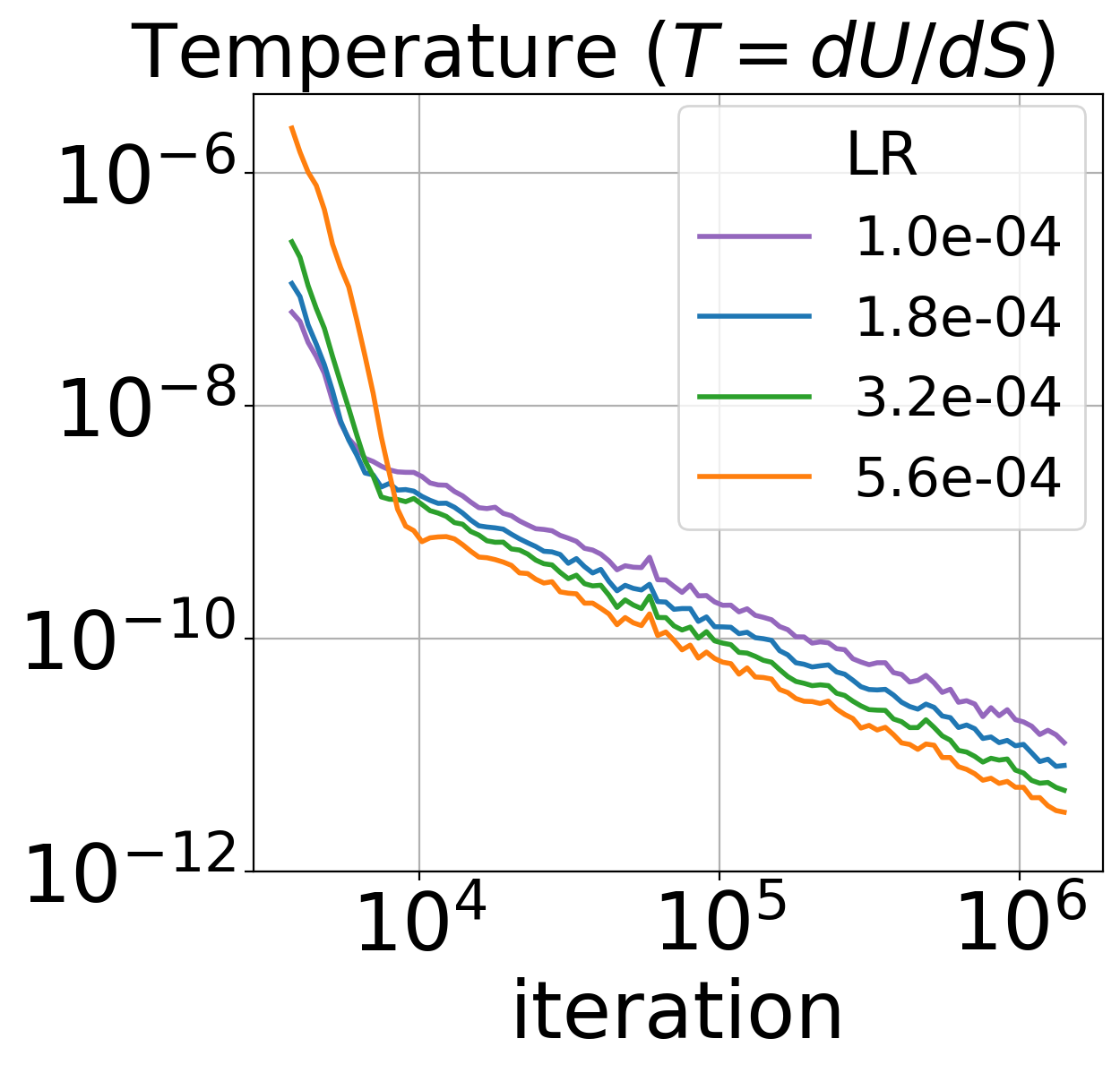}
  \end{center}
  \caption{Temperature decay for small LRs of the OP ConvNet on CIFAR-10.}
  \label{fig:temperature_reg1}
\end{wrapfigure}
We now turn to the analysis of the OP setting. 
For moderate LRs (from $10^{-3}$ to $10^{-2}$), the behavior closely resembles that observed in the UP setup: both training loss and entropy stabilize over iterations (Figure~\ref{fig:loss_entropy_iters}, right), and both metrics increase with LR (Figure~\ref{fig:temperature_up_op}, bottom left).
Since these runs reach a stationary state, we apply the temperature estimation protocol from the previous section (excluding several high LR values).
The resulting temperature function (Figure~\ref{fig:temperature_up_op}, bottom center) closely resembles that of the UP setting.
Even the exponents in the power-law fits are similar, though the proportionality constant is significantly smaller due to much higher entropy in the OP setting.
The corresponding free energy curves (Figure~\ref{fig:temperature_up_op}, bottom right) are also nearly convex, with their minima shifting across different LRs as temperature varies.

However, the dynamics change significantly at low LRs.
Unlike in the UP model, the OP setup converges to an optimum, marked by a consistent decrease in both loss and entropy.
These metrics do not stabilize within a reasonable number of iterations, making the established temperature estimation algorithm inapplicable.
Instead, we again adopt the thermodynamic definition of temperature, $T=dU/dS$, and compute the discrete derivatives of energy and entropy with respect to training time (i.e., iteration steps) to track how temperature evolves during training.
The results are shown in Figure~\ref{fig:temperature_reg1}.
We observe that, for all low LR values, temperature steadily decays towards zero.
In other words, the same reduction in entropy yields a progressively smaller decrease in loss as training progresses.
From a thermodynamic viewpoint, vanishing temperature implies that the free energy effectively collapses to the training loss, and noise no longer prevents the optimizer from converging to a minimum.
The stationary state for such dynamics is the delta-distribution at the minimum point, which suggests that temperature for small LRs experiences a sharp phase transition to zero values\footnote{
Curiously enough, this effect is similar to the superconductivity phenomenon, when the resistance of a material drops abruptly to zero for small absolute temperatures (see, for instance, Figure 4 in~\cite{superconductivity})
}.
In the next section, we investigate which aspect of the OP setup drives this distinctive behavior.

\section{Explaining difference between UP and OP setups}

\subsection{Behavior of stochastic gradients near optimum}

In this section, we investigate why OP setting helps SGD to converge to an optimum.
We begin by defining OP strictly as a setup in which all stochastic gradients vanish at the global minimum of the loss function.
In other words, the optimum over the entire dataset also minimizes the loss for each individual data point.
For a differentiable, non-negative loss function, achieving zero loss at the global optimum is sufficient to ensure this condition, since a zero mean implies that each individual loss—and thus each corresponding gradient—is also zero.
This criterion matches a commonly used definition of OP as a regime, where a model can interpolate training data~\cite{interp_sgd}.
Conversely, if the stochastic gradients do not vanish at the global optimum, the setup is classified as UP.
Importantly, in both cases, the full gradient $\overline{g}$ (i.~e., the average of all stochastic gradients $g_i$) is zero at the optimum, since this is a necessary condition for optimality.
To quantify the behavior of stochastic gradients near the optimum, we use the Signal-to-Noise Ratio (SNR) metric, defined as:
\[
\text{SNR} = \frac{\|\overline{g}\|}{\sqrt{\mathbb{E} \|g_i - \overline{g}\|^2}},
\]
Intuitively, SNR measures the norm of the mean gradient relative to the variance of stochastic gradients. A similar metric has been previously studied in connection to generalization~\cite{Liu2020Understanding,SunSYLMZ023,jiang2023acceleratinglargebatchtraining}.

We summarize the contrasting behavior of gradients in OP and UP settings in Table~\ref{tab:up_vs_op}.
In the UP setting, the stochastic gradients remain non-zero at the optimum, while the full gradient vanishes. As a result, the SNR at the optimum is zero.
In contrast, in the OP setting, both the stochastic and full gradients are zero at the optimum, making the SNR formally undefined.
To better understand the behavior of SNR, we consider its evolution along trajectories approaching the optimum, i.~e., under full gradient flow.
As the flow converges to a stationary point, the loss and gradients should converge to their limiting values.
This implies that in the UP setting, the SNR decreases to zero.
However, in the OP setting, it can be shown that the SNR instead stabilizes at a non-zero constant (see Appendix~\ref{app:snr_proof} for details).
This fundamental difference highlights the distinct behavior of SNR in UP vs. OP settings near the optimum.
Nevertheless, the picture changes when we take into account the finite LR of SGD.
In the UP setting, the large stochastic gradients prevent the optimizer from reaching the exact minimum, causing it to fluctuate around a typical distance from the optimum.
This results in stable gradient norms (both full and stochastic) and, consequently, a constant SNR.
Meanwhile, in the OP setting, both the full and stochastic gradients decay to zero at the same rate along the SGD trajectory, again leading to a constant SNR.
Therefore, although SNR stabilizes in both settings when using SGD with a small finite LR, it does so for \emph{entirely different reasons}.

\begin{table}[t]
    \newcommand{\specialcell}[2][c]{%
    \begin{tabular}[#1]{@{}c@{}}#2\end{tabular}}
    \renewcommand{\arraystretch}{1.1}
    \centering

    \caption{Comparison of underparameterized (UP) and overparameterized (OP) setups: behavior of loss ($L$), full gradient ($\overline{g}$), stochastic gradients ($g_i$) and SNR near optimum and along trajectories of full gradient flow and SGD. The character $\to$ denotes convergence to some value.}
    \label{tab:up_vs_op}

    \begin{tabular}{cc|ccc}
        \toprule
        setup & variable & at optimum $w^*$ & \specialcell{trajectory of full grad. \\ flow  (infinitesimal LR)} & \specialcell{trajectory of SGD \\ (small finite LR)} \\
        \midrule
        \multirow{4}{*}{UP} & loss & $L(w^*)>0$ & $\to L(w^*)$ & $\text{const} > L(w^*)$\\
        & full grad. & $\|\overline{g}\| = 0$ & $\|\overline{g}\| \to 0$ & $\mathbb{E} \|\overline{g}\| = \text{const} > 0$ \\ 
        & stoch grad. & $\mathbb{E} \|g_i\| > 0$ & $\mathbb{E} \|g_i\| > 0$ & $\mathbb{E} \|g_i\| = \text{const} > 0$ \\ 
        & SNR & $=0$ & $\to 0$ & $\text{const} > 0$ \\ 
        \midrule
        \multirow{4}{*}{OP} & loss & $L(w^*)\ge 0$ & $\to L(w^*)$ & $\to L(w^*)$ \\
        & full grad. & $\|\overline{g}\| = 0$ & $\|\overline{g}\| \to 0$ & $\|\overline{g}\| \to 0$ \\ 
        & stoch grad. & $\mathbb{E} \|g_i\| = 0$ & $\mathbb{E} \|g_i\| \to 0$ & $\mathbb{E} \|g_i\| \to 0$ \\ 
        & SNR & $=0/0$ & $\text{const}>0$ & $\text{const}>0$ \\ 
        \bottomrule
    \end{tabular}
\end{table}

\subsection{Toy example on 3D sphere}
\label{sec:toy}

\begin{wrapfigure}{r}{0.35\textwidth}
    \vspace{-0.5cm}
    \addtolength{\tabcolsep}{-0.4em}
    \begin{center}
    \begin{tabular}{cc}
        %{\small $\text{LR} = 5.5\cdot10^{-2}$} & {\small $\text{LR}=10^{-3}$} \\
        (a) UP & (b) OP \\
        \includegraphics[width=0.16\textwidth]{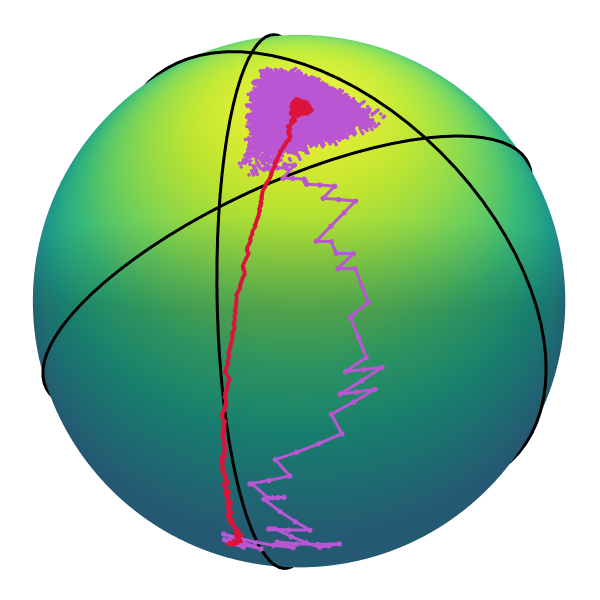} & 
        \includegraphics[width=0.16\textwidth]{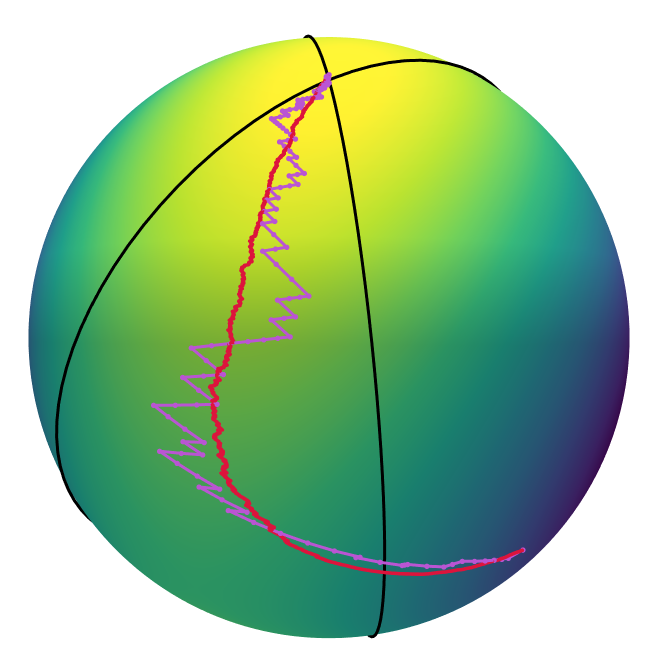} \\
        \multicolumn{2}{c}{\includegraphics[width=0.32\textwidth]{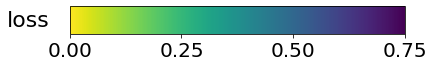}}
    \end{tabular}
    \end{center}
    \caption{Loss surfaces of UP and OP setups and training trajectories of SGD for 3D toy example. Red lines correspond to a low LR ($2.4 \cdot 10^{-3}$), purple --- to a higher LR ($6.9\cdot 10^{-2}$). Black lines indicate great circles.}
    \label{fig:sphere_trajectory}
\end{wrapfigure}
To validate our thesis about the behavior of stochastic gradients and SNR under small LRs, we design a toy example with the following properties:
(1) the loss is scale-invariant and defined on a unit 3D sphere, allowing us to simulate NN training dynamics on
a spherical manifold and enabling intuitive visualizations;
(2) the loss is defined as the mean of several component functions, mimicking a dataset of training examples;
(3) the model can switch between UP and OP settings by varying the number of “examples” in the dataset.
To construct this example, we leverage a geometric property of great circles on a 3D sphere: any two distinct great circles intersect at exactly two points, whereas three arbitrary great circles generally do not intersect at a common point.
We associate each great circle with a distinct object and define the loss $L_i$ for an object $i$ as:
\[
L_i(x, y, z) = \frac{(A_ix + B_iy + C_i z)^2}{x^2 + y^2 + z^2},
\]
where $(A_i, B_i, C_i)$ is the normal vector to the plane of a great circle and denominator insures the scale-invariance of the function.
The value in the nominator is the squared distance to this plane, meaning that zero loss for the object $i$ is achieved at the points of the corresponding great circle. 
In this setup, using two examples defines the OP case, since a point at the intersection of the two great circles achieves zero full loss.
In contrast, with three examples, no common intersection exists in general, resulting in a UP scenario where zero full loss is unattainable.
A visualization of these loss surfaces on the sphere is provided in Figure~\ref{fig:sphere_trajectory}.
We run projected SGD with batch size $1$ varying the LR value.
For a more detailed description of the experimental setup, see Appendix~\ref{app:sphere_result}.

\begin{figure}
  \addtolength{\tabcolsep}{-0.4em}
  \centering
  \centerline{
  \begin{tabular}{ccc}
      Underparameterized (UP) & Overparameterized (OP) \\
      \includegraphics[width=0.43\textwidth]{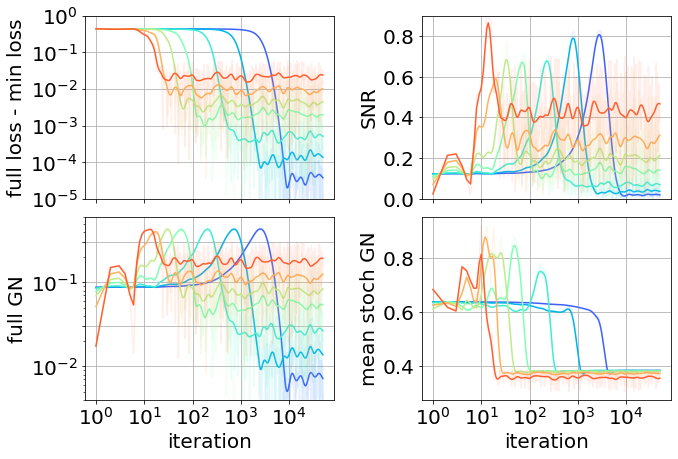} & \includegraphics[width=0.43\textwidth]{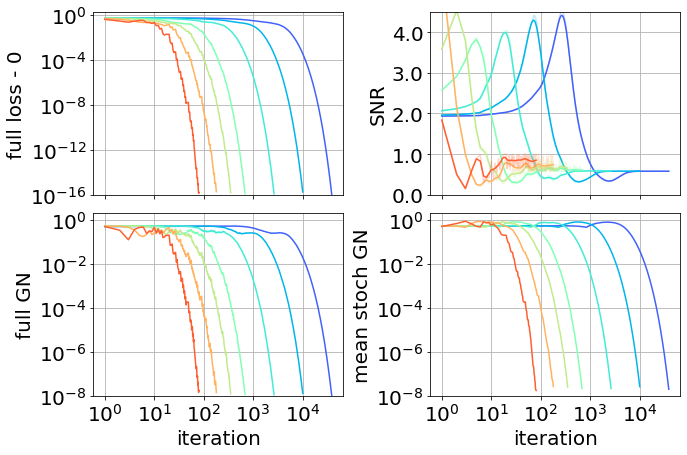} & 
      \raisebox{0.43em}{\includegraphics[width=0.13\textwidth]{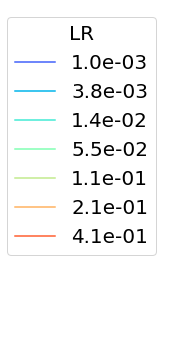}}
      %\multicolumn{2}{c}{\includegraphics[width=0.95\textwidth]{figures/sphere_legend/sphere_snr_legend.png}}
  \end{tabular}
  }
  \caption{Training metrics for various LRs in UP and OP setups on a 3D sphere. To plot the full loss, we subtract the loss value at the minimum (positive for UP and zero for OP).}
  \label{fig:sphere_snr}
\end{figure}

Figure~\ref{fig:sphere_snr} shows the evolution of the loss, SNR, and gradient norms.
In the UP setting, SGD stabilizes around the minimum (Figure~\ref{fig:sphere_trajectory}a), with the loss and gradient norms fluctuating around non-zero values.
As the LR increases, the loss and full gradient norm plateau at higher levels, reflecting larger deviations from the minimum.
Meanwhile, the stochastic gradient norm remains roughly constant, leading to a higher SNR.
As the LR decreases, the limiting SNR value approaches zero, gradually mirroring the behavior of full gradient flow.

In contrast, in the OP setting, training converges precisely to the minimum (Figure~\ref{fig:sphere_trajectory}b).
We terminate optimization when the loss reaches $10^{-16}$, to avoid numerical instability due to machine precision.
Both full and stochastic gradients vanish, while the SNR stabilizes, though it initially increases.
Intuitively, as the optimizer approaches the minimum, the stochastic gradients align more closely, increasing the SNR.
In Appendix~\ref{app:snr_proof}, we provide a rigorous proof showing that (1) when approaching the optimum along a fixed meridian\footnote{If we treat the two minima of the loss as the poles, then the two great circles corresponding to the examples act as meridians—just like any other great circle passing through the poles (e.~g., the central meridian).}, the SNR value converges upward to a constant limit, (2) the minimum among these SNR limits is achieved on the central meridian and is strictly positive.
Since smaller LRs result in trajectories that stay closer to the central meridian, they correspondingly yield lower (but still non-zero) SNR values, as observed in our numerical simulation.

\subsection{SNR in neural networks}
\label{sec:snr_nn}

To verify the SNR intuition for NNs, we plot the same set of metrics for the training of ConvNet (Figure~\ref{fig:snr_convnet}).
As expected in the UP setting, all three metrics stabilize once optimization reaches a stationary state. Notably, the asymptotic value of the SNR increases with the LR.
In contrast, the SNR behavior in the OP setting diverges from the trend observed in the toy example: here, SNR steadily declines throughout training.
While both the full and stochastic gradient norms decrease over time, the stochastic gradients decay more slowly, causing the SNR to diminish.
To quantify this effect, we plot a phase diagram of the full and stochastic gradient norms and fit their relationship using a power law (Figure~\ref{fig:grad_phase}).
For the baseline experiment with $50$k training samples, we obtain an exponent of approximately 0.78, substantially less than 1, indicating that stochastic gradients decay more slowly than full gradients.
This result suggests that the setup is \emph{not sufficiently overparameterized}, as an exponent below 1 violates the theoretical definition of the OP regime (see Appendix~\ref{app:snr_proof}).
In other words, stochastic gradients should either stabilize like in the UP case or decay at the same rate as full gradients, resulting in an exponent approaching $1$.
Determining which of these two options holds for this model would require an impractically large number of training iterations, potentially even beyond the limits of numerical precision.
However, we can probe this behavior by varying the degree of overparameterization via the size of the training dataset and observing how the gradient decay rates respond.
Indeed, our experiments show that smaller datasets lead to higher exponents, eventually approaching 1.
This suggests that the gradient decay exponent could serve as a measure of overparameterization, which is an intriguing direction for future investigation.

\begin{figure}
\centering
\begin{minipage}[c]{0.68\textwidth}
    \addtolength{\tabcolsep}{-0.4em}
    \begin{tabular}{ccc}
        \multicolumn{3}{c}{Underparameterized (UP)} \\
        \includegraphics[width=0.31\textwidth]{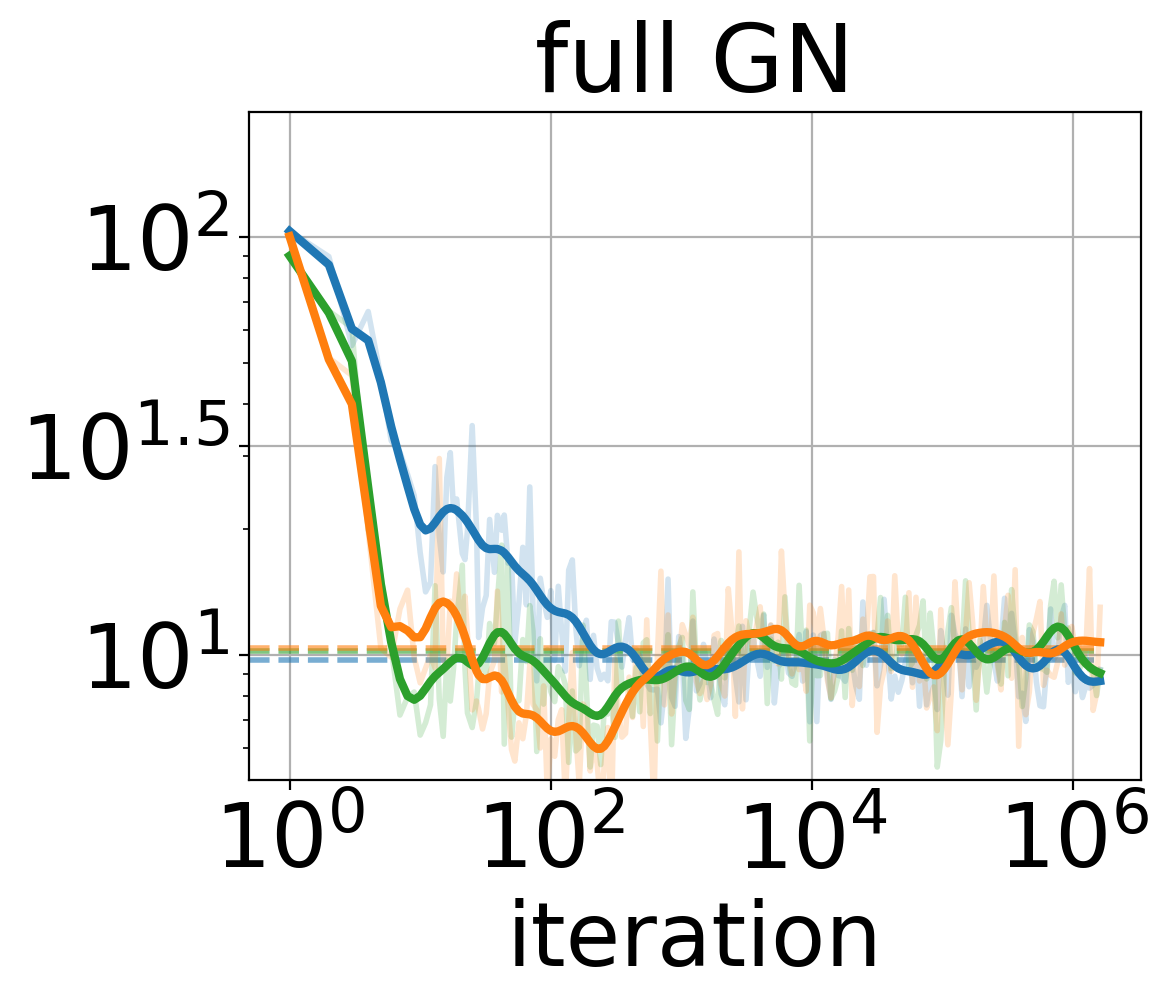} &
        \includegraphics[width=0.31\textwidth]{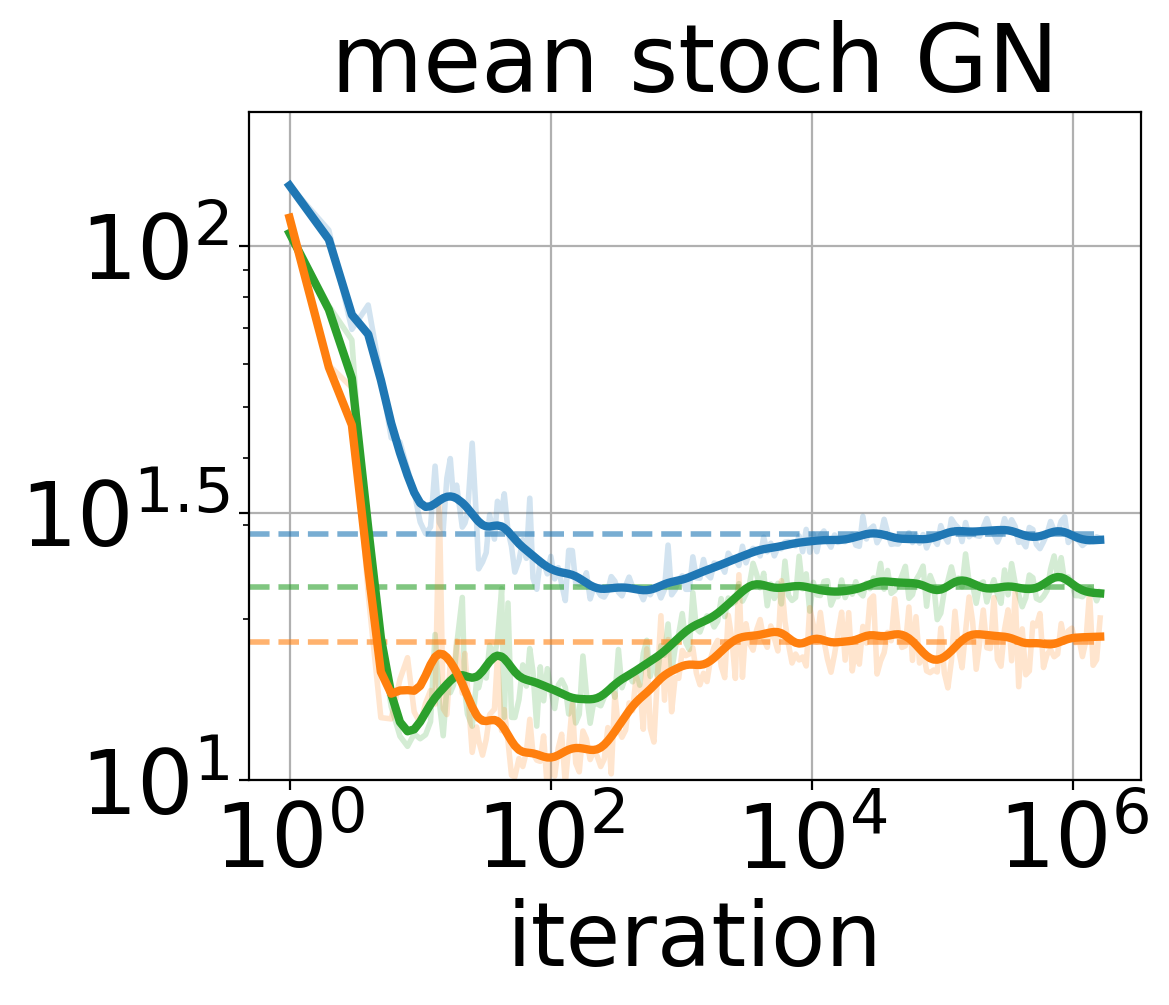} & 
        \includegraphics[width=0.325\textwidth]{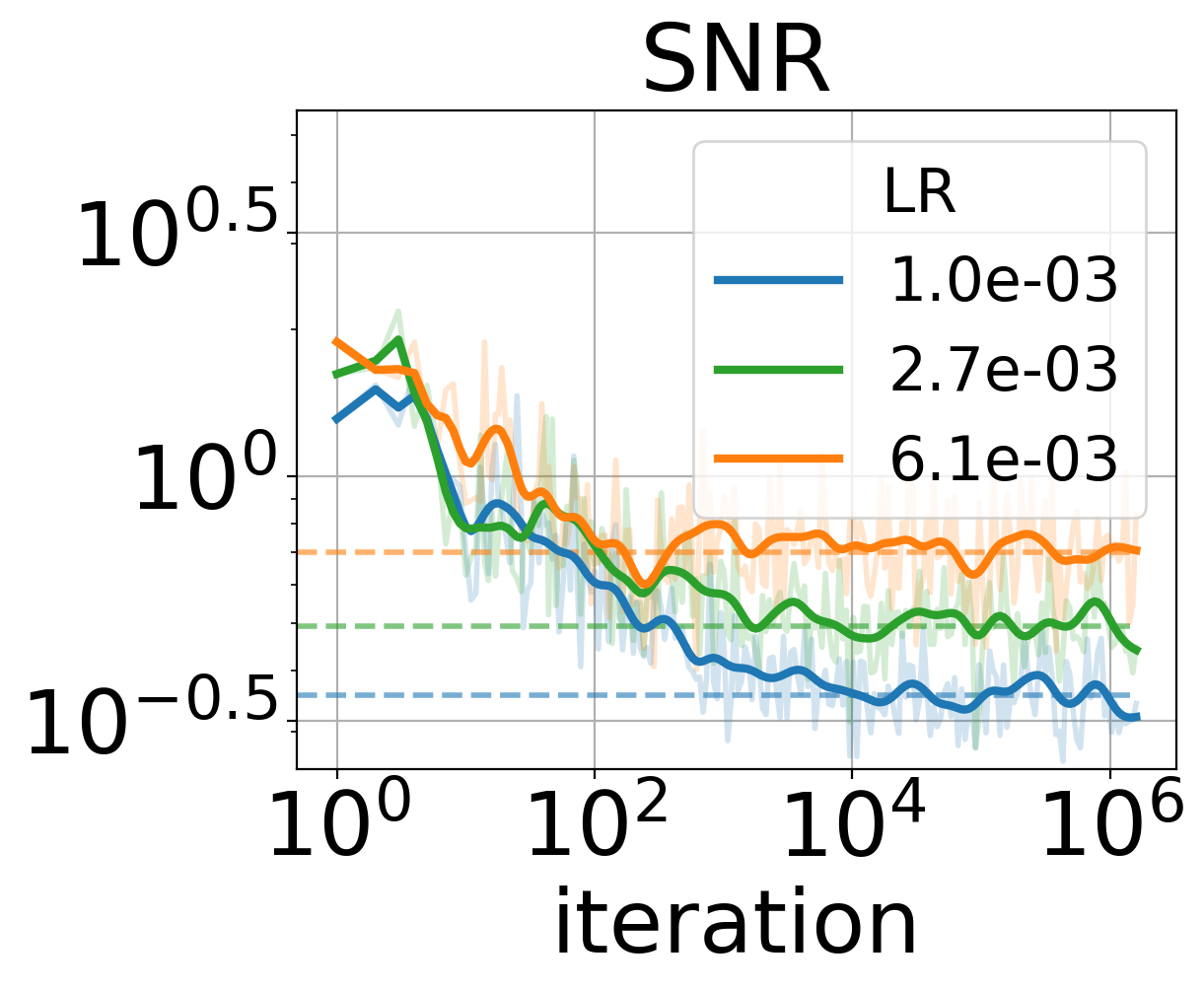} \\
        \multicolumn{3}{c}{Overparameterized (OP)} \\
        \includegraphics[width=0.31\textwidth]{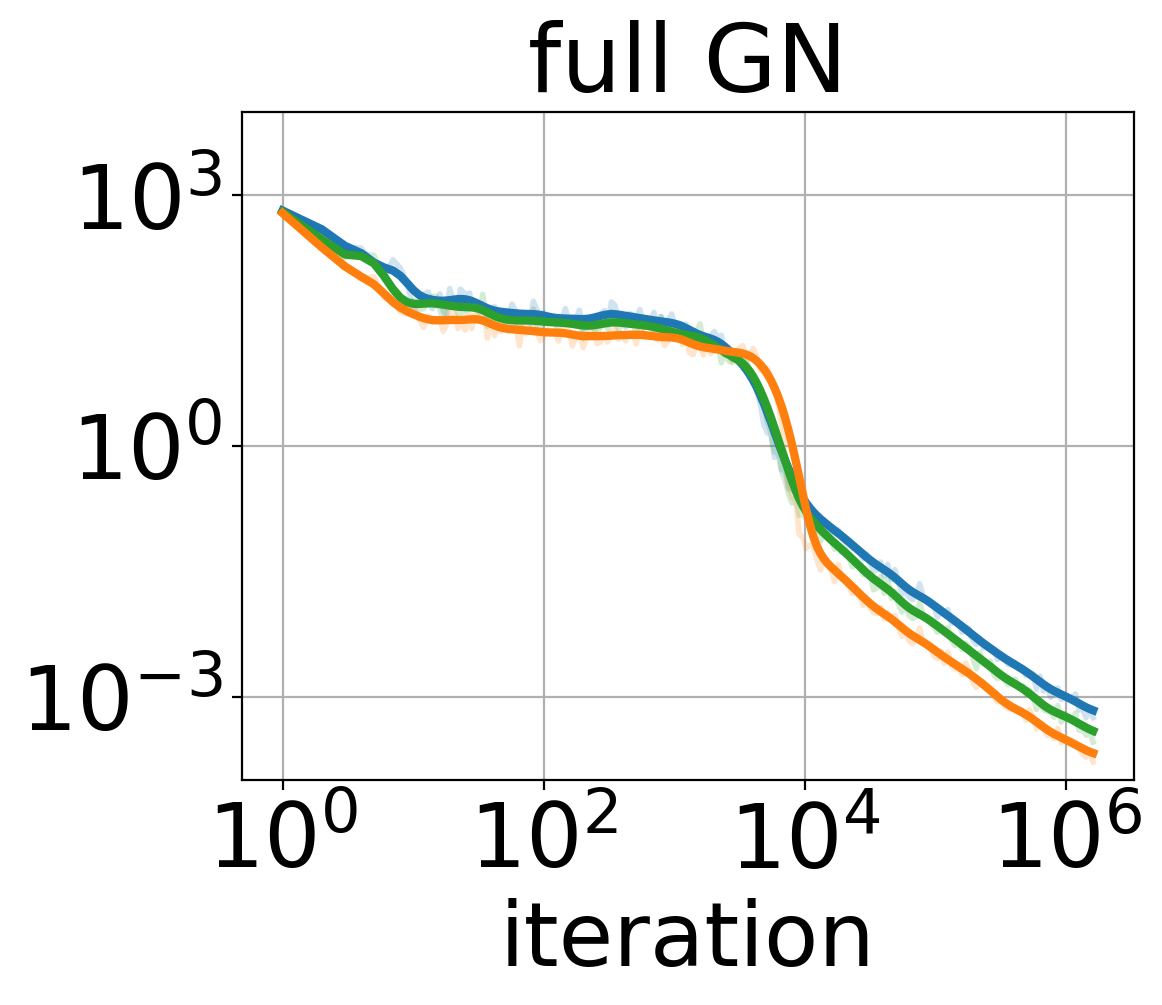} &
        \includegraphics[width=0.31\textwidth]{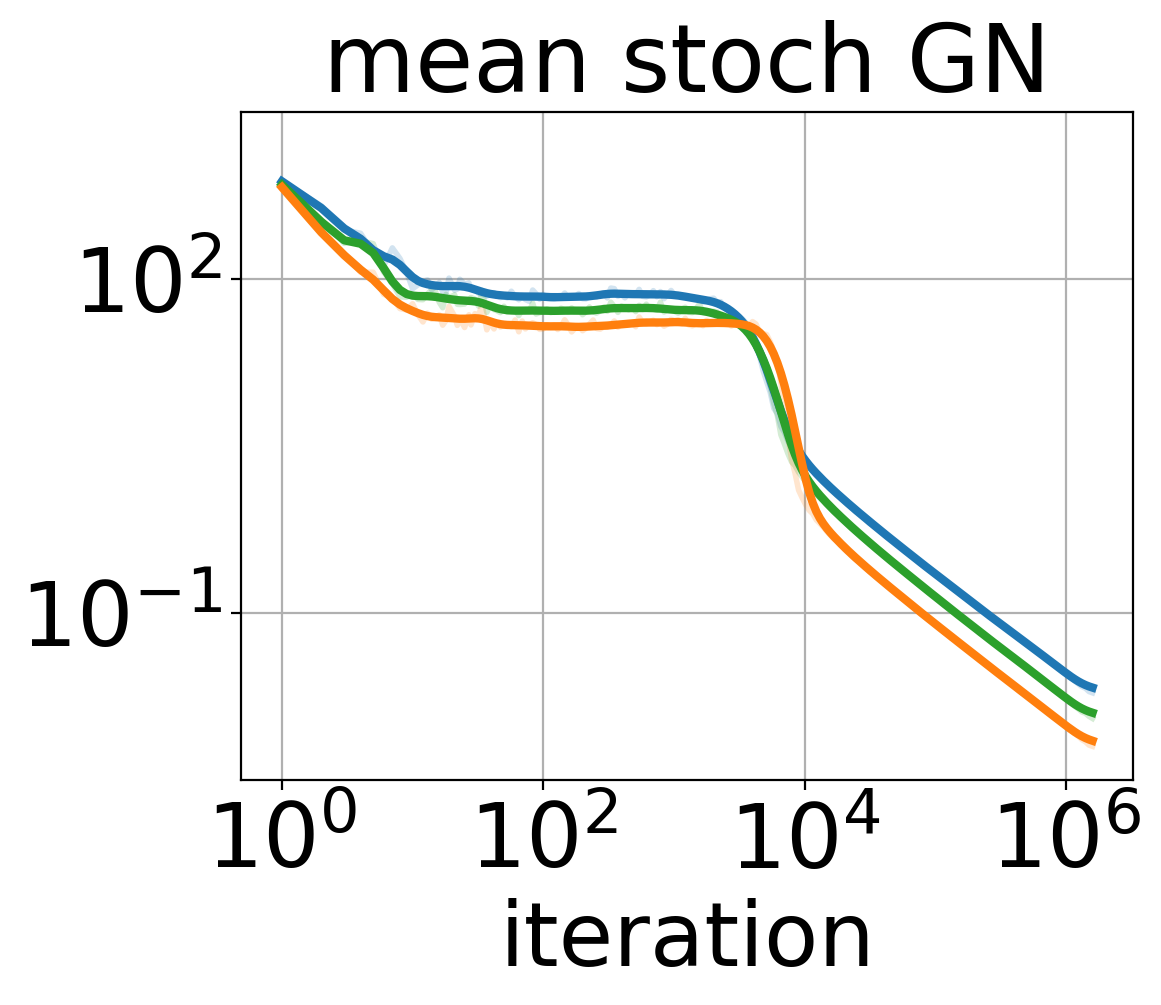} & 
        \includegraphics[width=0.325\textwidth]{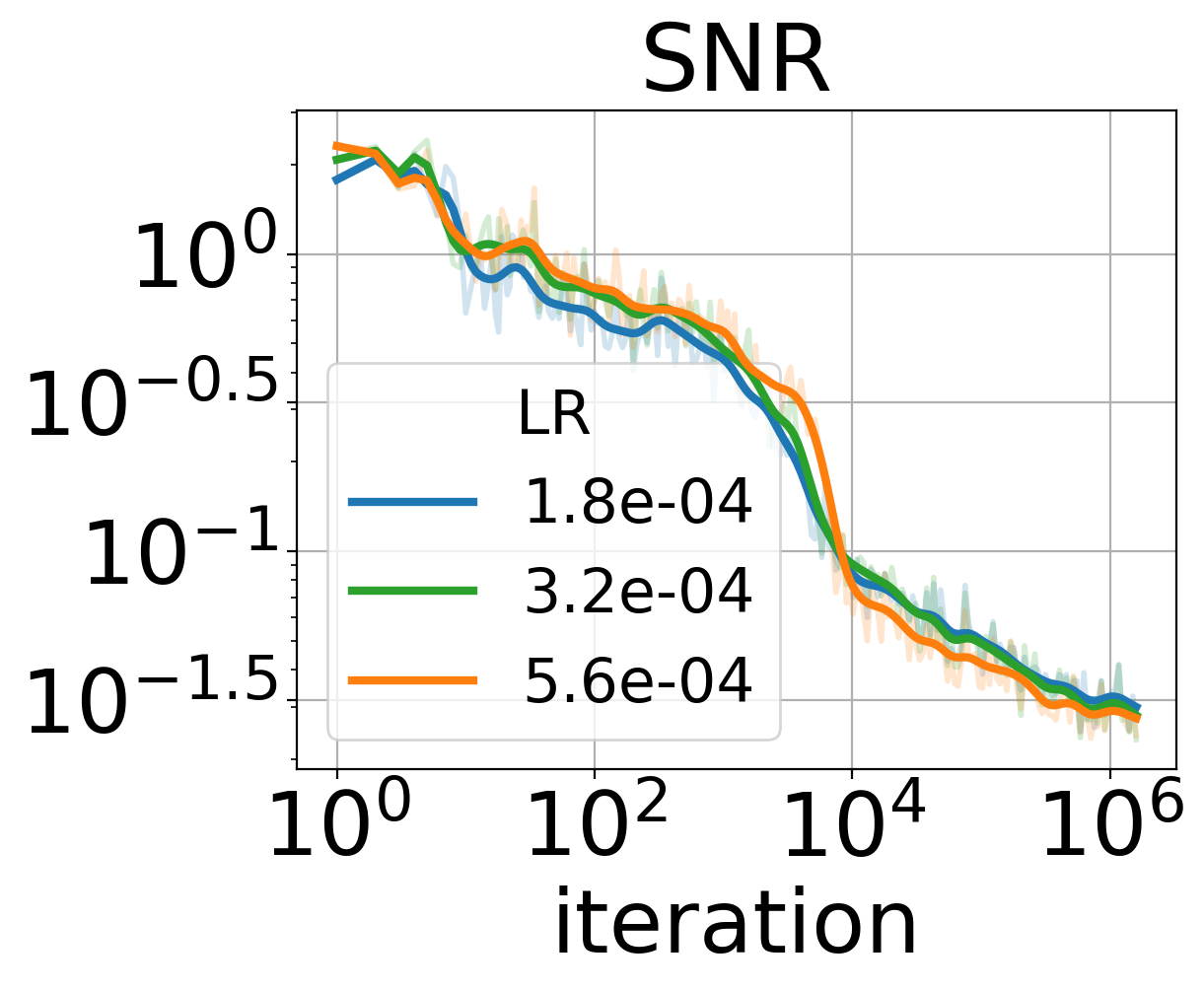} \\
    \end{tabular}
    \caption{Norm of full gradient (left), mean norm of stoch. gradient (center) and SNR (right) for small LRs of UP and OP ConvNet on CIFAR-10. Dashed lines indicate stationary values for the UP setup.}
    \label{fig:snr_convnet}
\end{minipage}\hfill
\begin{minipage}[c]{0.3\textwidth}
    \vspace{0.53cm}
    \begin{center}
        \includegraphics[width=\textwidth]{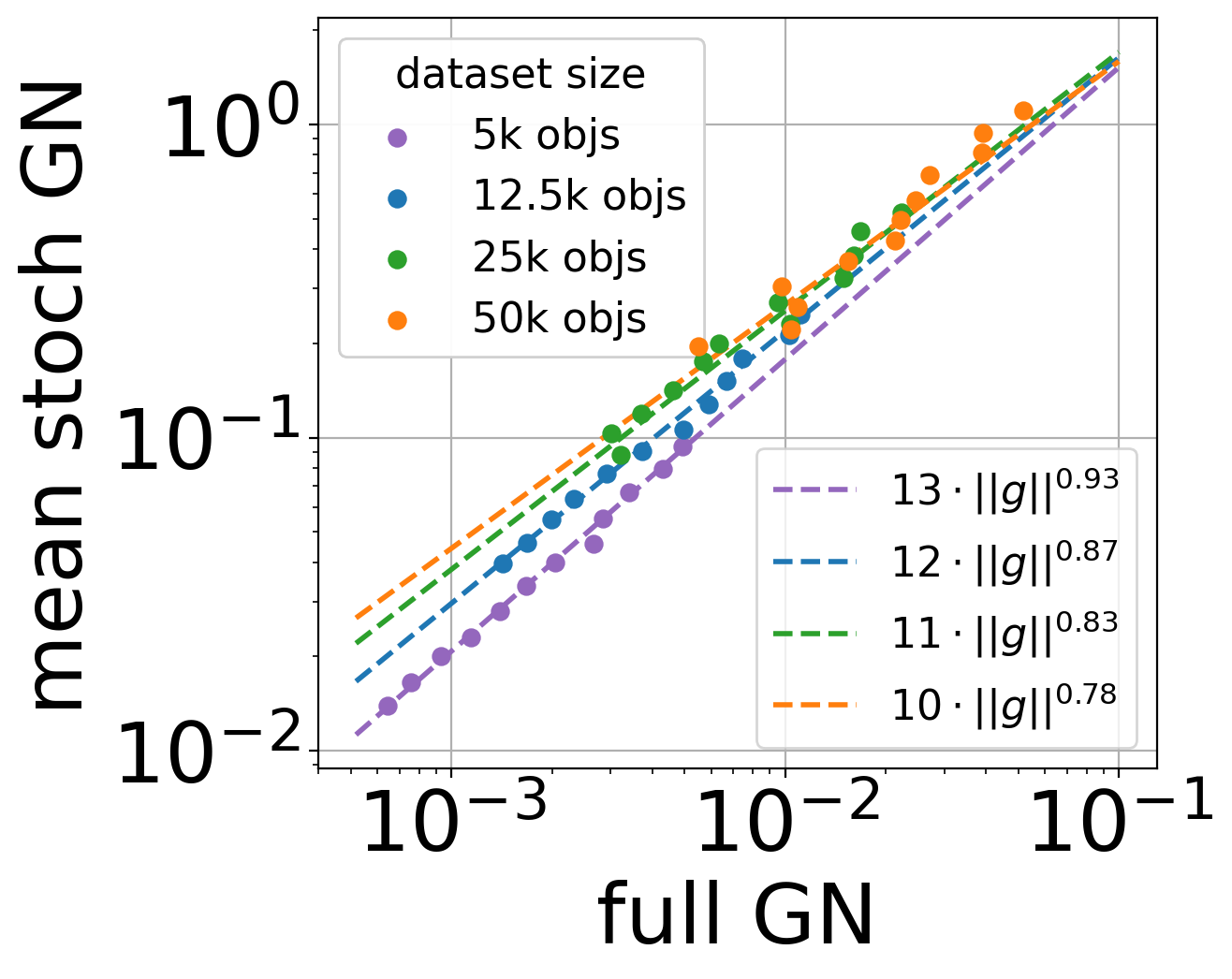}
    \end{center}
    \caption{Phase diagram of mean norm of stochastic grad. vs. full grad. norm for OP ConvNet on CIFAR-10 subsets of different size. For all four datasets we use LR equal to $2 \cdot 10^{-5}$. Dashed lines show power law approximation.}
    \label{fig:grad_phase}
\end{minipage}
\end{figure}

\section{Discussion and future work}

We have demonstrated that our thermodynamic framework effectively captures the stationary distributions of SGD under varying LRs.
This perspective not only explains gradual changes in the optimization system, such as increasing loss and entropy, but also provides a lens through which to interpret abrupt transitions, such as convergence to a loss minimum in the OP setting.
More broadly, NN training dynamics are influenced by many factors, including learning rate~\cite{cohen2021gradient, kodryan2022training}, batch size~\cite{interp_sgd, phase_batch_size}, number of training iterations~\cite{double_descent2, kaplan2020scaling, double_descent}, and the parameter-to-data ratio~\cite{double_descent2, kaplan2020scaling}.
Changes in any of these can lead to either smooth behavioral shifts (as in scaling laws~\cite{kaplan2020scaling}) or sharp phase transitions (such as double descent~\cite{double_descent2, double_descent}, grokking~\cite{liu2022towards, grokking}, or emergent abilities of large language models (LLMs)~\cite{wei2022emergent,srivastava2023beyond}).
Although our analysis focuses on a simple system, where the only varied factor is the LR, we believe this thermodynamic viewpoint can help unify and interpret a wide range of phenomena in modern deep learning.

One more interesting direction is to extend the proposed thermodynamic framework to analyze not only the stationary behavior at the end of training, but also the quasi-stationary dynamics observed during training. 
Neural networks often exhibit prolonged plateaus in their loss curves, followed by sudden drops associated with the acquisition of new capabilities~\cite{chen2024sudden,gopalani2024abrupt}. 
A thermodynamic analysis of these plateau phases could improve our understanding of where and why training stagnates, and help explain how such quasi-stationary regimes eventually transition into more effective solutions.

\section{Conclusion}
\label{sec:conclusion}

In this work, we introduced and validated a thermodynamic framework for understanding the stationary behavior of SGD.
We showed that SGD implicitly minimizes free energy, defined as a trade-off between training loss and weight entropy, with temperature controlled by the learning rate.
In the UP setup, temperature changes smoothly with LR, while in the OP setup, it rapidly drops to zero at low LRs.
This contrast stems from differences in stochastic gradient behavior near global minima, enabling OP models to converge directly.
Our framework provides a novel perspective on the optimization of neural networks and can be extended to analyze other phenomena in deep learning.

\paragraph{Limitations}

The main limitation of our work is its exclusive focus on LR, despite batch size being another important parameter that influences the magnitude of noise in SGD.
Second, our analysis is restricted to plain SGD, leaving out widely used alternatives like SGD with momentum or Adam~\cite{KingmaB14}.
Finally, we rely on a non-parametric entropy estimate, which might be biased given the high dimensionality of neural network weight spaces.

\section*{Acknowledgments}
We would like to thank Mikhail Burtsev for the valuable comments and discussions.
We also thank Alexander Shabalin for insightful conversations and personal support.
We are grateful to Dmitry Pobedinsky's popular-science video on various definitions of temperature in physics, which inspired part of our perspective.
Ekaterina Lobacheva was supported by IVADO and the Canada First Research Excellence Fund.
The empirical results were enabled by compute resources and technical support provided by Mila - Quebec AI Institute (mila.quebec) and the Constructor Research Platform~\cite{cub_research}.

\medskip

\bibliographystyle{plainnat}
\bibliography{ref}

%%%%%%%%%%%%%%%%%%%%%%%%%%%%%%%%%%%%%%%%%%%%%%%%%%%%%%%%%%%%

\newpage
\appendix

\section{Why Helmholtz free energy?}
\label{app:free_energy}

In this section, we further justify the use of Helmholtz free energy to describe the stationary behavior of SGD.
In thermodynamics, choosing the appropriate potential requires fixing certain \emph{natural variables}.
Commonly, the number of particles $N$ is held constant, assuming no exchange with the environment. Drawing an analogy, we treat the weight space dimensionality $D$ as an equivalent to $N$, since both thermodynamic and differential entropy scale linearly with these quantities. 
In our experiments, $D$ is tied to the network architecture and remains fixed.
As noted in the main text, we also fix the learning rate and batch size during training, which we interpret as fixing the temperature $T$.
This leaves us with two choices: fixing either volume $V$ or pressure $p$, which correspond to using Helmholtz ($F=U-TS$) or Gibbs ($G=U-TS+pV$) free energy, respectively.
We adopt the Helmholtz formulation for three main reasons:
\begin{enumerate}
    \item Defining pressure in the context of neural networks is unclear, while the parameter space, which remains fixed (e.~g., unit sphere in our case), can be interpreted as fixed volume.
    \item Calculating Helmholtz free energy does not require explicit definitions of pressure or volume.
    \item Free energy in the form $F=U-TS$ is sometimes used to describe systems of non-thermodynamic nature, such as the Ising model\footnote{\url{https://universalitylectures.wordpress.com/wp-content/uploads/2013/05/lecture_5_ising_model_and_phase_transitions_v1.pdf}}.
\end{enumerate}
An additional justification for using the Helmholtz free energy arises from the stationary distributions of SGD.
\citet{jastrzebski2017three} show that, under the assumption of isotropic gradient noise, SGD converges to a Gibbs-Boltzmann distribution:
\[
p(w) \propto \exp\left(-\frac{L(w)}{2T}\right), \quad T = \frac{\eta\sigma^2}{B},
\]
where $\eta$ is the learning rate, $B$ is the batch size, and $\sigma^2$ is the gradient variance. 
The Gibbs-Boltzmann distribution is known to minimize the Helmholtz free energy over all possible distributions\footnote{https://physics.stackexchange.com/questions/54692/why-is-the-free-energy-minimized-by-the-boltzmann-distribution}.
While the isotropic noise assumption does not hold in practical neural networks, this example still motivates the use of Helmholtz free energy in our analysis. 
Altogether, these considerations support Helmholtz free energy as a natural thermodynamic potential for modeling the stationary behavior of SGD in our framework.

\section{Experimental setup details}
\label{app:exp_setup}

\paragraph{Training details}
We perform experiments using two network architectures --- a simple 4-layer convolutional network (ConvNet) and ResNet-18 --- on the CIFAR-10 and CIFAR-100 datasets. For each architecture-dataset pair, we evaluate both underparameterized (UP) and overparameterized (OP) settings.
For ConvNets, we use width factors $k = 8$ (UP) and $k = 64$ (OP) on CIFAR-10, and $k = 8$ (UP) and $k = 96$ (OP) on CIFAR-100.
For ResNet-18, the width factors are $k = 4$ (UP) and $k = 32$ (OP) on CIFAR-10, and $k = 4$ (UP) and $k = 48$ (OP) on CIFAR-100.

All models are trained with projected SGD using a batch size of $128$ and independent (i.~e., non-epoch) batch sampling.
ConvNet models are trained for $1.6 \cdot 10^6$ iterations, while ResNet-18 models are trained for $10^6$ iterations on CIFAR-10 and $4.8 \cdot 10^5$ iterations on CIFAR-100.
Following Kodryan et al.~\cite{kodryan2022training}, we use scale-invariant versions of both models by fixing the last layer’s parameters at initialization and setting their norm to $10$.
We project the vector of all scale-invariant parameters onto the unit sphere after initialization and after each optimization step.

Overall, we use the following grid of LRs (there are $28$ different values in total):
\begin{itemize}
    \item $3$ logarithmically-spaced LRs from $10^{-5}$ to $10^{-4}$: $\{1.0, 2.2, 4.6\} \cdot 10^{-5}$
    \item $4$ logarithmically-spaced LRs from $10^{-4}$ to $10^{-3}$: $\{1.0, 1.8, 3.2, 5.6\}\cdot10^{-4}$
    \item $14$ logarithmically-spaced LRs from $10^{-3}$ to $10^{-2}$: $\{1.0, 1.2, 1.4, 1.6, 1.9, 2.3, 2.7, 3.2, 3.7,$ $4.4,5.2, 6.1, 7.2, 8.5\}\cdot10^{-3}$. A more dense grid for this segment is used to ensure a sufficient number of LRs for temperature estimation in OP setups.
    \item $7$ logarithmically-spaced LRs from $10^{-2}$ to $10^{0}$ (including the upper boundary): $\{1.0, 2.2, 4.6, 10, 22, 46, 100\}\cdot10^{-2}$
\end{itemize}

\paragraph{Temperature estimation}
For temperature estimation, we exclude several small LR values (the ones that did not stabilize in UP setups or the ones that converge to minima in OP setups), as well as some large LRs, which either enter regime 3 or show edge effects associated with the phase transition from regime 2 to regime 3 (see Section 5.3 in~\cite{kodryan2022training} and Appendix D in~\cite{Sadrtdinov2024largelr}).
This procedure results in the following LR segments for temperature estimation (both boundaries are included):
\begin{itemize}
    \item UP ConvNet on CIFAR-10 --- $[1.8 \cdot 10^{-4}, 1.0 \cdot 10^{-2}]$
    \item OP ConvNet on CIFAR-10 --- $[1.0 \cdot 10^{-3}, 5.2 \cdot 10^{-3}]$
    \item UP ConvNet on CIFAR-100 --- $[5.6 \cdot 10^{-4}, 2.2 \cdot 10^{-2}]$
    \item OP ConvNet on CIFAR-100 --- $[5.6 \cdot 10^{-4}, 6.1 \cdot 10^{-3}]$
    \item UP ResNet-18 on CIFAR-10 --- $[1.8 \cdot 10^{-4}, 1.0 \cdot 10^{-2}]$
    \item OP ResNet-18 on CIFAR-10 --- $[1.0 \cdot 10^{-3}, 7.2 \cdot 10^{-3}]$
    \item UP ResNet-18 on CIFAR-100 ---  $[5.6 \cdot 10^{-4}, 2.2 \cdot 10^{-2}]$
    \item OP ResNet-18 on CIFAR-100 --- $[1.0 \cdot 10^{-3}, 6.1 \cdot 10^{-3}]$
\end{itemize}

To smooth the $U(\eta)$ and $S(\eta)$ curves, we apply kernel smoothing using a triangular kernel in the logarithmic domain (here we set $h=0.3$):
\[
K(\eta_1, \eta_2) = \max\left(h - \left|\log \eta_1 - \log \eta_2\right|, 0\right)
\]
Boundary values of LRs are preserved to avoid artifacts from smoothing. We choose a triangular kernel due to the non-uniform LR grid, which is denser in regions where OP models fail to converge. The LR grid is shared across UP and OP setups, so using kernels with infinite support (e.g., Gaussian) would bias smoothed values of $L(\eta)$ and $S(\eta)$ for small LRs in the UP setting. This happens because the grid is more dence for larger LRs, leading to higher values of the metrics for small LRs after smoothing.

For estimating temperature confidence intervals, we use the following threshold values $\varepsilon$: 
UP ConvNet CIFAR-10 --- $10^{-2}$, 
OP ConvNet CIFAR-10 --- $4 \cdot 10^{-2}$, 
UP ConvNet CIFAR-100 --- $10^{-2}$,
OP ConvNet CIFAR-100 --- $4 \cdot 10^{-2}$,
UP ResNet-18 CIFAR-10 --- $10^{-2}$, 
OP ResNet-18 CIFAR-10 --- $10^{-2}$, 
UP ResNet-18 CIFAR-100 --- $10^{-2}$, 
and OP ResNet-18 CIFAR-100 --- $3 \cdot 10^{-2}$.
Larger values of $\varepsilon$ for several experiments are caused by more noisy behavior of training loss and entropy.
In the central column of Figures~\ref{fig:temperature_up_op}, \ref{fig:app_temperature_up_op_resnet}, and \ref{fig:app_temperature_up_op_convnet}, we omit both low and high borderline LRs, since only bounds (upper and lower, respectively) and not precise estimates on temperature can be obtained for these values.

For low LRs in the OP setups, we estimate temperature during training using a finite difference approximation:
\[
T_i = \frac{\Delta U_i}{\Delta S_i} = \frac{U_{i+dt} - U_{i-dt}}{S_{i+dt} - S_{i-dt}},
\]
where $dt = 2$ for all OP setups. 
Here, the subscript $i$ refers not to the raw iteration number but to the logging step index, which is uniformly spaced in the logarithmic domain.

\paragraph{Metrics smoothing}
We use kernel smoothing for metrics in Figures~\ref{fig:sphere_snr},\ref{fig:snr_convnet} with a Gaussian kernel in the log-time domain:
\[
K(t_1, t_2) = \exp\left(-\frac{(\log t_1 - \log t_2)^2}{2\sigma^2}\right),
\]
We use $\sigma=0.1$ for the toy setup on 3D sphere (Figure~\ref{fig:sphere_snr}) and $\sigma=0.2$ for the rest of experiments (Figures~\ref{fig:snr_convnet},\ref{fig:app_temperature_reg1},\ref{fig:app_snr_resnet},\ref{fig:app_snr_convnet}).

\paragraph{Compute resources}
For all experiments, we use NVIDIA Tesla V100 GPU's. The amount of compute spent on training of a single model is approximately 4 GPU hours for the UP setup and 20 GPU hours for the OP setup.
In total, the amount of compute, spent on all experiments is approximately 2500 GPU hours.
Training of the toy example on 3D sphere takes $<1$ minute on a single CPU, so we omit these negligible computational costs.

\section{Additional results on free energy}
\label{app:exp_results}
In this section, we present additional experimental results on the validation of the \emph{free energy hypothesis}.
Figure~\ref{fig:app_loss_entropy_iters} shows the evolution of training loss and entropy over iterations for different models and datasets.
As in Figure~\ref{fig:loss_entropy_iters} from the main text, varying the LR causes these metrics to stabilize at different levels, except for small LRs in the OP setting, where both loss and entropy steadily decrease, indicating convergence to a minimum.
Figures~\ref{fig:app_temperature_up_op_resnet} and \ref{fig:app_temperature_up_op_convnet} present the final values of loss and entropy (left), estimated temperature (center), and free energy (right).
Empirically estimated temperature function $T(\eta)$ is well-defined and monotonically increases for all considered LR values, complementing the results from Figure~\ref{fig:temperature_up_op} in the main text.
Figure~\ref{fig:app_temperature_reg1} shows the evolution of temperature over training iterations for small LRs in the OP setting, complementing Figure~\ref{fig:temperature_reg1} from the main text. The observed decay in temperature is consistent across different dataset–architecture pairs, suggesting that it is fundamentally tied to convergence behavior.

\begin{figure}[h!]
    \centering
    \addtolength{\tabcolsep}{-0.4em}

    \begin{tabular}{cccc}
        \multicolumn{2}{c}{UP ResNet-18 on CIFAR-10} & \multicolumn{2}{c}{OP ResNet-18 on CIFAR-10} \\
        \includegraphics[width=0.235\textwidth]{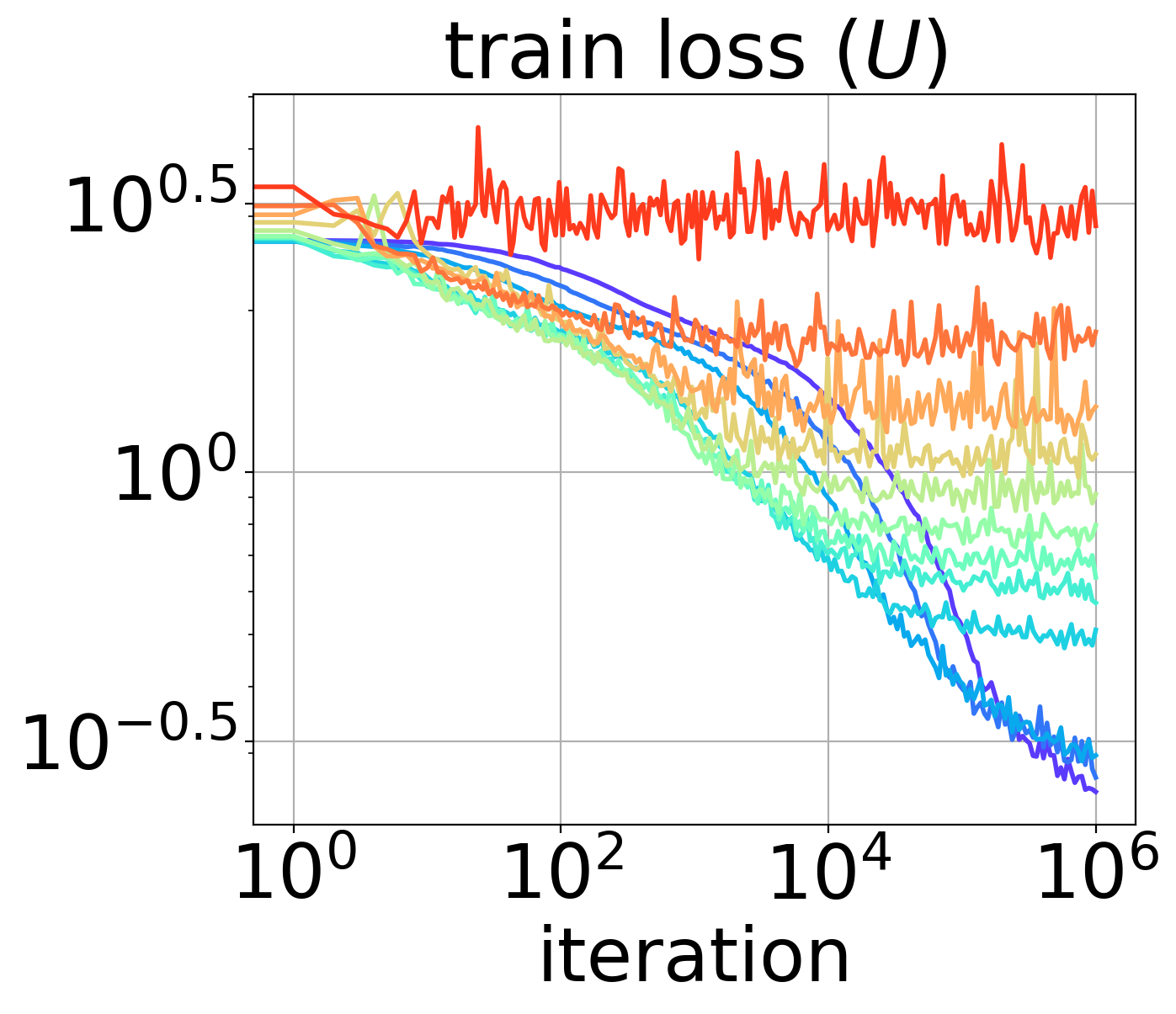} & %width=0.24
        \includegraphics[width=0.203\textwidth]{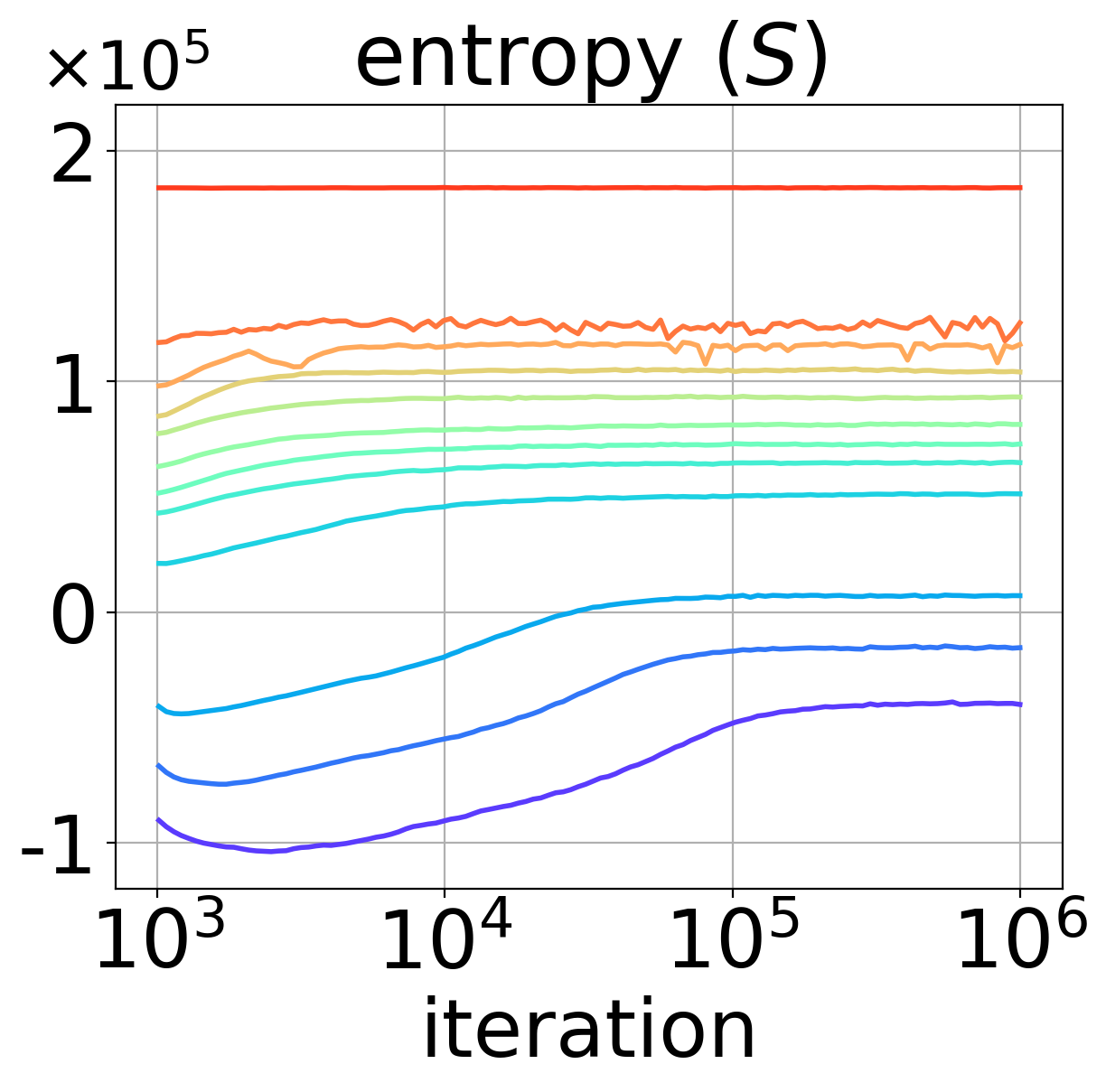} & %width=0.235
        \includegraphics[width=0.225\textwidth]{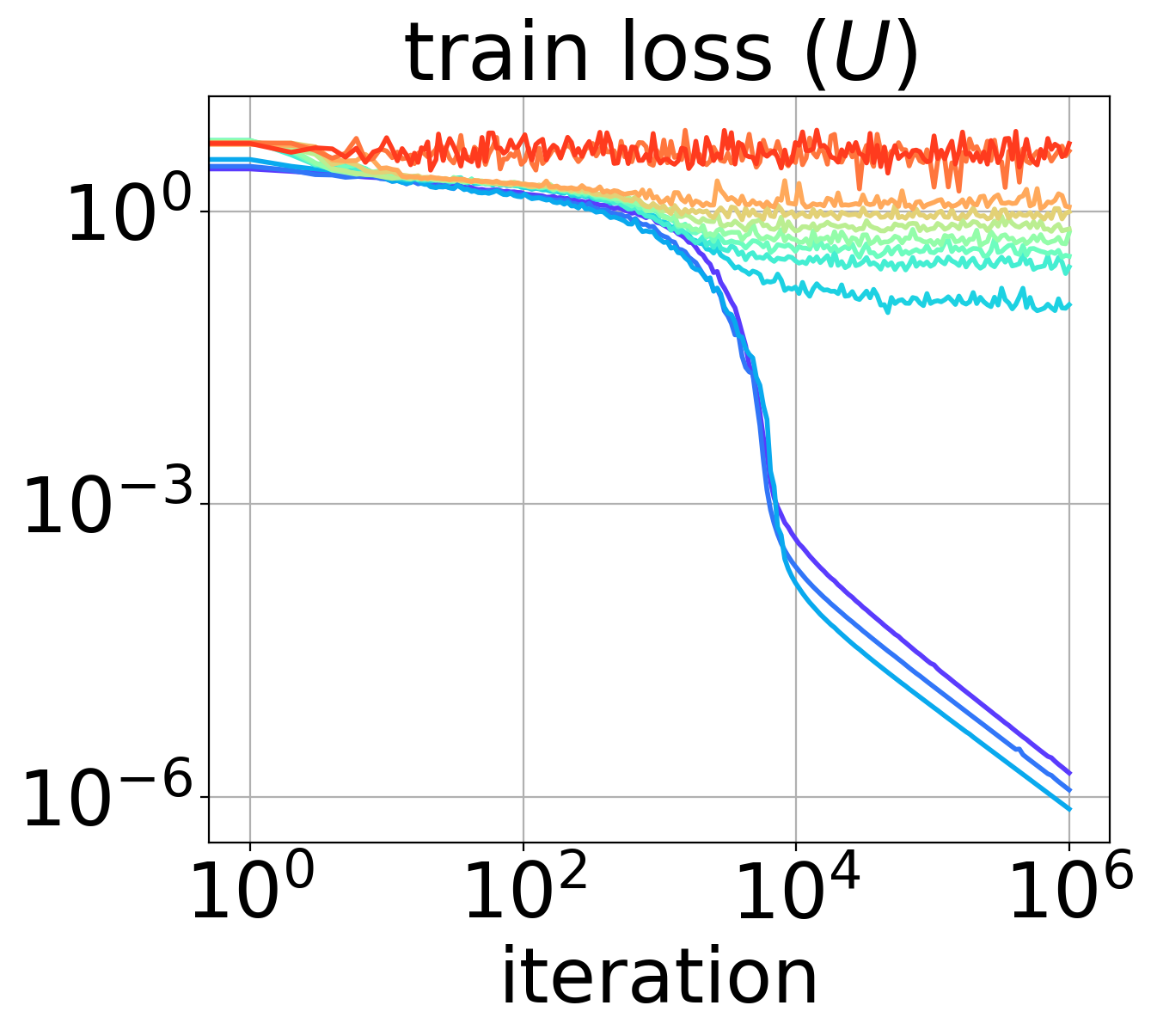} & %width=0.251
        \includegraphics[width=0.208\textwidth]{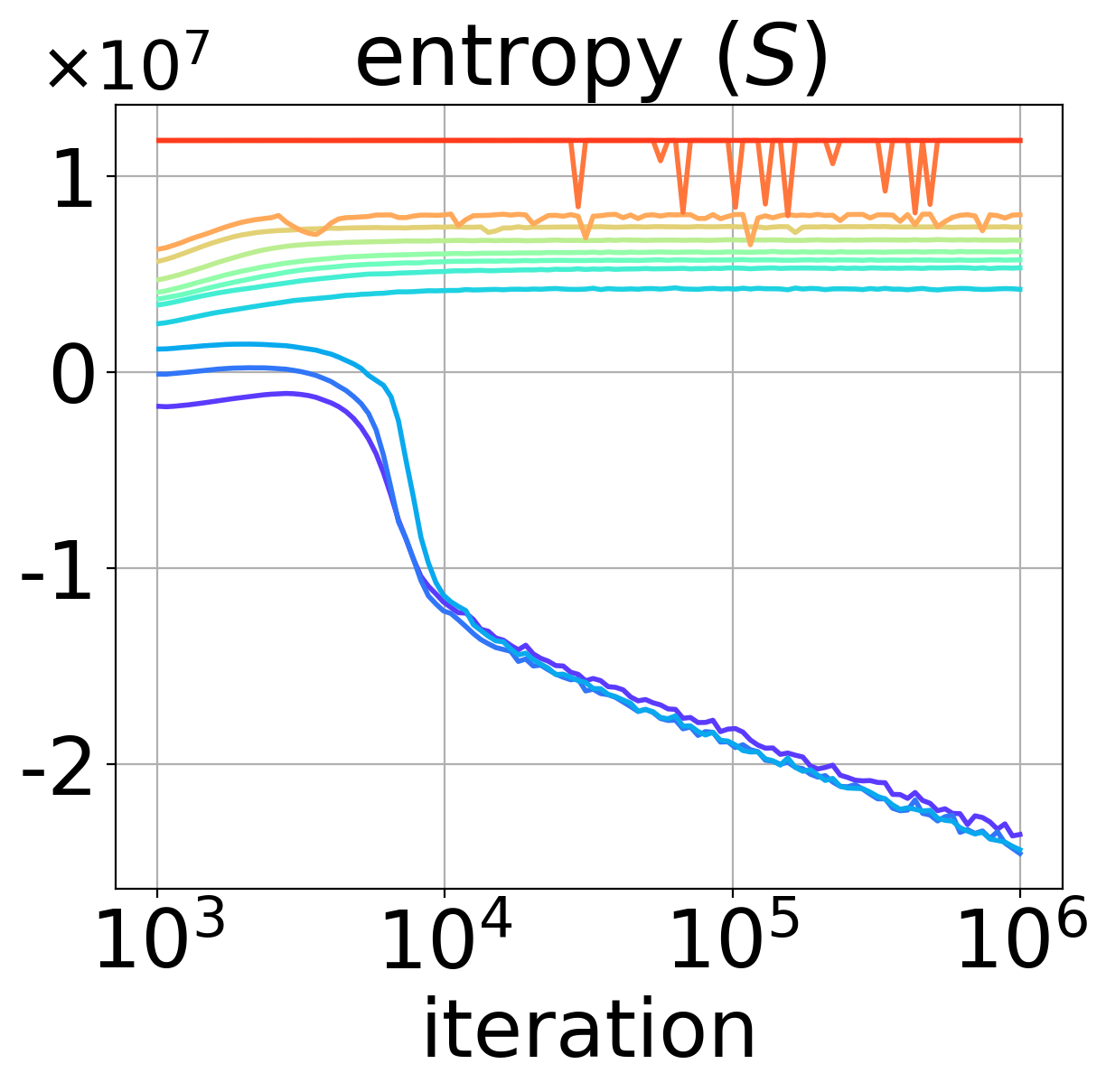} \\ %width=0.242
        \multicolumn{4}{c}{
        \includegraphics[width=0.76\textwidth]{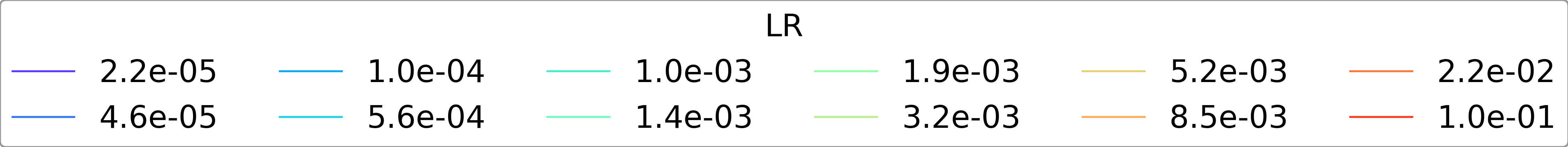}
        } \\
        \multicolumn{2}{c}{UP ResNet-18 on CIFAR-100} & \multicolumn{2}{c}{OP ResNet-18 on CIFAR-100} \\
        \includegraphics[width=0.235\textwidth]{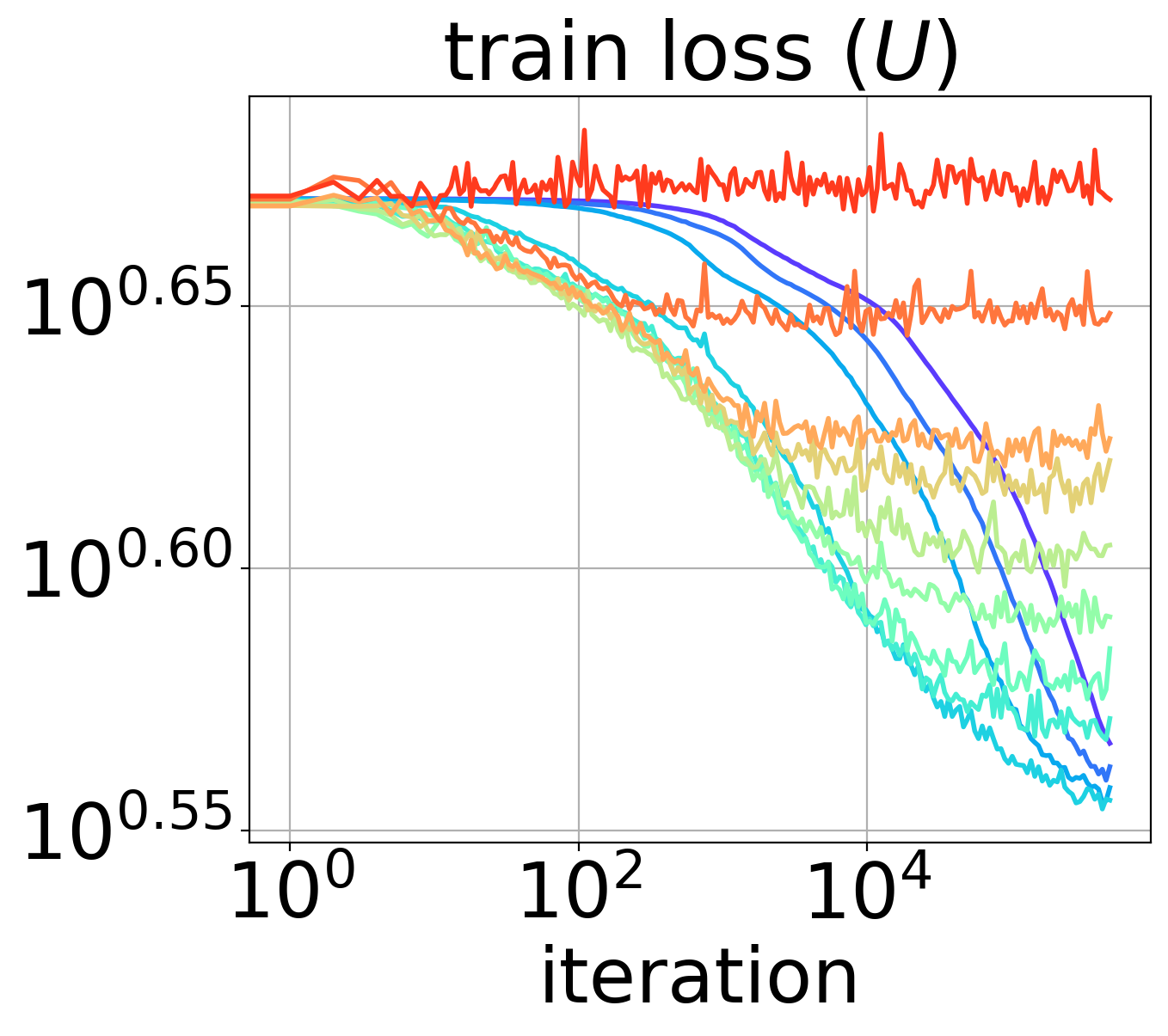} & %width=0.24
        \includegraphics[width=0.203\textwidth]{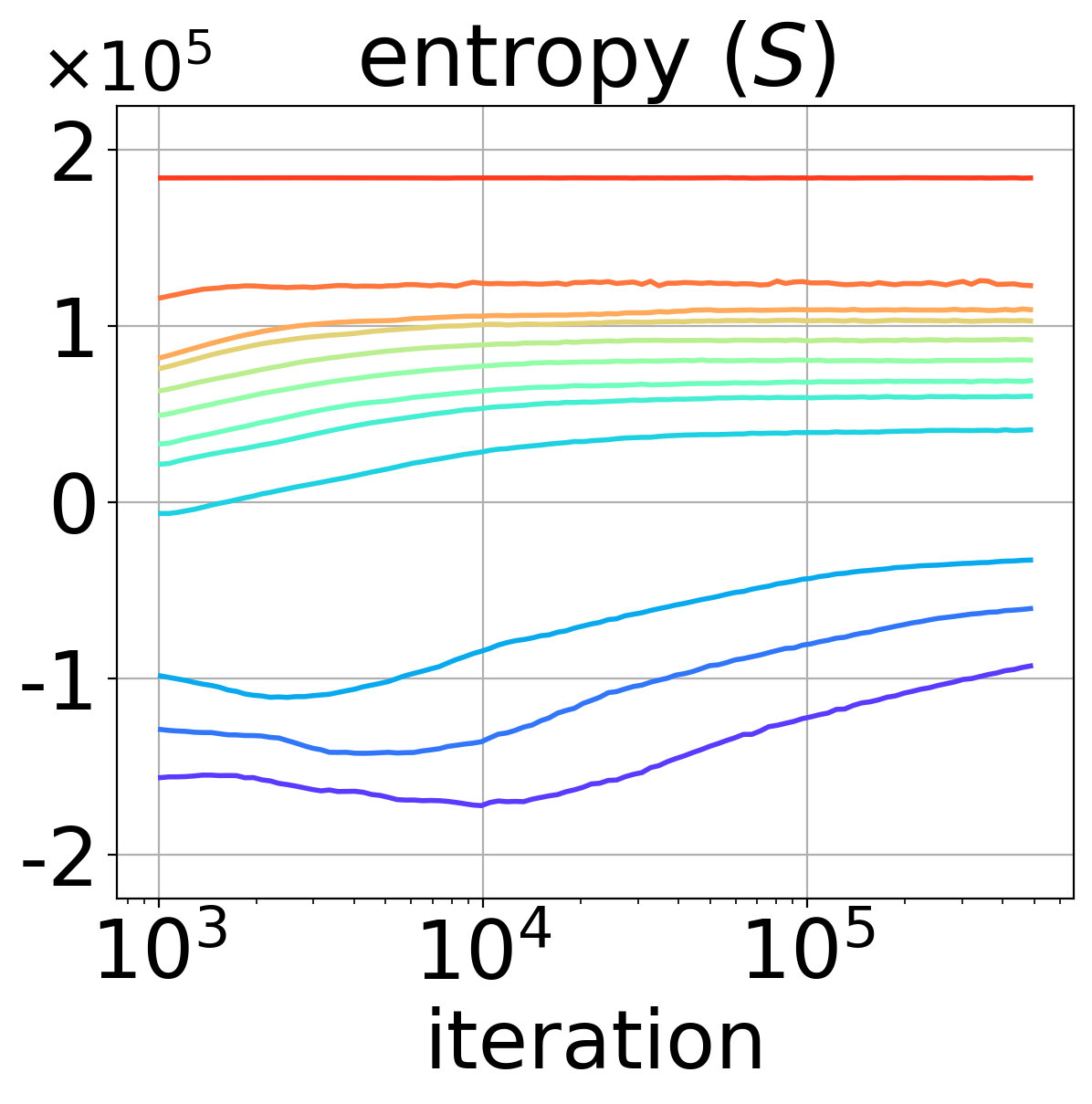} & %width=0.235
        \includegraphics[width=0.225\textwidth]{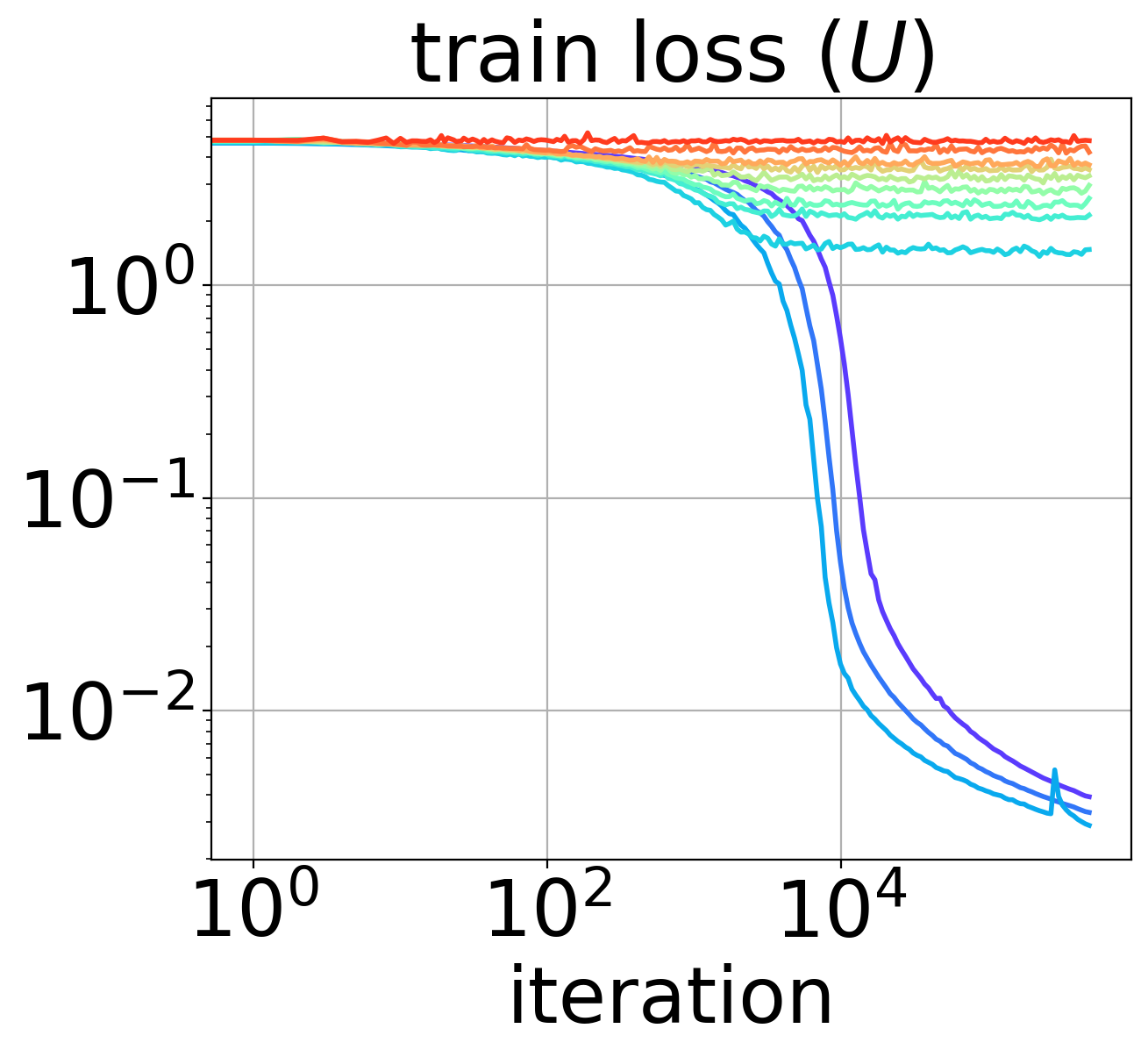} & %width=0.251
        \includegraphics[width=0.208\textwidth]{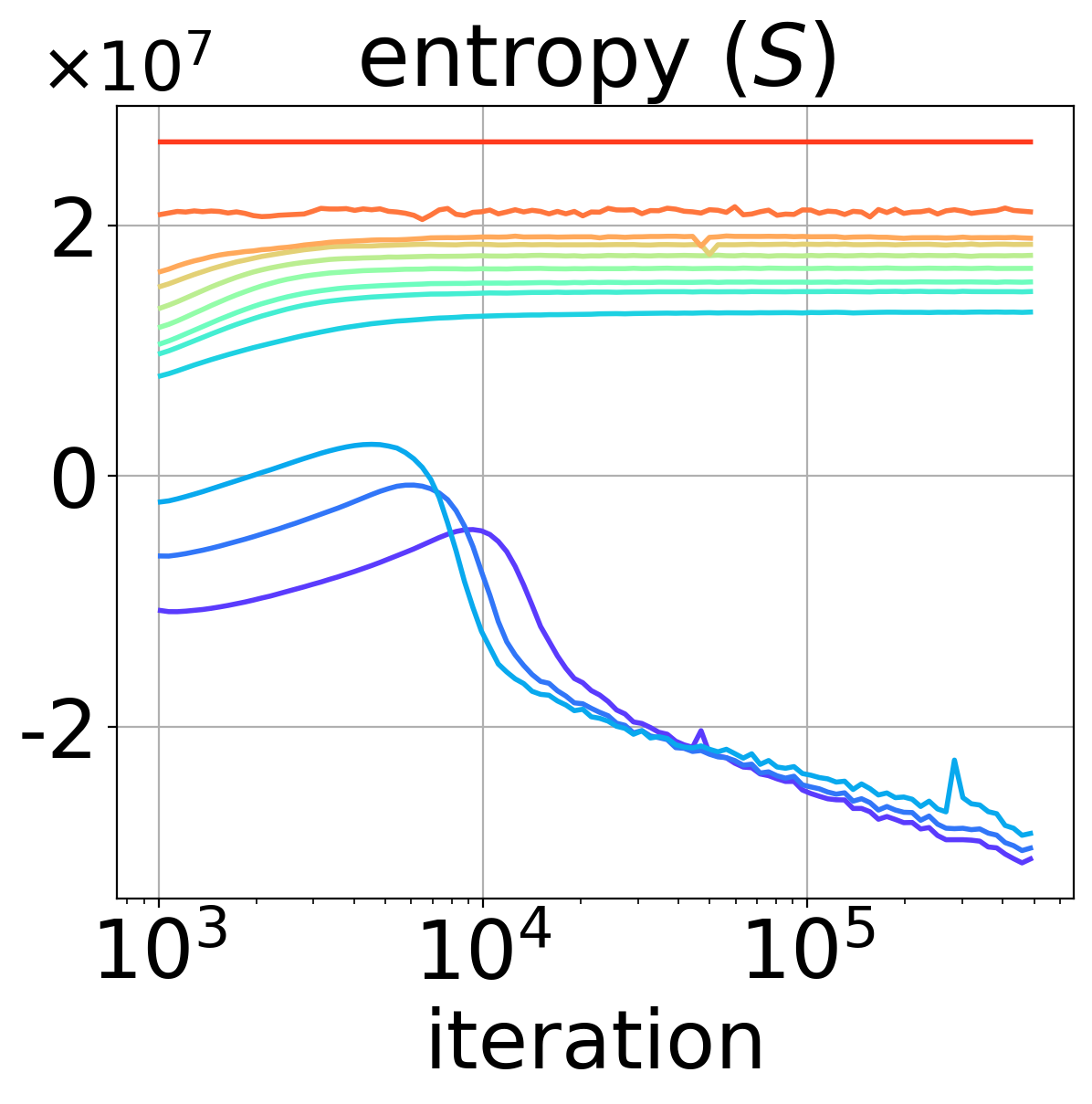} \\ %width=0.242 
        \multicolumn{4}{c}{
        \includegraphics[width=0.76\textwidth]{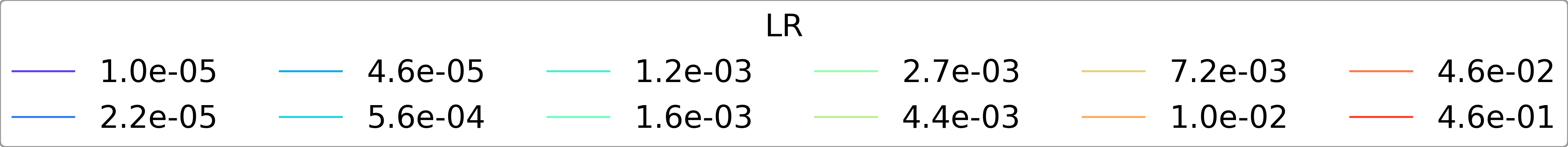}
        } \\
        \multicolumn{2}{c}{UP ConvNet on CIFAR-100} & \multicolumn{2}{c}{OP ConvNet on CIFAR-100} \\
        \includegraphics[width=0.225\textwidth]{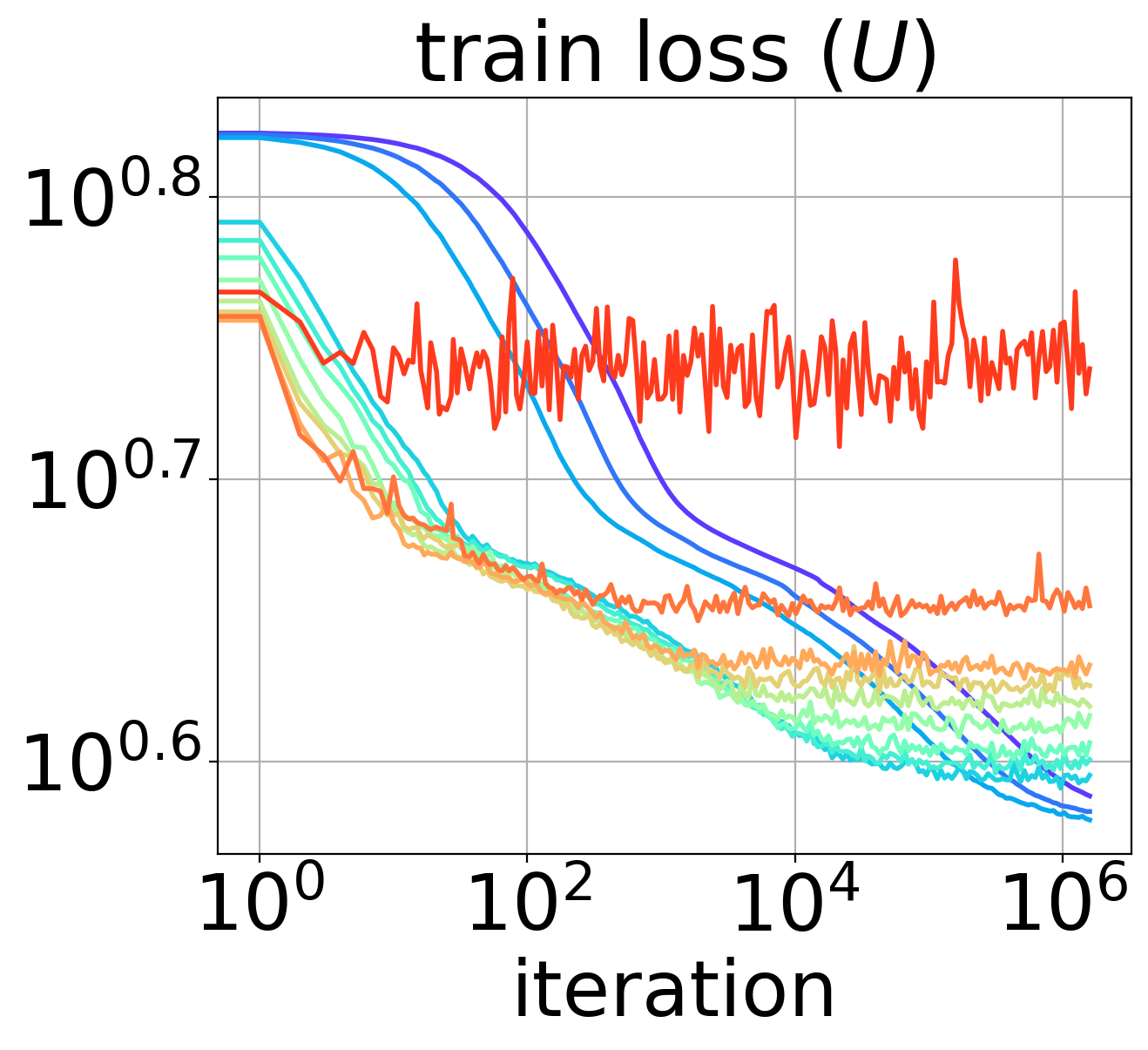} & %width=0.24
        \includegraphics[width=0.203\textwidth]{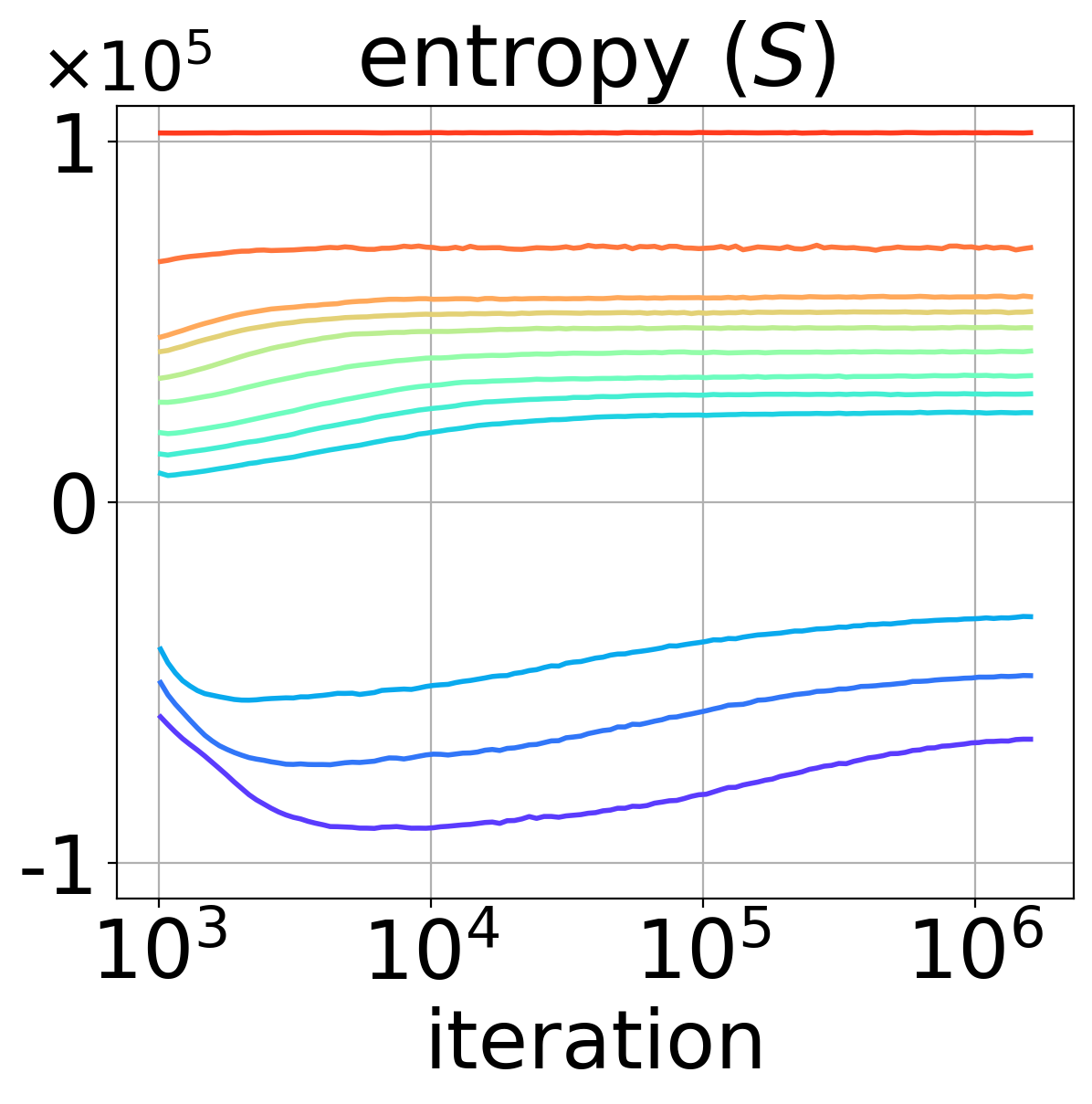} & %width=0.235
        \includegraphics[width=0.225\textwidth]{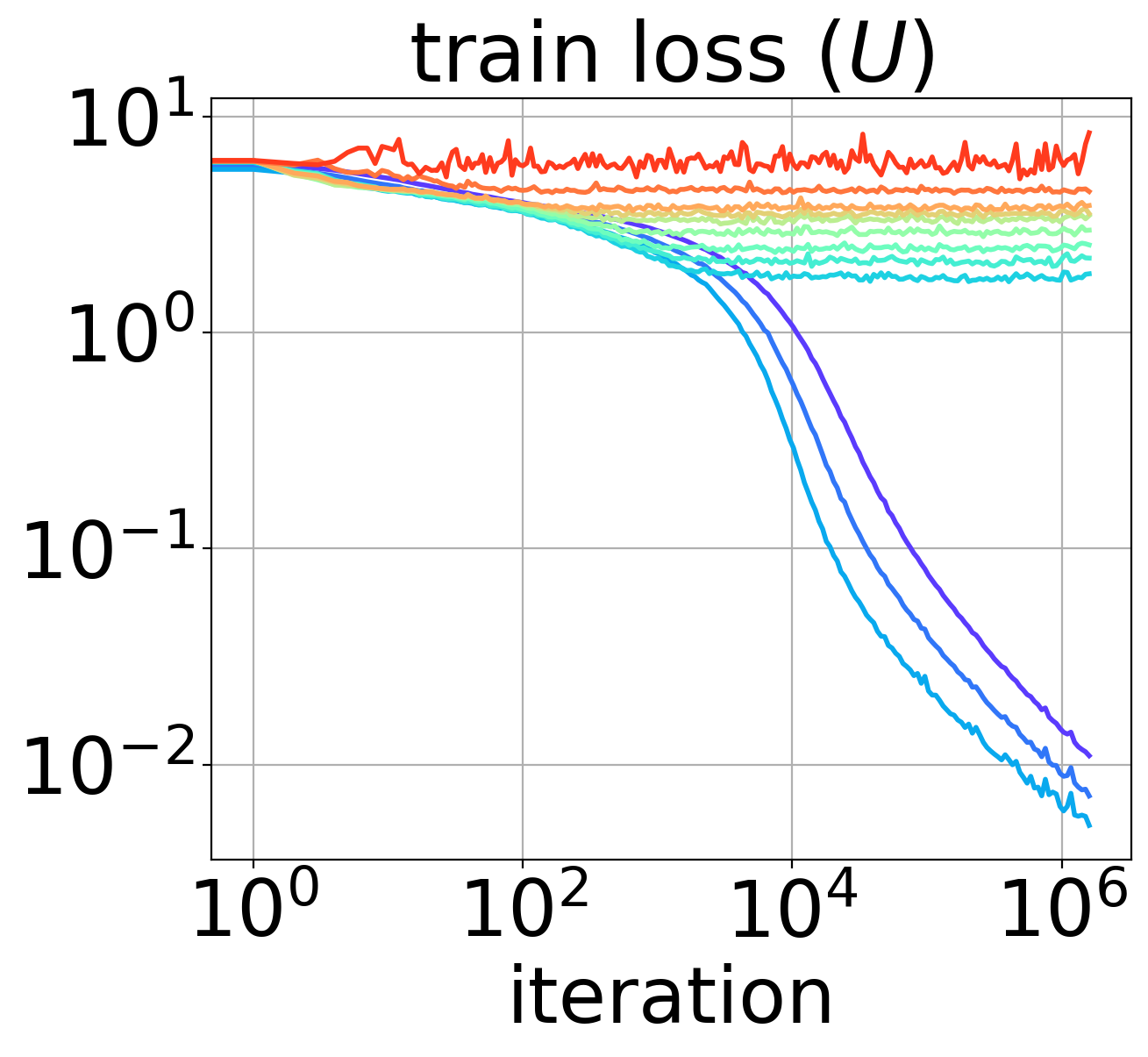} & %width=0.251
        \includegraphics[width=0.208\textwidth]{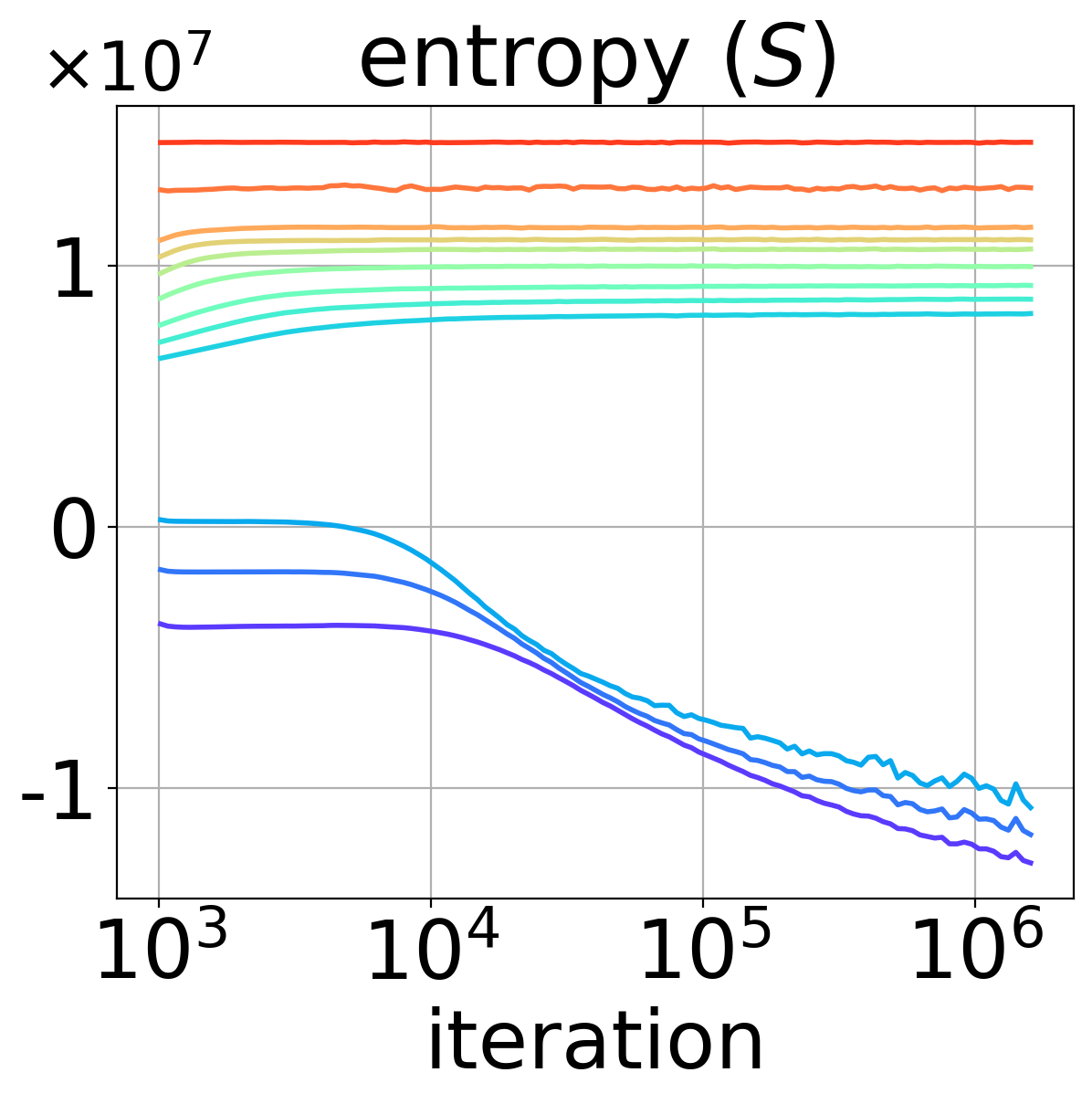} \\ %width=0.242 
        \multicolumn{4}{c}{
        \includegraphics[width=0.76\textwidth]{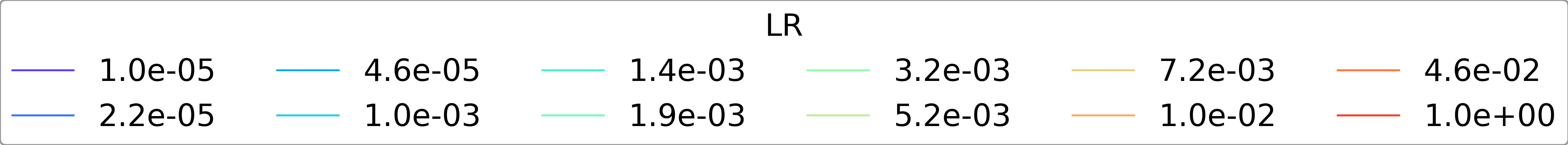}
        } 
    \end{tabular}
    \caption{Stationary loss and entropy for UP (left two columns) and OP (right two columns) settings. This figure complements Figure~\ref{fig:loss_entropy_iters} from the main text.}
    \label{fig:app_loss_entropy_iters}
\end{figure}

\begin{figure}[h!]
    \centering
    \addtolength{\tabcolsep}{-0.4em}

    \begin{tabular}{ccc}
        \multicolumn{3}{c}{UP ResNet-18 on CIFAR-10} \\
        \includegraphics[width=0.33\textwidth]{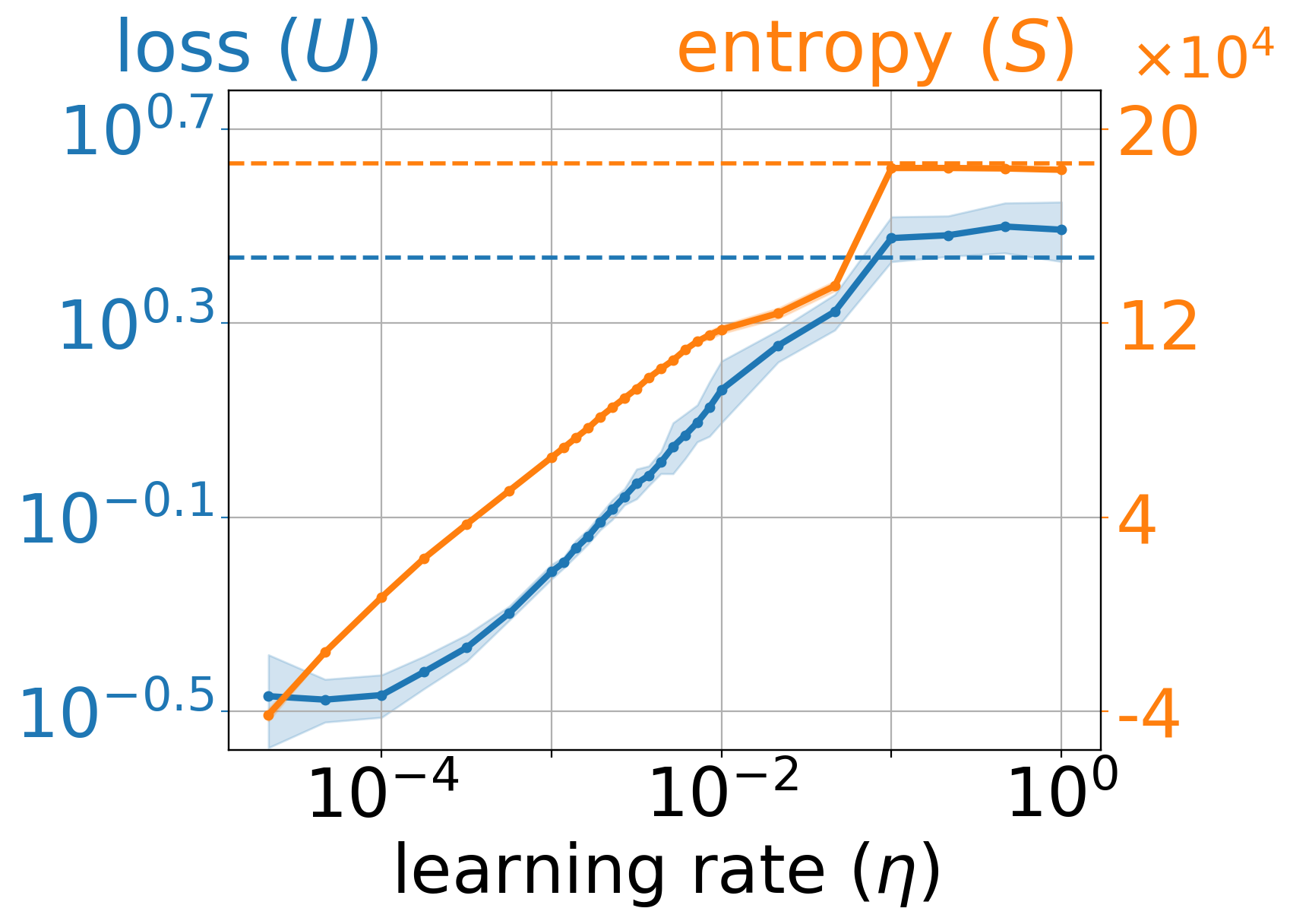} &
        \includegraphics[width=0.31\textwidth]{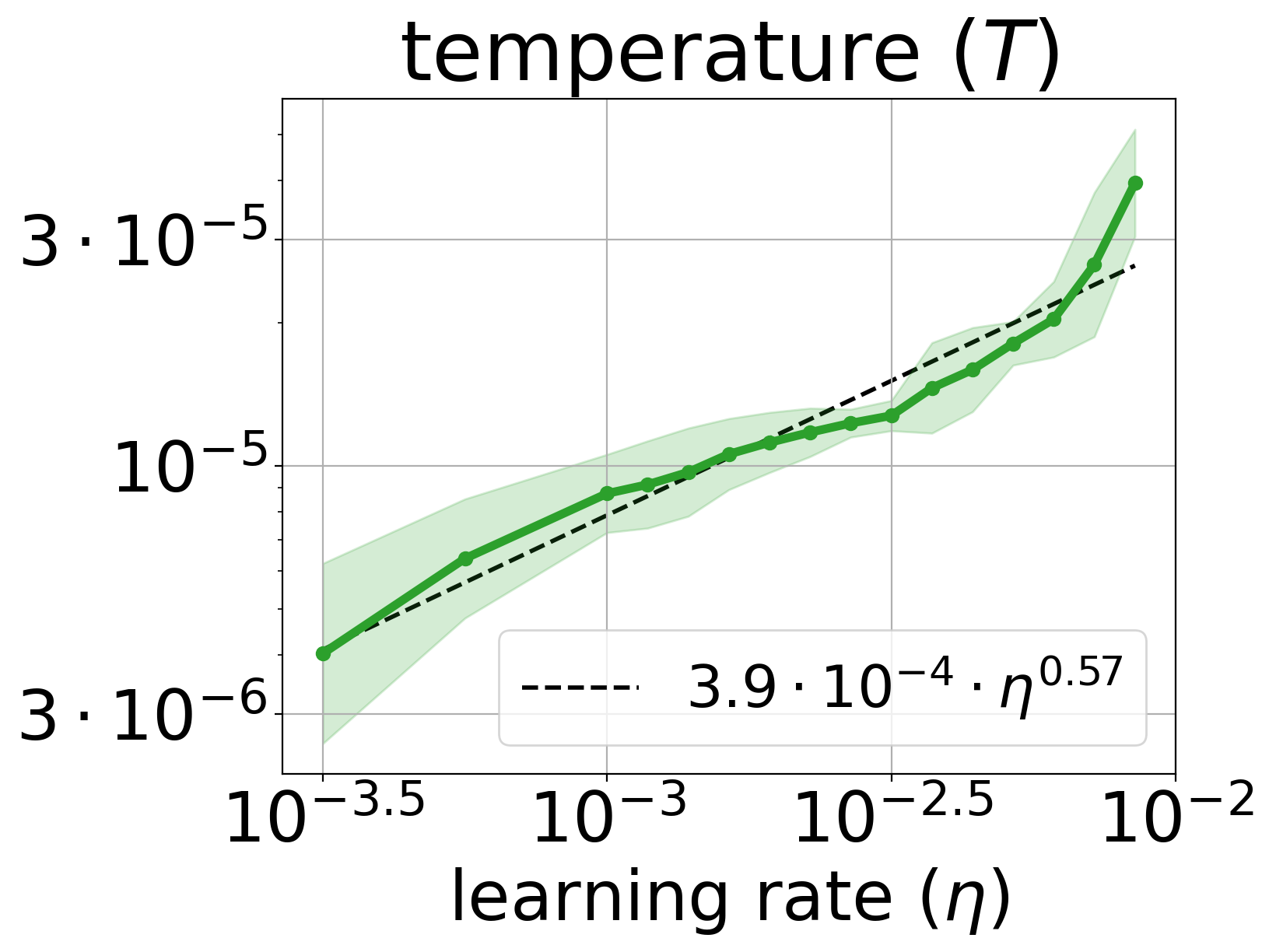} & 
        \includegraphics[width=0.35\textwidth]{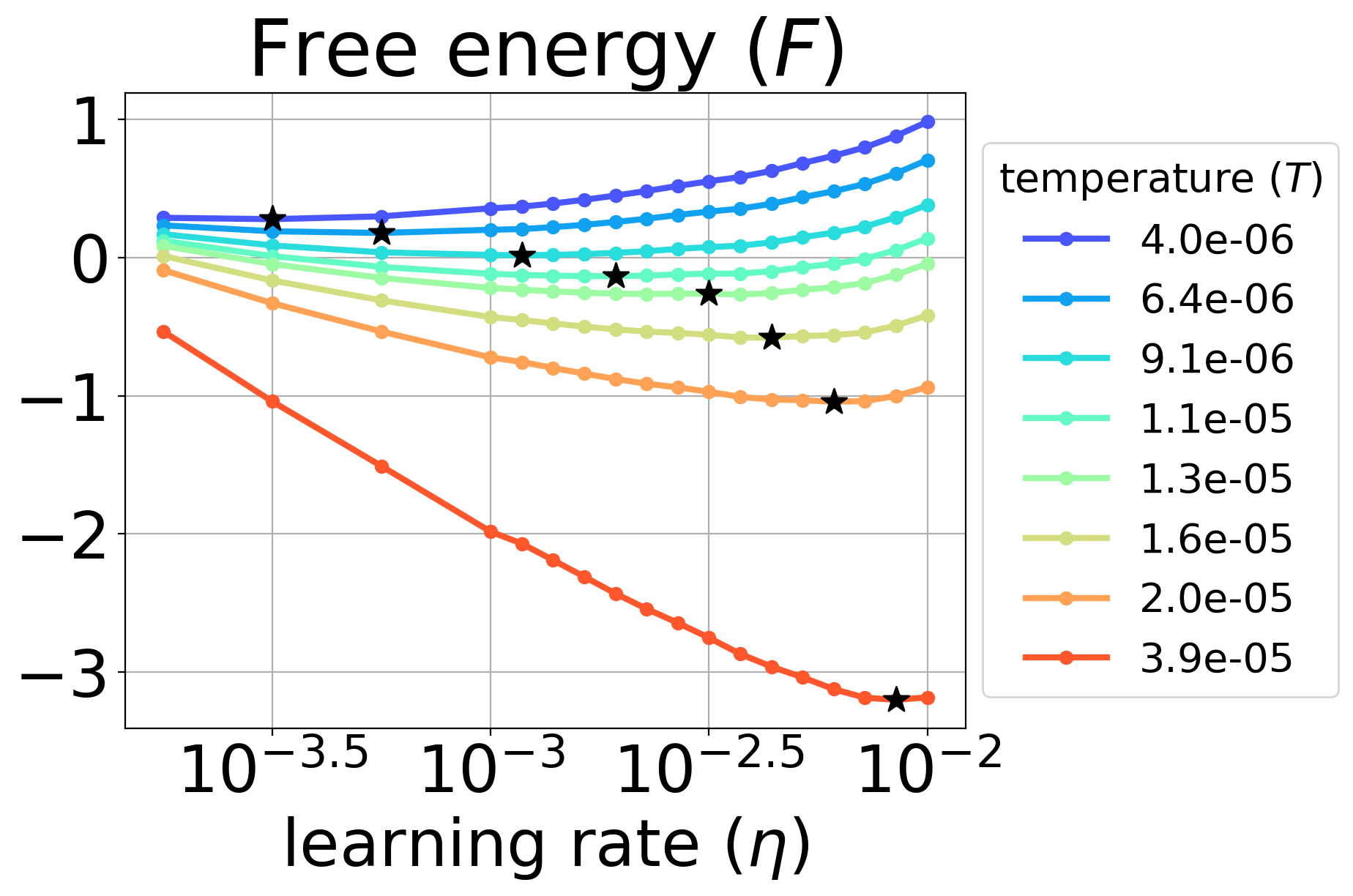} \\
        \multicolumn{3}{c}{OP ResNet-18 on CIFAR-10} \\
        \includegraphics[width=0.33\textwidth]{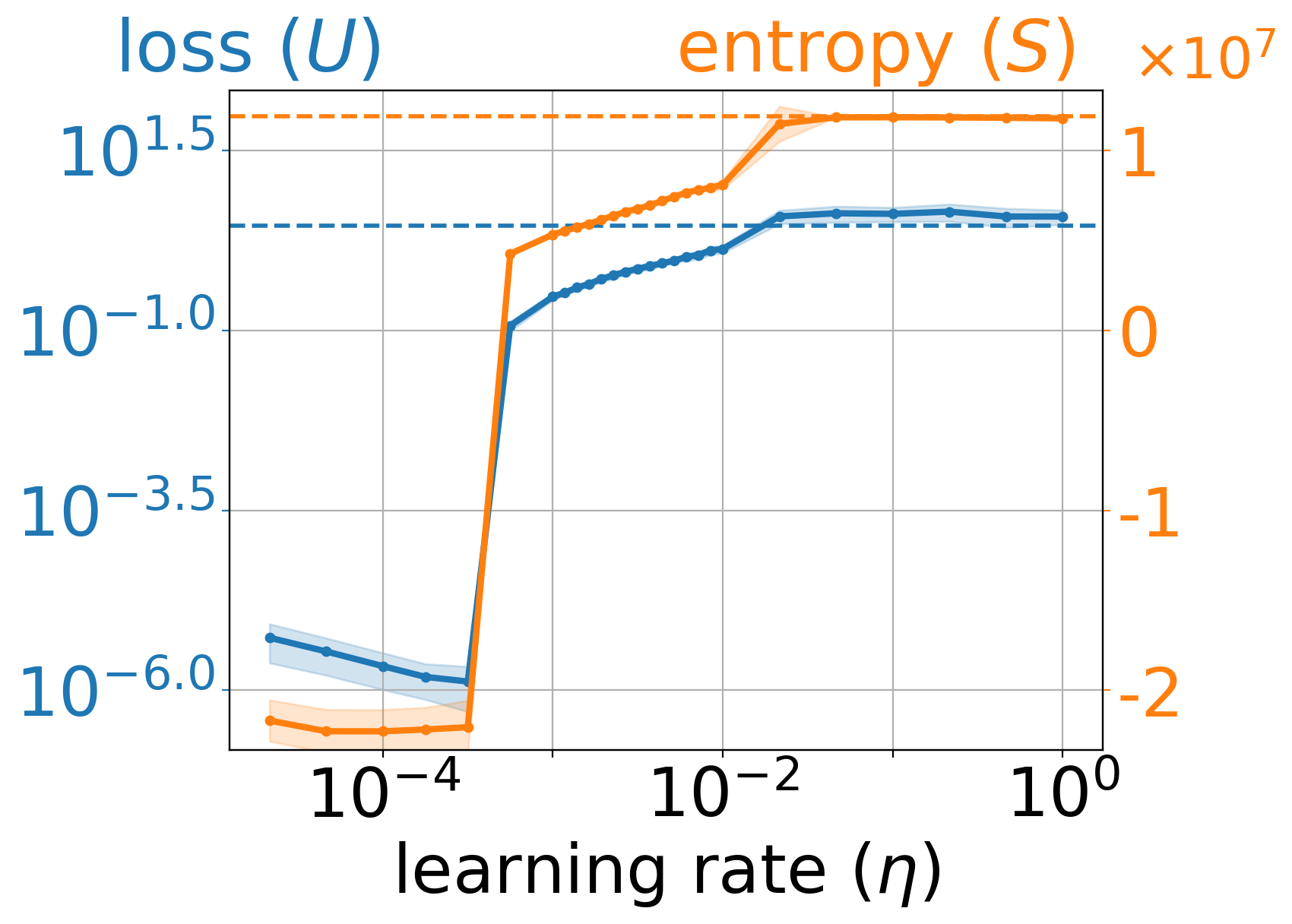} &
        \includegraphics[width=0.307\textwidth]{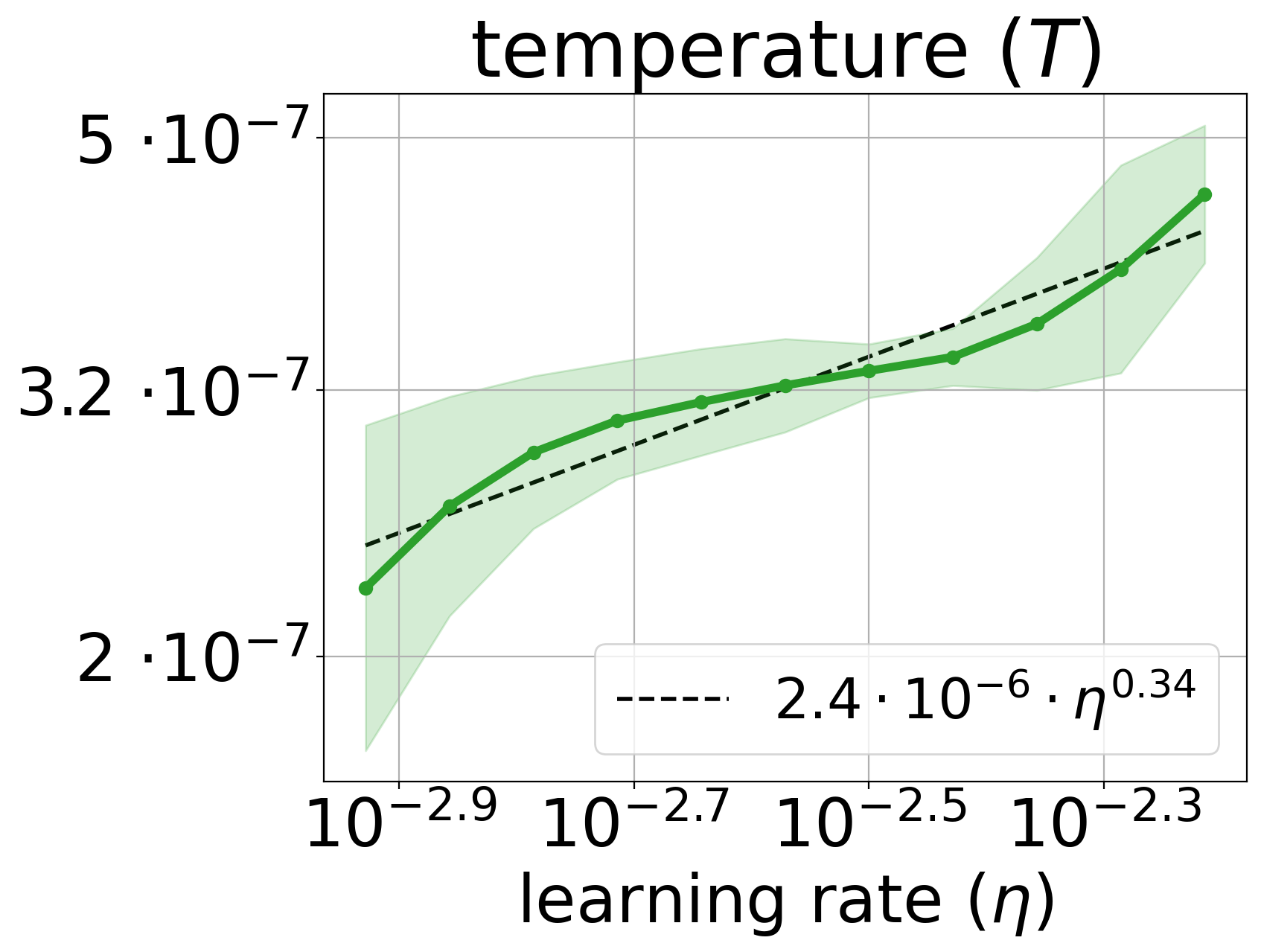} & 
        \includegraphics[width=0.35\textwidth]{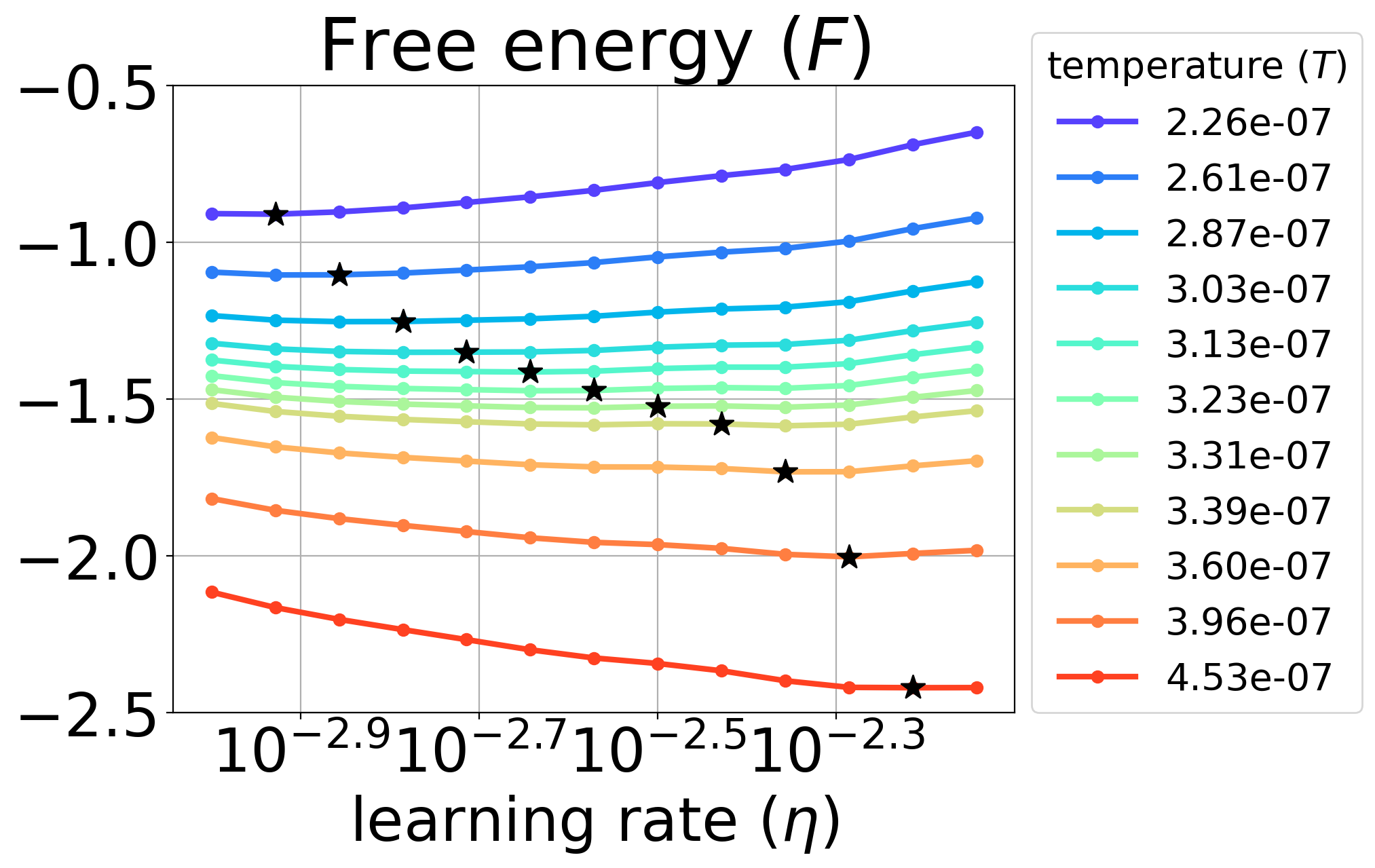} \\
        \multicolumn{3}{c}{UP ResNet-18 on CIFAR-100} \\
        \includegraphics[width=0.33\textwidth]{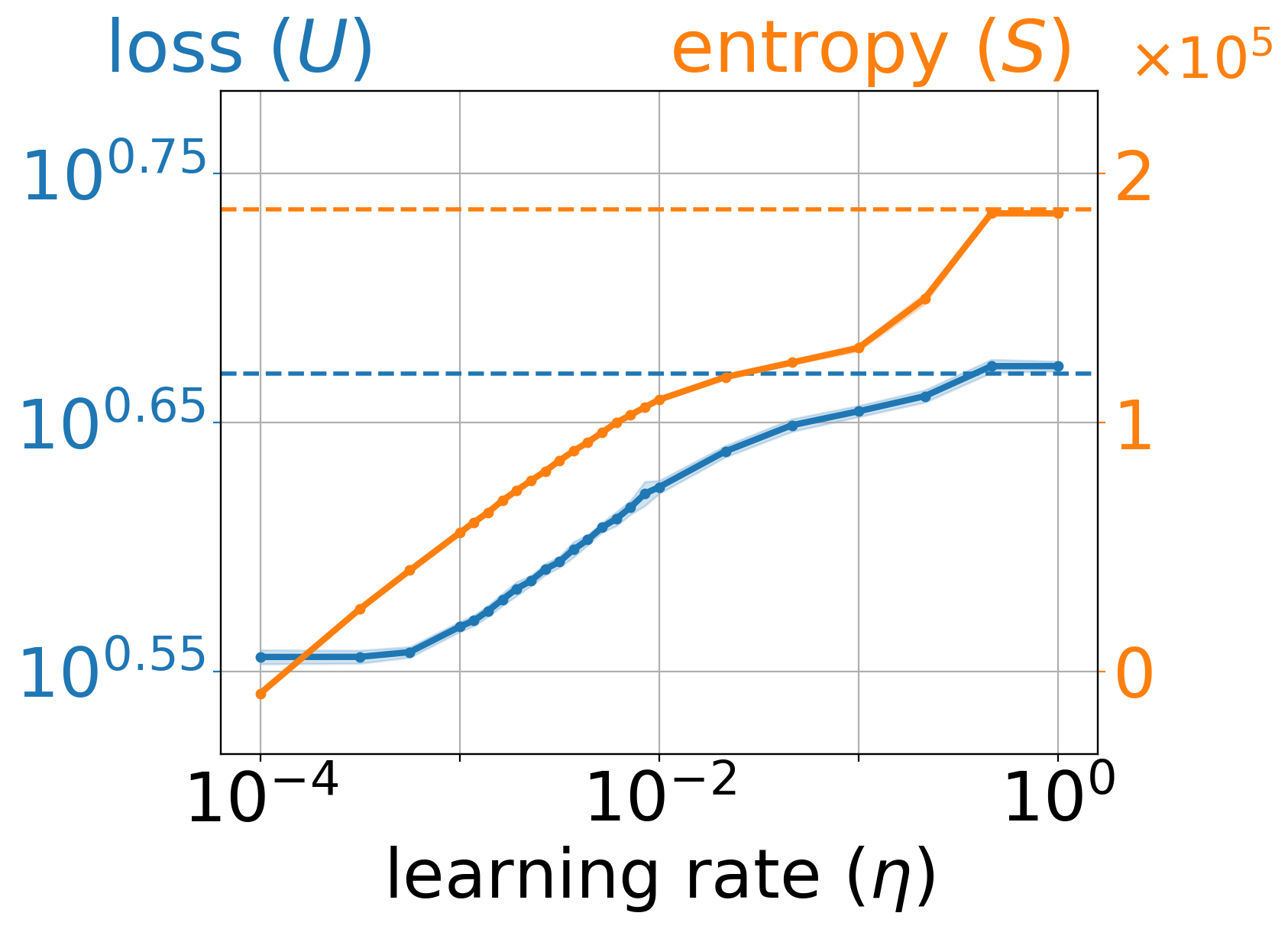} &
        \includegraphics[width=0.31\textwidth]{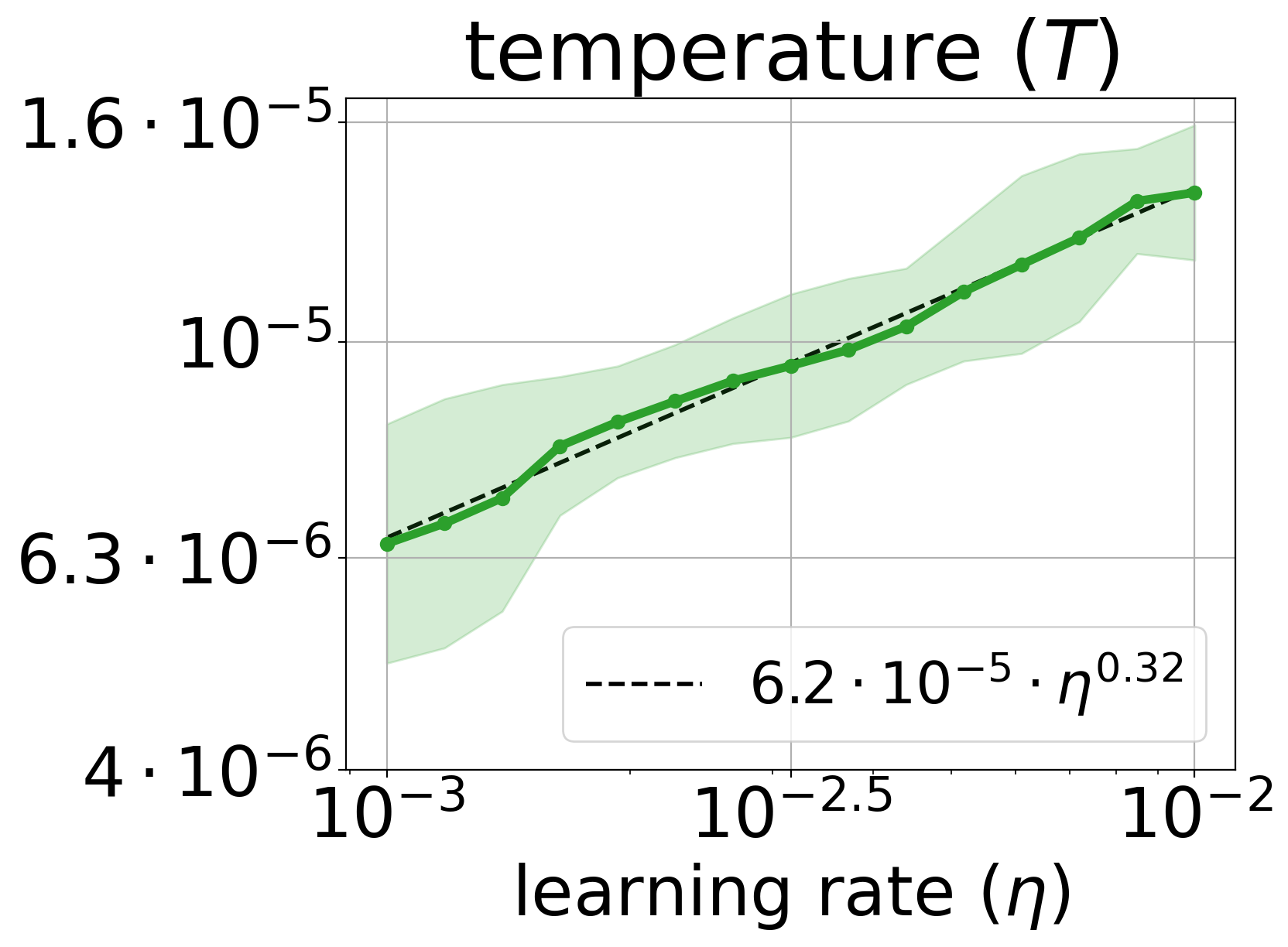} & 
        \includegraphics[width=0.35\textwidth]{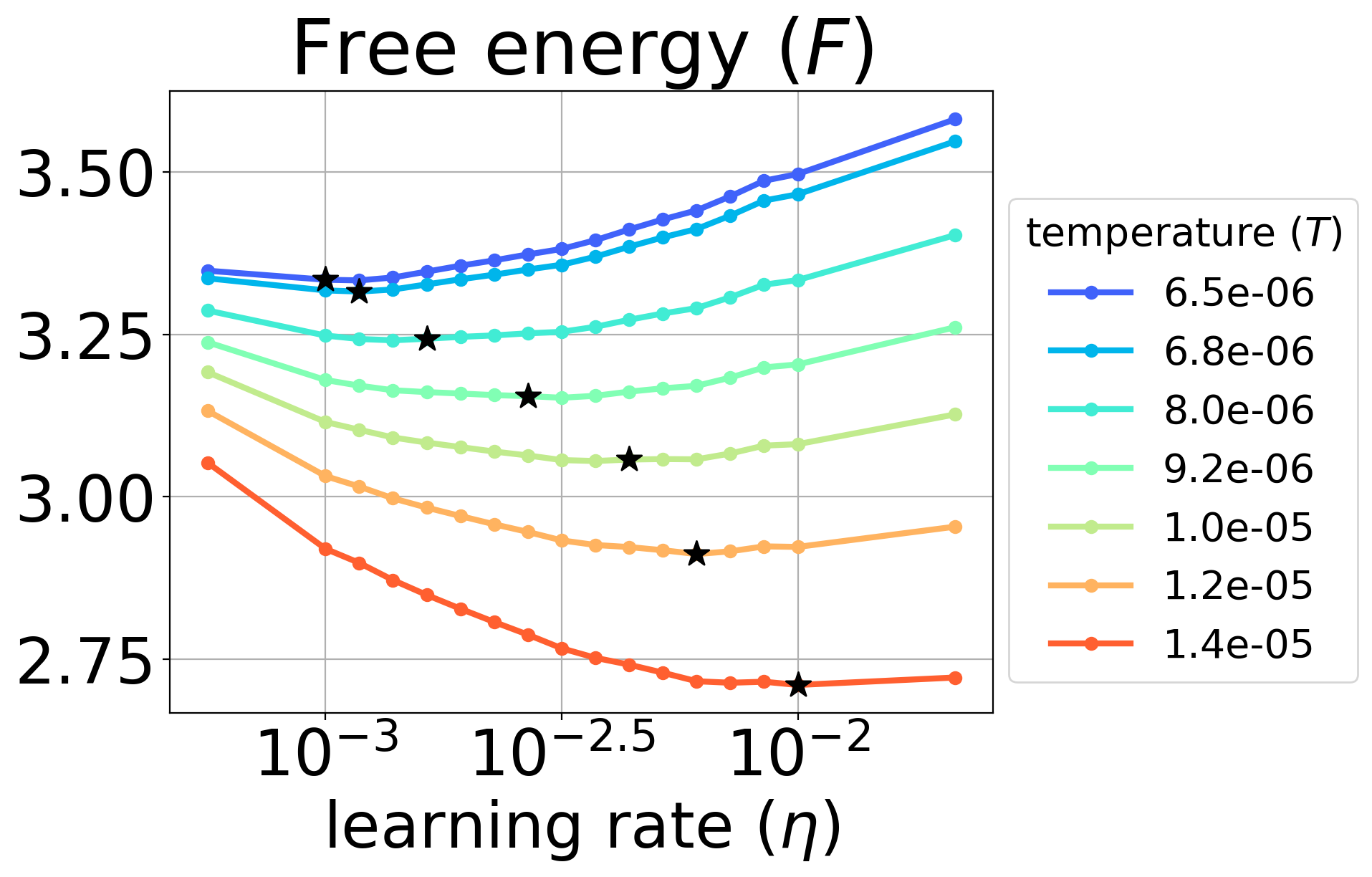} \\
        \multicolumn{3}{c}{OP ResNet-18 on CIFAR-100} \\
        \includegraphics[width=0.32\textwidth]{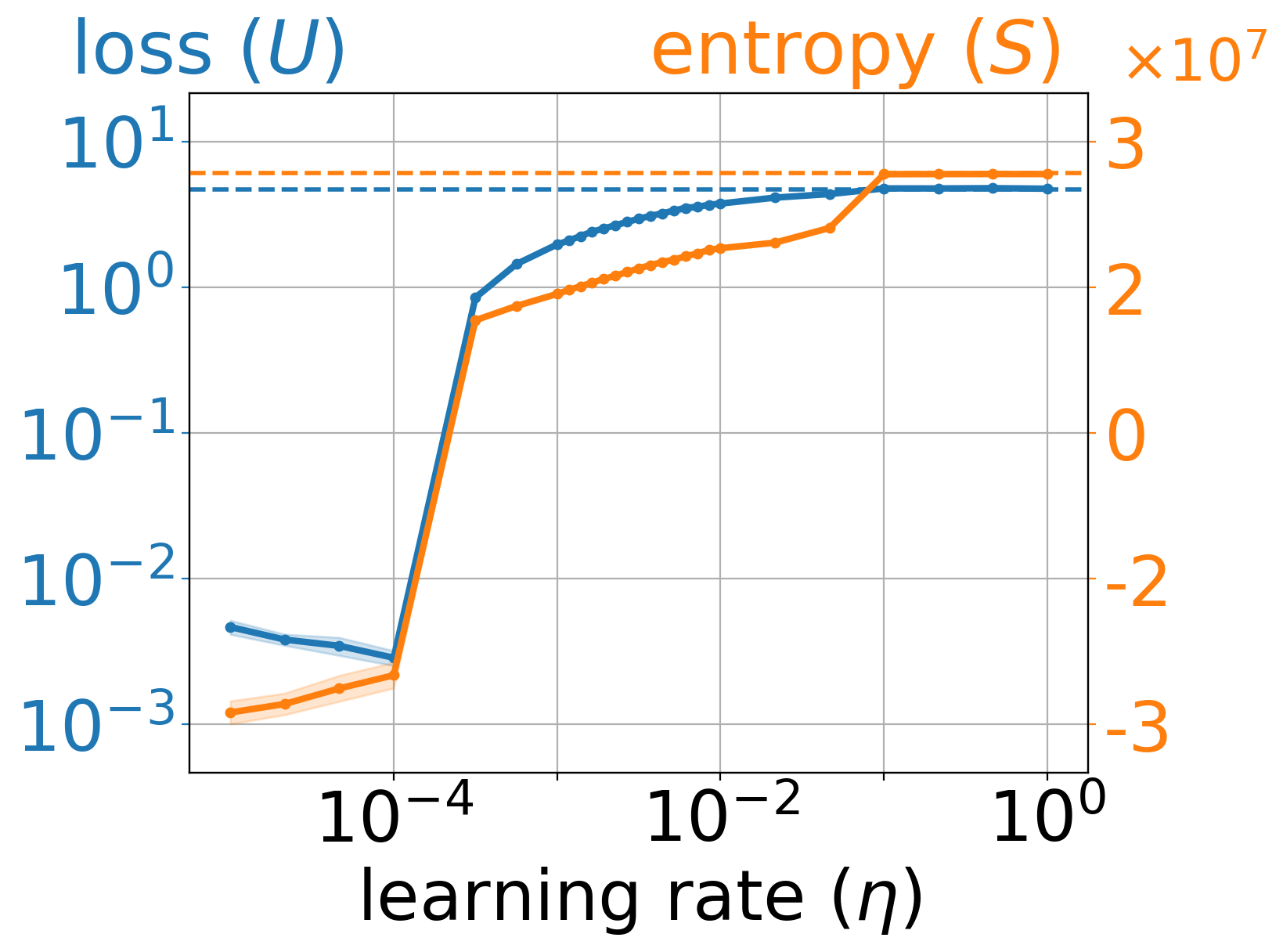} &
        \includegraphics[width=0.32\textwidth]{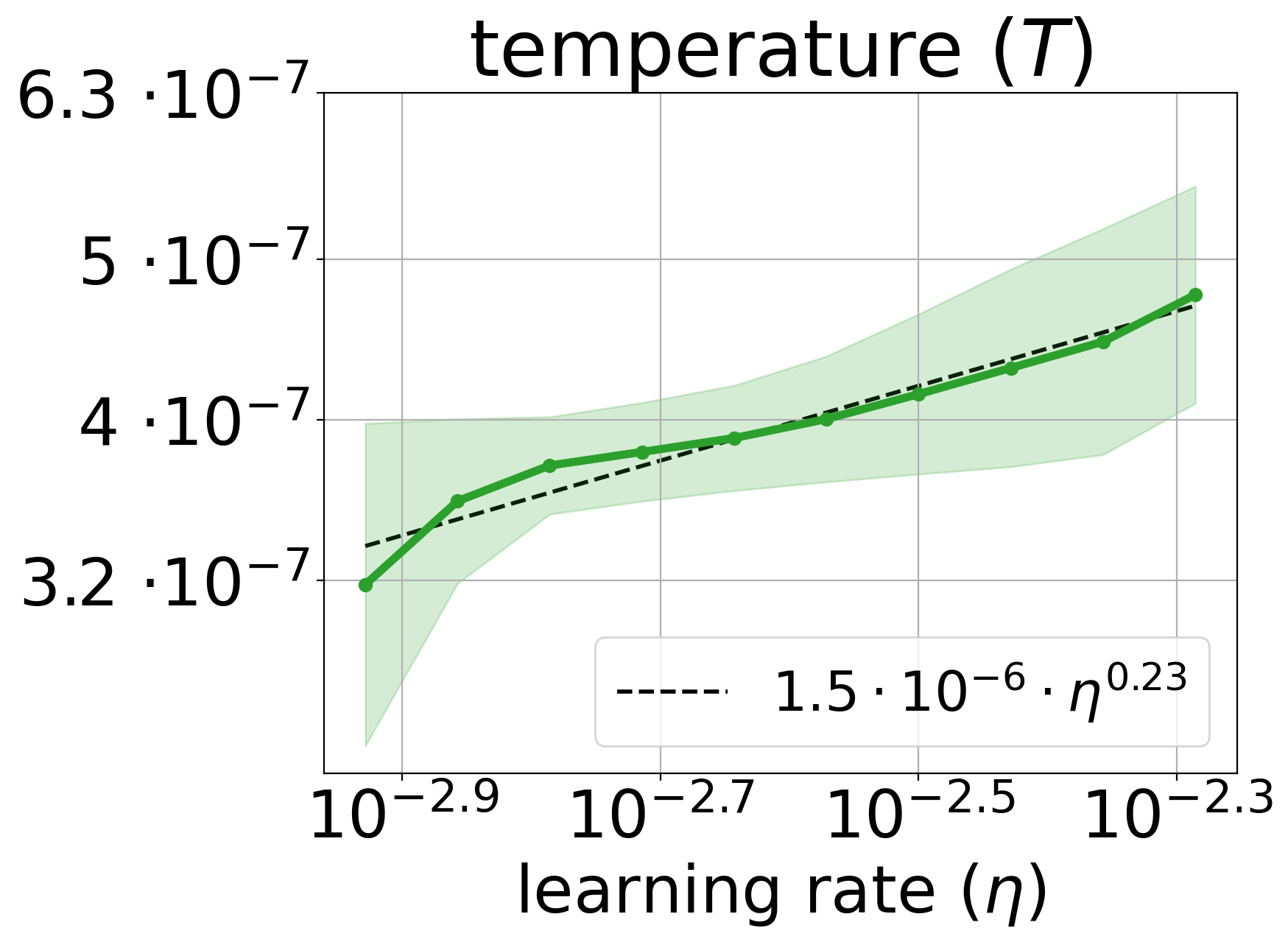} & 
        \includegraphics[width=0.36\textwidth]{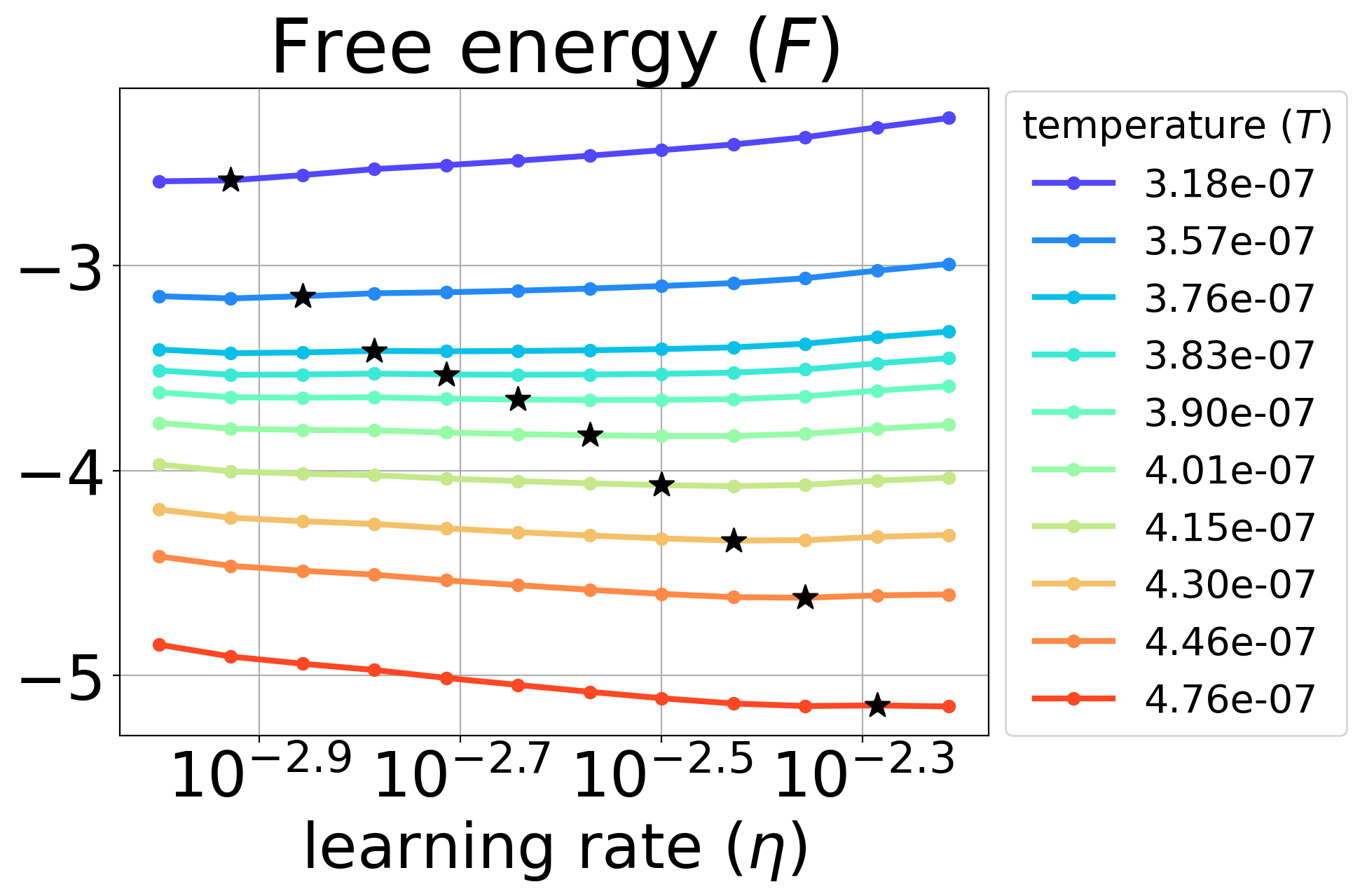} \\
    \end{tabular}
    \caption{Loss and entropy (left), estimated temperature (center) and free energy for different temperatures (right) vs. LR for ResNet-18 on CIFAR-10 and CIFAR-100 datasets. The figure extends Figure~\ref{fig:temperature_up_op} from the main text.}
    \label{fig:app_temperature_up_op_resnet}
\end{figure}

\begin{figure}[h]
    \centering
    \addtolength{\tabcolsep}{-0.4em}

    \begin{tabular}{ccc}
        \multicolumn{3}{c}{UP ConvNet on CIFAR-100} \\
        \includegraphics[width=0.33\textwidth]{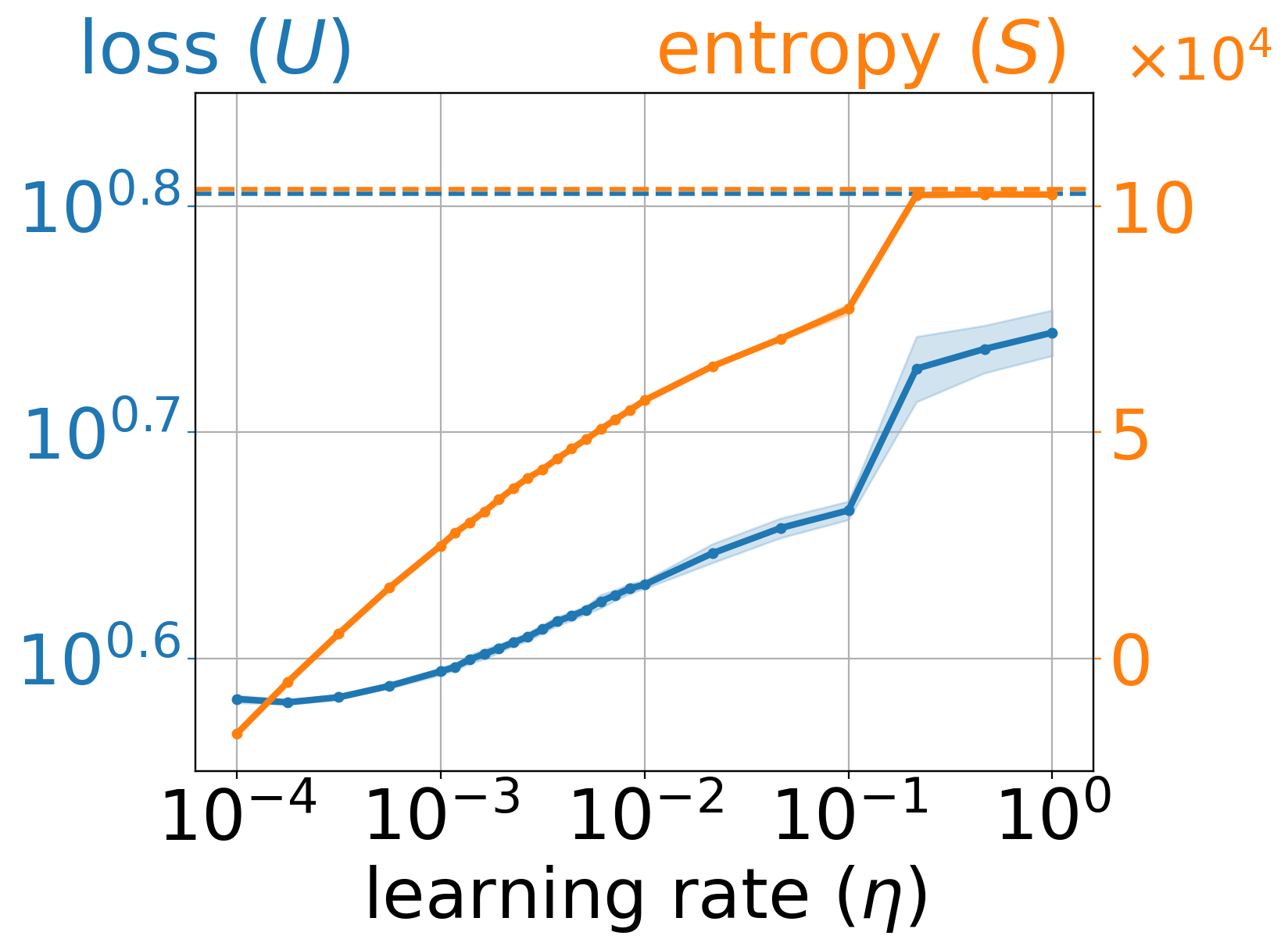} &
        \includegraphics[width=0.31\textwidth]{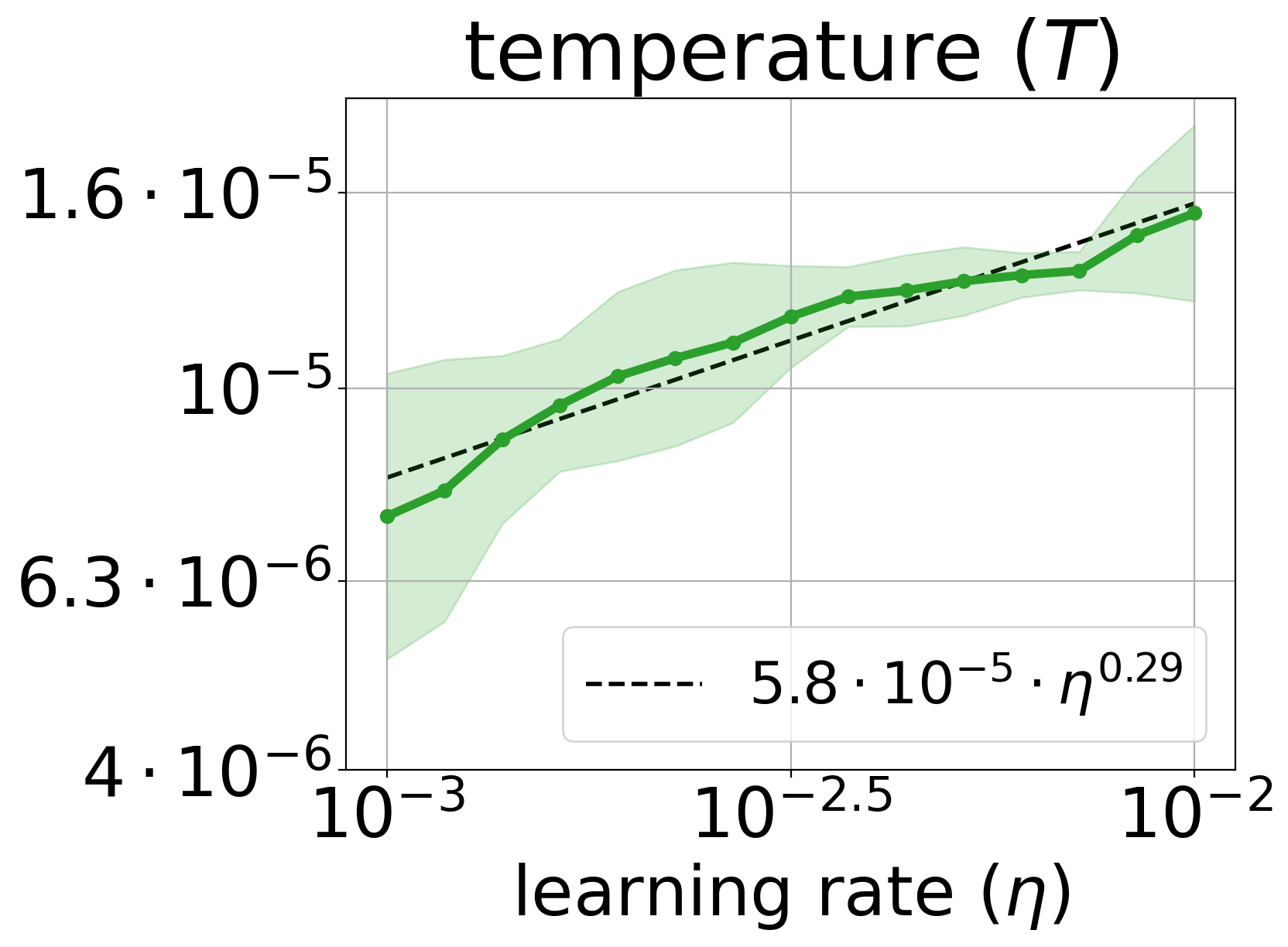} & 
        \includegraphics[width=0.35\textwidth]{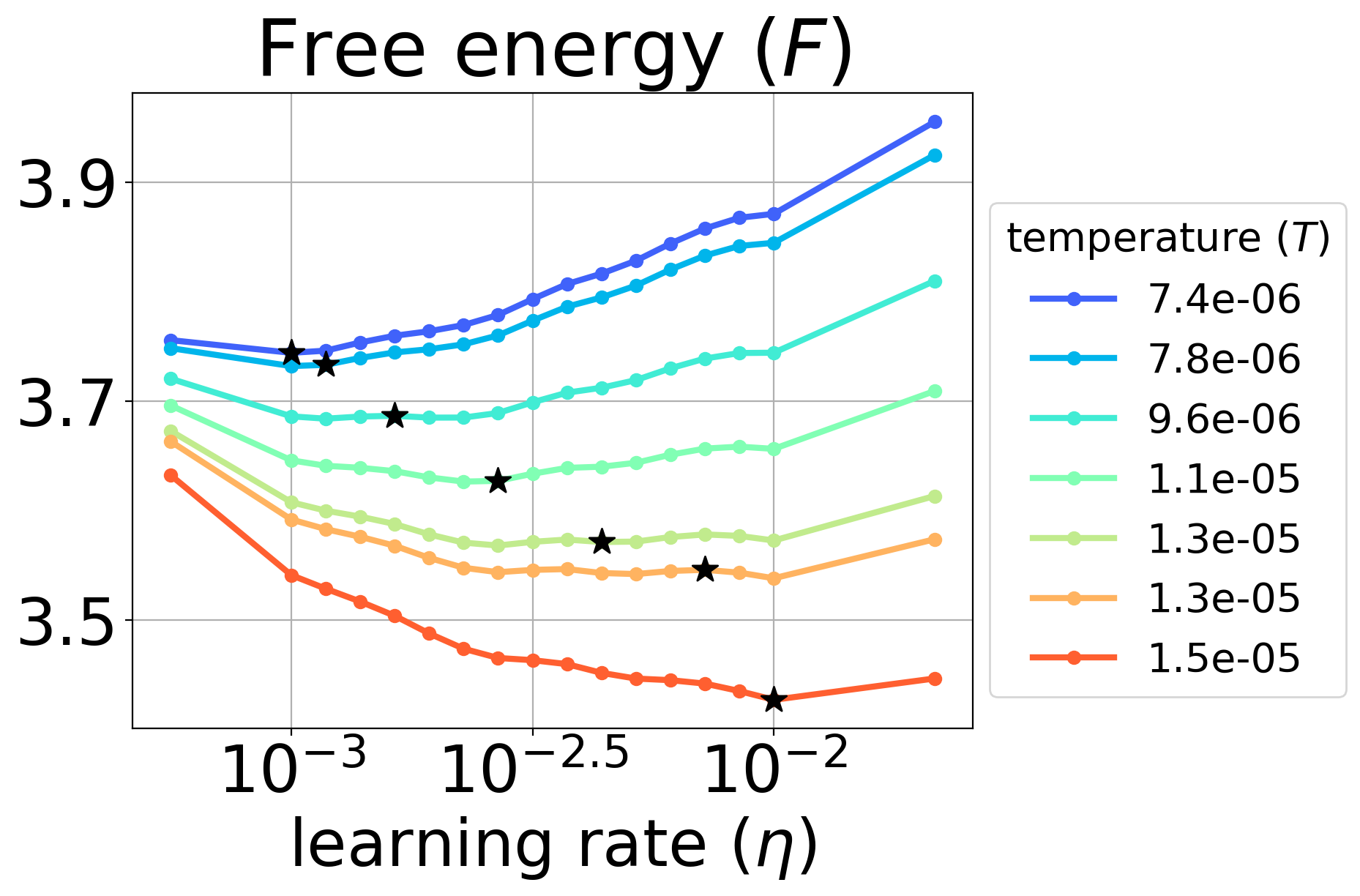} \\
        \multicolumn{3}{c}{OP ConvNet on CIFAR-100} \\
        \includegraphics[width=0.32\textwidth]{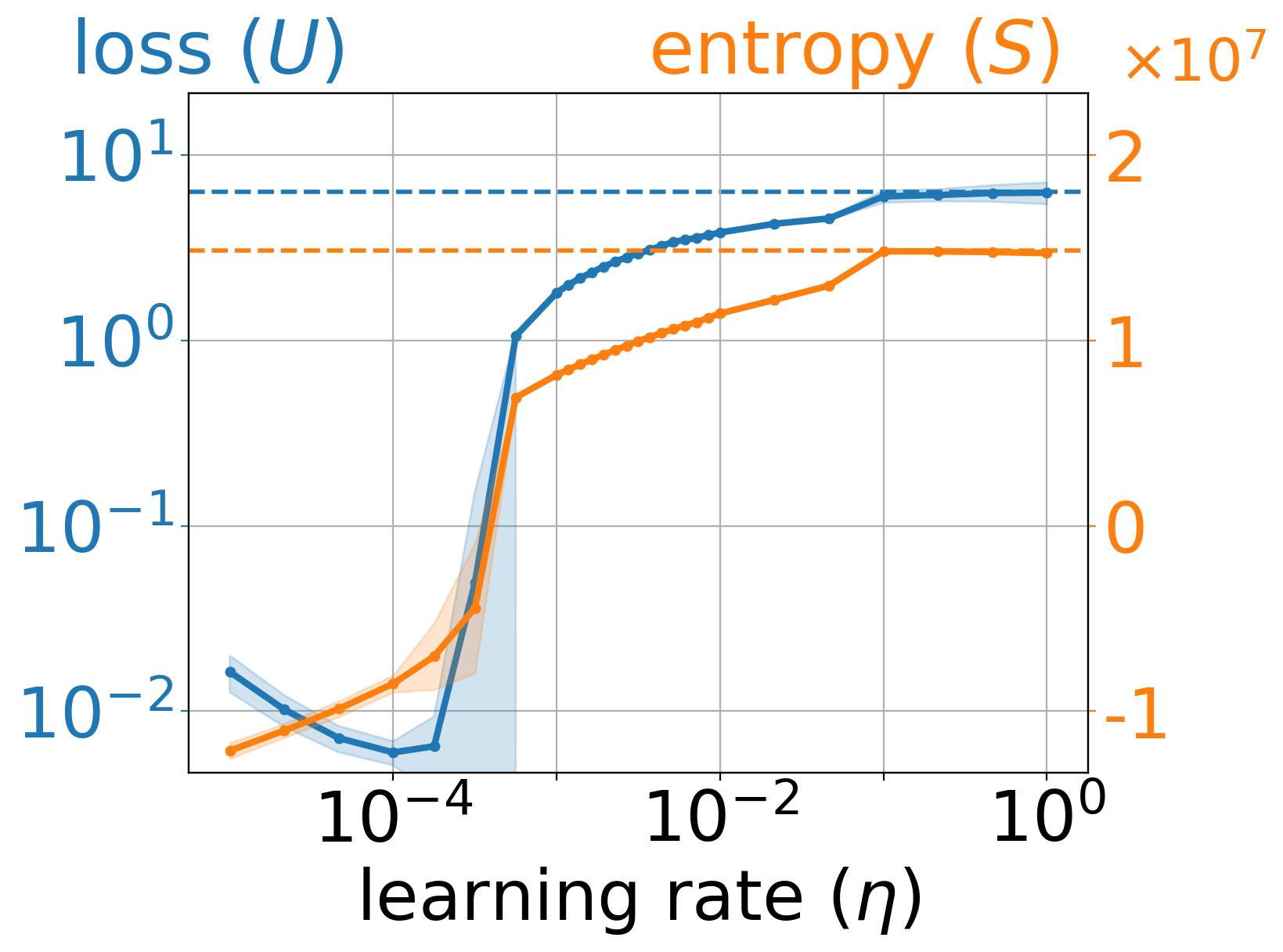} &
        \includegraphics[width=0.32\textwidth]{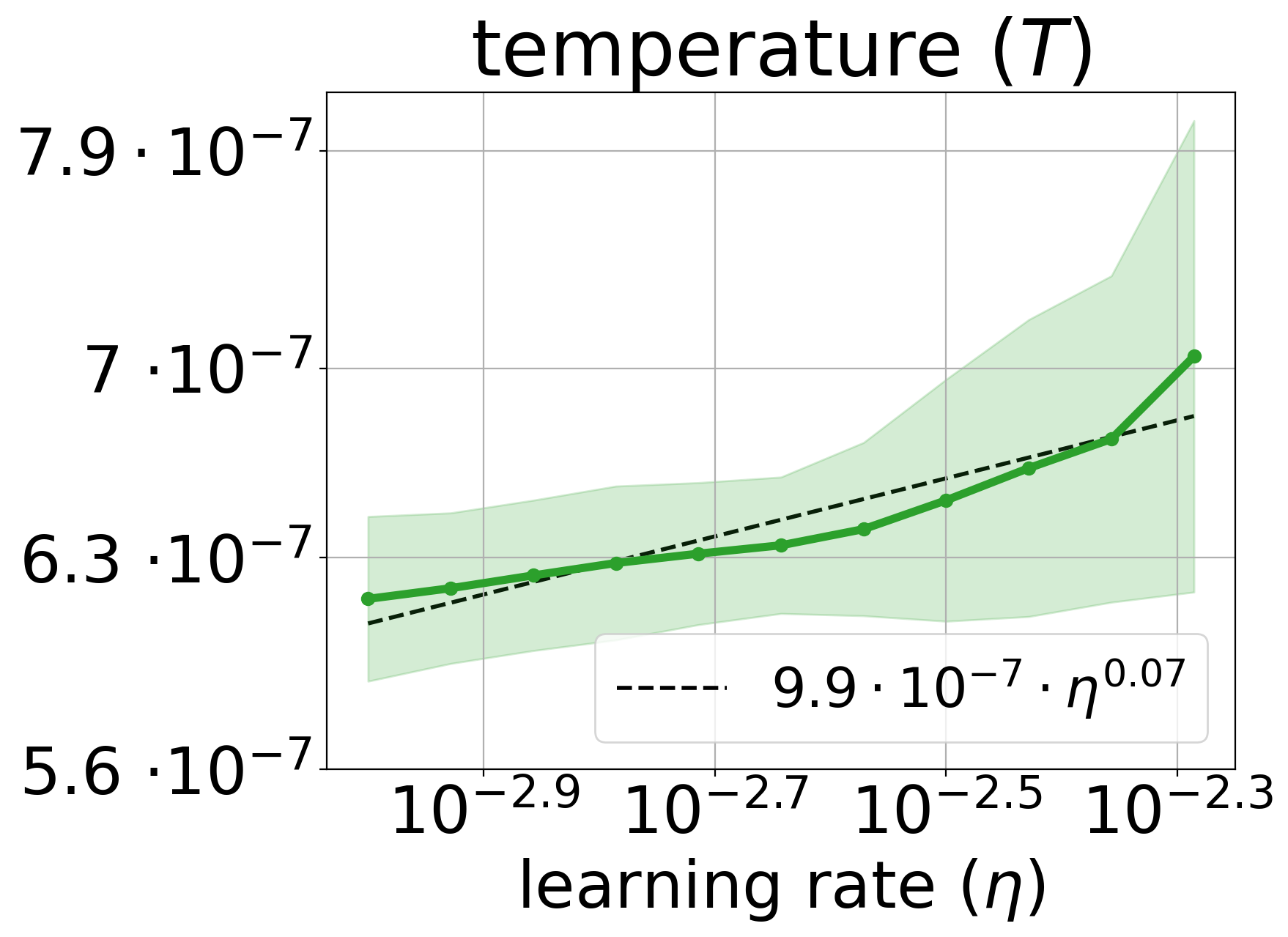} & 
        \includegraphics[width=0.36\textwidth]{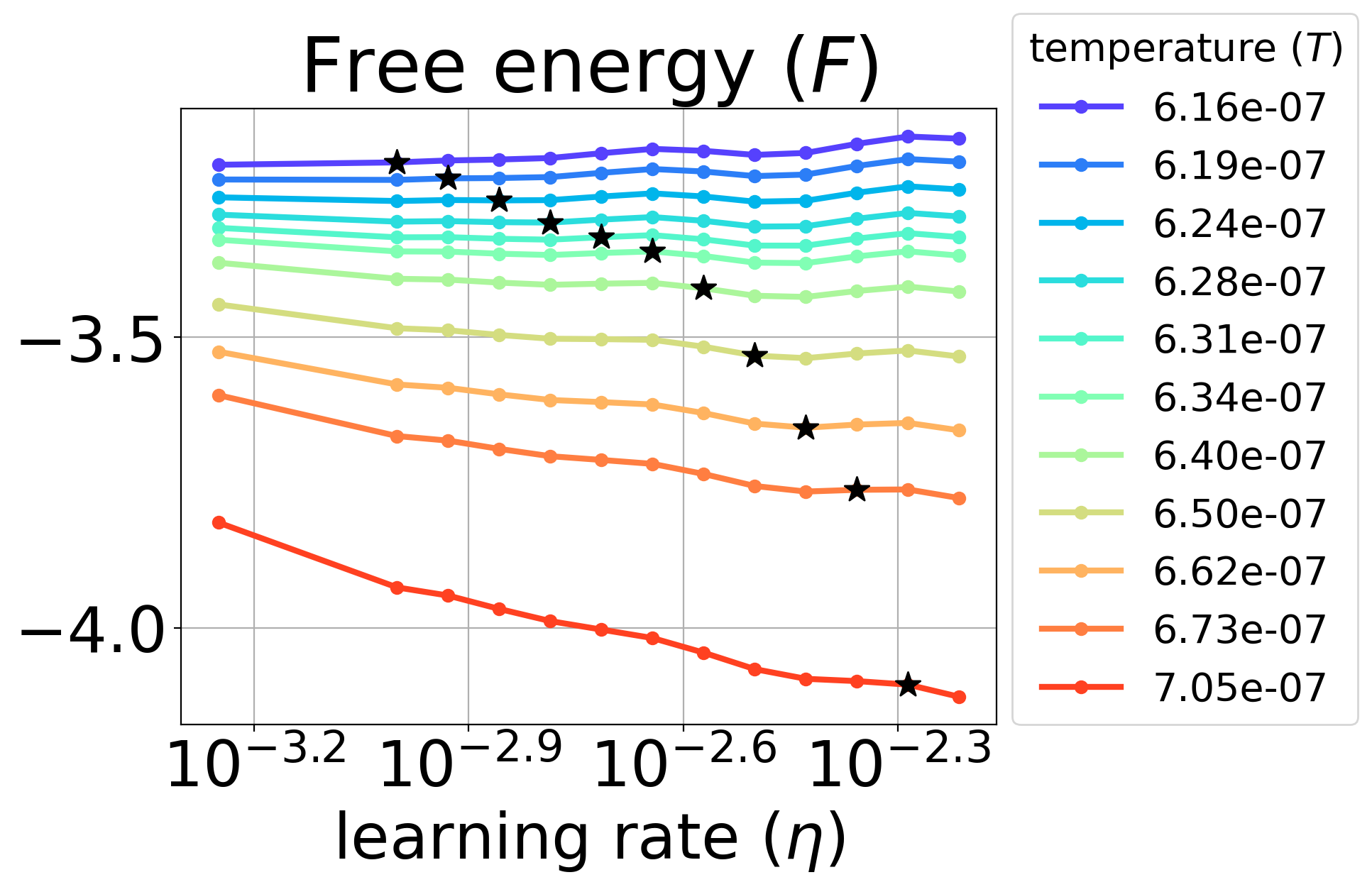} \\
    \end{tabular}
    \caption{Loss and entropy (left), estimated temperature (center) and free energy for different temperatures (right) vs. LR for ConvNet on CIFAR-100. The figure extends Figure~\ref{fig:temperature_up_op} from the main text.}
    \label{fig:app_temperature_up_op_convnet}
\end{figure}

\begin{figure}
    \centering
    \begin{tabular}{ccc}
    ResNet-18 on CIFAR-10 & ResNet-18 on CIFAR-100 & ConvNet on CIFAR-100 \\
    \includegraphics[width=0.31\textwidth]{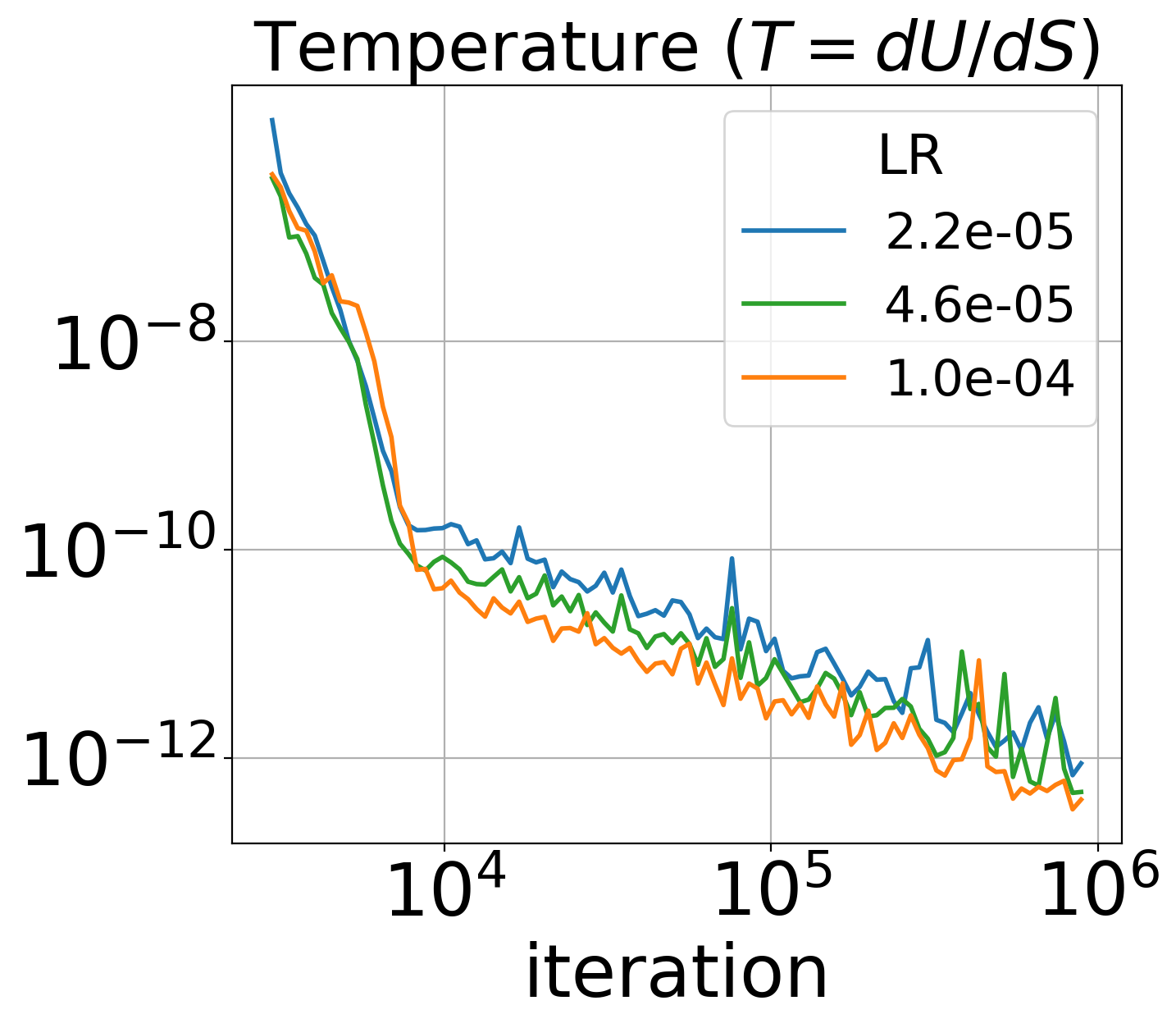} &
    \includegraphics[width=0.3\textwidth]{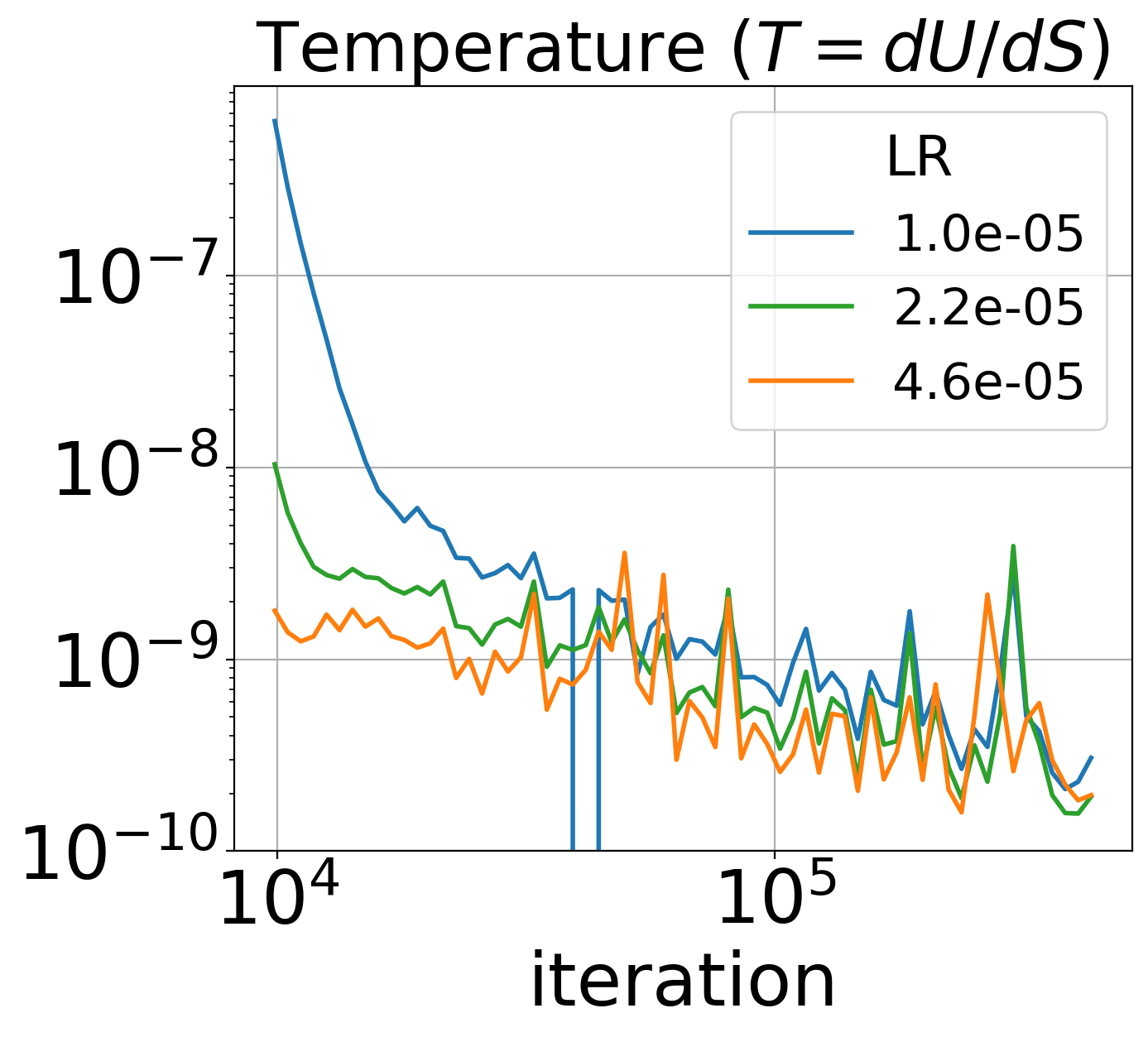} &
    \includegraphics[width=0.28\textwidth]{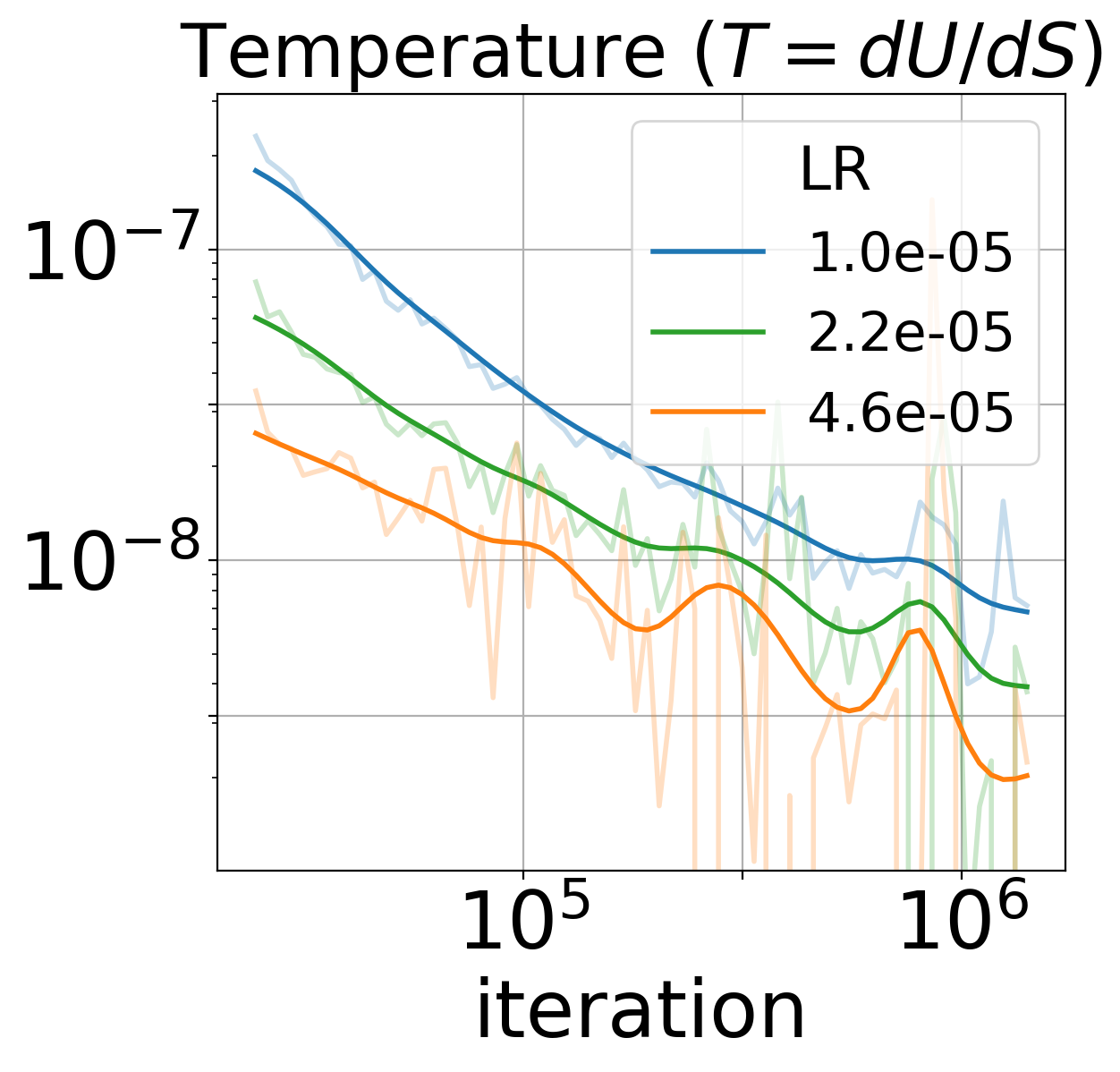}
    \end{tabular}
  \caption{Temperature decay for different architecture-dataset pairs for the OP setting. This figure complements Figure~\ref{fig:temperature_reg1} from the main text. For ConvNet on CIFAR-100, we calculate temperature with smoothed $U$ and $S$ (the smoothing protocol is described in Appendix~\ref{app:exp_setup}), as it otherwise behaves very noisily, frequently reaching negative values (translucent lines).}
  \label{fig:app_temperature_reg1}
\end{figure}

\FloatBarrier
\newpage
\section{Theoretical proofs for SNR}
\label{app:snr_proof}

In this section, we provide two theoretic proofs, which justify the behavior of the SNR near optima in the OP setting.
We use quadratic approximation of the loss $L(w)$ (with $w\in \mathbb{R}^D$) near the optimum point $w^*$:
\[
L(w) = L(w^*) + \frac{1}{2} (w - w^*)^T H (w-w^*),
\]
where $H=\nabla^2 L(w^*)$ represents the hessian matrix and the linear term is absent, as at the optimum we have $\nabla L(w^*) = 0$.
Our definition of overparameterization requires all stochastic gradients to be equal to zero together with the full gradient, so the loss $L_i(w)$ on object $i$ is:
\[
L_i(w) = L_i(w^*) + \frac{1}{2} (w - w^*)^T H_i (w-w^*),
\]
where $H_i=\nabla^2 L_i(w^*)$ is the hessian matrix of loss on this object. 

\newtheorem{theorem}{Theorem}
\newtheorem{lemma}{Lemma}
\begin{lemma}
\label{lemma:1}
Consider a unit vector $r\in \mathbb{R}^D$, $\|r\|=1$. Then, for the weights lying in the direction of $r$ from the optimum $w^*$, i.~e., $w=w^* + \delta r$ for some small $\delta > 0$ we have:
\begin{enumerate}[label={(\arabic*)}]
    \item $\cfrac{\|\nabla L_i(w)\|}{\|\nabla L(w)\|} = \cfrac{\|H_ir\|}{\|Hr\|}$
    \item $\text{SNR}(w) = \cfrac{\|Hr\|}{\sqrt{\mathbb{E} \|H_i r\|^2 - \|Hr\|^2}}$
\end{enumerate}
\end{lemma}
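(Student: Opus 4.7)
The plan is to compute the two gradients explicitly from the quadratic model and observe that the scalar $\delta$ cancels in both ratios, leaving expressions depending only on the direction $r$ and the Hessians.

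First I would differentiate the quadratic approximations of $L$ and $L_i$ at $w=w^*+\delta r$. Since $\nabla L(w^*)=0$ and $\nabla L_i(w^*)=0$ (the latter by the OP assumption), the expansions give $\nabla L(w) = H(w-w^*) = \delta\, Hr$ and $\nabla L_i(w) = H_i(w-w^*) = \delta\, H_i r$. Statement (1) then follows immediately, as both $\|\nabla L_i(w)\|=\delta\|H_i r\|$ and $\|\nabla L(w)\|=\delta\|Hr\|$, so the ratio equals $\|H_i r\|/\|Hr\|$ independently of $\delta$.

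For statement (2), I would identify $\overline{g}=\nabla L(w)=\delta H r$ and $g_i=\nabla L_i(w)=\delta H_i r$, and note that $\mathbb{E}[g_i]=\overline{g}$ because $L=\mathbb{E}_i L_i$ implies $H=\mathbb{E}_i H_i$. Then I would apply the standard variance identity
\[
\mathbb{E}\|g_i-\overline{g}\|^2 = \mathbb{E}\|g_i\|^2 - \|\overline{g}\|^2 = \delta^2\bigl(\mathbb{E}\|H_i r\|^2 - \|Hr\|^2\bigr).
\]
Substituting into the SNR definition cancels $\delta$ in numerator and denominator, giving exactly $\text{SNR}(w)=\|Hr\|/\sqrt{\mathbb{E}\|H_i r\|^2 - \|Hr\|^2}$.

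There is no real obstacle here; the content of the lemma is the observation that both quantities become scale-free once one restricts to a fixed ray through $w^*$, which is precisely what makes the SNR converge to a non-trivial constant in the OP regime. The only care needed is to justify the mean-gradient identity $\mathbb{E}[g_i]=\overline{g}$ at the Hessian level, which follows from linearity of differentiation and the assumption that $L$ is the mean of the $L_i$.
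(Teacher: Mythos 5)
Your proposal is correct and follows essentially the same route as the paper's proof: compute both gradients from the quadratic model as $\delta Hr$ and $\delta H_i r$, cancel $\delta$, and apply the variance identity $\mathbb{E}\|g_i-\overline{g}\|^2=\mathbb{E}\|g_i\|^2-\|\overline{g}\|^2$. Your explicit justification that $\mathbb{E}[H_i]=H$ (needed for that identity) is a small point the paper leaves implicit, but the argument is otherwise identical.
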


\begin{proof}
The proof is a straightforward application of the quadratic approximation. First, we write out the gradients:
\[
\nabla L(w) = H(w-w^*) = \delta \cdot H r \\
\]
\[
\nabla L_i(w) = H(w-w^*) = \delta \cdot H_i r, \\
\]
from which (1) immediately follows.
By definition of the SNR,
\begin{align*}
    SNR(w) &= \frac{\|\nabla L(w)\|}{\sqrt{\mathbb{E} \|\nabla L_i(w) - \nabla L(w)\|^2}} = \frac{\|\delta \cdot Hr\|}{\sqrt{\mathbb{E} \|\delta \cdot (H_ir - Hr)\|^2}} = \frac{\|Hr\|}{\sqrt{\mathbb{E} \|H_ir - Hr\|^2}} = \\
    &= \frac{\|Hr\|}{\sqrt{\mathbb{E} \|H_ir\|^2 - \|Hr\|^2}}
\end{align*}
\end{proof}

\begin{wrapfigure}{r}{0.3\textwidth}
    \vspace{-0.72cm}
    \begin{center}
        \includegraphics[width=0.27\textwidth]{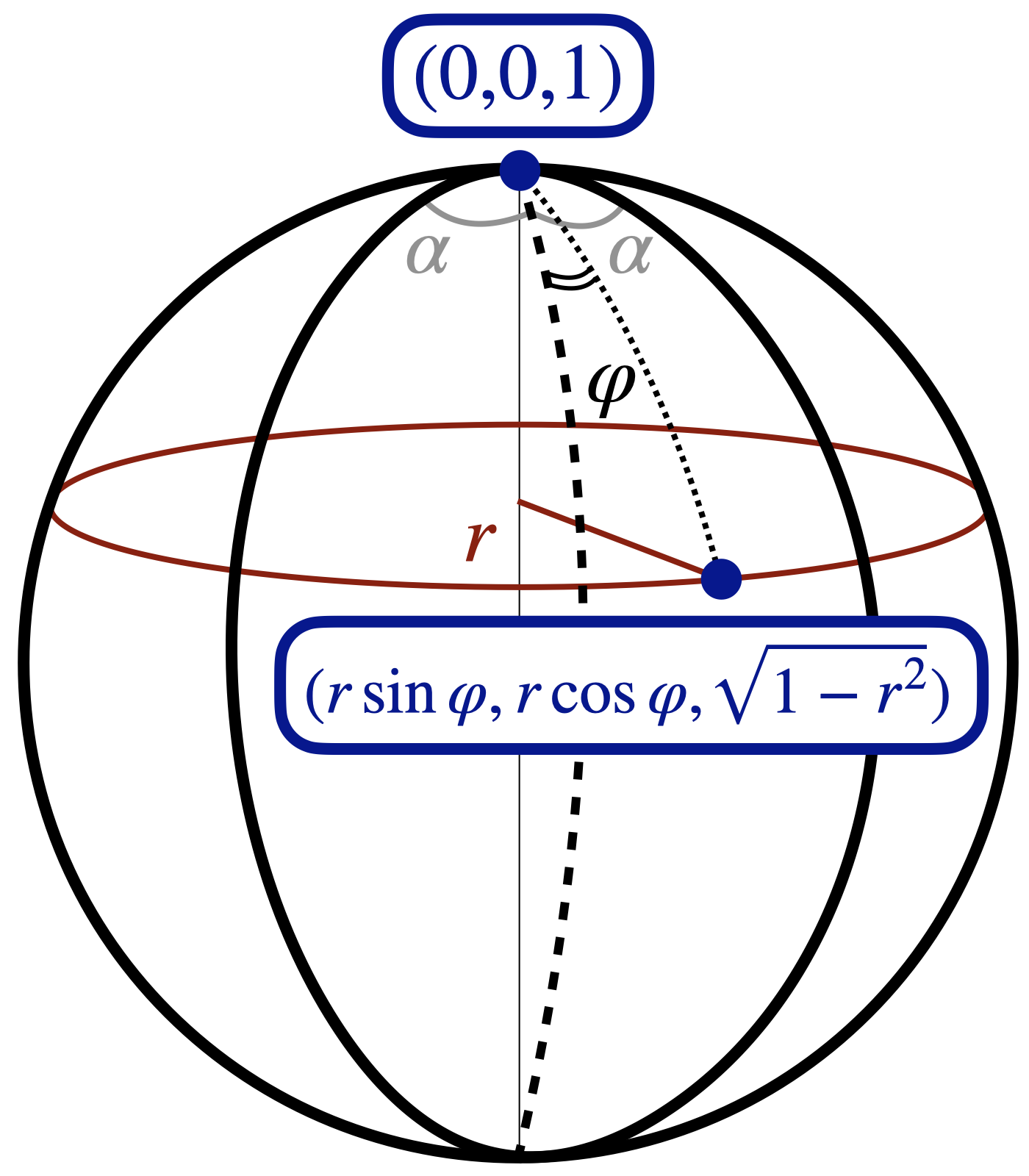}
    \end{center}
    \caption{A scheme of the 3D sphere OP setup, used in proof of Theorem~\ref{th:1}.}
    \label{fig:app_sphere_theory}
\end{wrapfigure}
This lemma implies that full and stochastic gradients decay at the same rate, as they remain
proportional, meaning that the exponent discussed in Section~\ref{sec:snr_nn} should be exactly 1.
Moreover, the asymptotic SNR depends on the direction of approach $r$ toward the optimum and generally varies across directions.
These limiting SNR values are strictly positive, except in degenerate cases where $\|Hr\| = 0$, i.e., when $r$ is a zero-eigenvalue eigenvector of the Hessian.
In our quadratic approximation, this would imply $L(w^* + \delta r) = L(w^*)$, meaning SGD would not progress along $r$, as stochastic gradients vanish.
Thus, meaningful descent occurs only in directions with $\|Hr\| > 0$, ensuring a strictly positive SNR.

Now, we calculate the SNR for our OP 3D sphere toy example (see Section~\ref{sec:toy} and Appendix~\ref{app:sphere_result} for the description of the setup).
Contrary to Lemma~\ref{lemma:1}, here we do not rely on the quadratic approximation near the optimum and instead derive exact values of SNR over the sphere.

\begin{theorem}
\label{th:1}
Consider a 3D unit sphere $x^2 + y^2 + z^2 = 1$ and define two loss functions $f_1$ and $f_2$:
\[
f_1(x, y, z) = \frac{1}{2} \cdot \frac{(x\cos \alpha + y\sin \alpha)^2}{x^2+y^2+z^2}\quad\quad\quad
f_2(x, y, z) = \frac{1}{2} \cdot \frac{(x\cos \alpha - y\sin \alpha)^2}{x^2+y^2+z^2},
\]
where $0 < \alpha < \pi/4$ is a fixed angle. For the full loss $f=(f_1+f_2)/2$, we have:
\begin{enumerate}[label={(\arabic*)}]
    \item $\text{SNR}^2(x,y,z) = \cfrac{x^2\cos^4 \alpha + y^2 \sin^4 \alpha - (x^2 \cos^2 \alpha + y^2 \sin^2 \alpha)^2}{\sin^2 \alpha \cos^2 \alpha (x^2 + y^2 - 4x^2y^2)}$
    \item if we have $x=r\sin\varphi$ and $y=r\cos\varphi$ with $0 < r \le 1$ and $-\alpha \le \varphi \le \alpha$ (see Figure~\ref{fig:app_sphere_theory} for an illustration):
    \begin{enumerate}
        \item for a fixed $r$, the minimum of SNR is achieved at $\varphi = 0$ (i.~e., at the central meridian)
        \item for a fixed $\varphi$ and $r\to0_+$, SNR approaches a constant value from below
    \end{enumerate}
\end{enumerate}
\end{theorem}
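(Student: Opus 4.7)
The plan is to compute the SNR from first principles on the unit sphere --- exploiting scale invariance so that the Euclidean gradient is automatically tangent --- and then reduce the two monotonicity claims in (2) to short polynomial positivity inequalities via a trigonometric change of variables. The proof splits naturally into part (1), which is a direct algebraic identity, and part (2), which is a sign analysis of two partial derivatives of the same compact expression.

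For (1), I would introduce the scalars $u_i := x\cos\alpha + (-1)^{i+1} y\sin\alpha$, $c := x^2\cos^2\alpha + y^2\sin^2\alpha$, and $d := u_1 u_2 = x^2\cos^2\alpha - y^2 \sin^2\alpha$. A quotient-rule computation on $x^2 + y^2 + z^2 = 1$ yields $\nabla f_i = u_i \mathbf{n}_i - u_i^2 \mathbf{w}$, where $\mathbf{n}_i = (\cos\alpha, \pm\sin\alpha, 0)$ is the normal of the $i$-th great circle and $\mathbf{w} = (x,y,z)$. Averaging, together with $u_1 + u_2 = 2x\cos\alpha$, $u_1 - u_2 = 2y\sin\alpha$, and $u_1^2 + u_2^2 = 2c$, gives $\nabla f = (x\cos^2\alpha, y\sin^2\alpha, 0) - c\mathbf{w}$, and hence $\|\nabla f\|^2 = x^2\cos^4\alpha + y^2\sin^4\alpha - c^2$, which is the claimed numerator. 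For the variance, $\|\nabla f_i\|^2 = u_i^2(1 - u_i^2)$, so $\mathbb{E}\|\nabla f_i\|^2 = c - 2c^2 + d^2$; subtracting $\|\nabla f\|^2$ and applying the two collapses $c - x^2\cos^4\alpha - y^2\sin^4\alpha = (x^2+y^2)\sin^2\alpha\cos^2\alpha$ and $d^2 - c^2 = -4x^2y^2\sin^2\alpha\cos^2\alpha$ recovers the denominator.

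For (2), the key step is the substitution $s := r^2$, $X := \cos 2\varphi$, $Y := \cos 2\alpha$, which rewrites the SNR squared compactly as
\[
\text{SNR}^2 = \frac{(1-s)(1-2XY) + Y^2(1 - sX^2)}{(1-Y^2)\bigl(1 - s + sX^2\bigr)},
\]
with admissible region $1 \ge X \ge Y > 0$ reflecting $|\varphi| \le \alpha < \pi/4$. For (2a), differentiating in $X$ at fixed $(s, Y)$, the numerator of the derivative equals the negative of the quadratic in $\tau := 1 - s$ given by $Y\tau^2 + sX(1 + Y^2 - XY)\tau + sXY^2$; all three coefficients are manifestly non-negative on $[0,1]^3$ (since $1 + Y^2 - XY \ge 1 - XY \ge 0$), so $\text{SNR}^2$ is decreasing in $X$ on $[Y, 1]$ and its minimum over $\varphi$ at fixed $r$ is attained at $X = 1$, i.e., $\varphi = 0$. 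For (2b), differentiating in $s$ at fixed $X$ gives a derivative whose sign is controlled by the factorization $-\tfrac{1}{4}(X - Y)(X + Y - 2X^2 Y)$. The first factor is non-negative by the constraint; for the second, a case split on whether $X^2 \le 1/2$ or $X^2 > 1/2$ (using $Y \le X \le 1$ and $Y < 1$ in the latter case) yields strict positivity in the interior. Hence $\text{SNR}^2$ is decreasing in $s$, so as $s \to 0_+$ it increases toward the limit $(1 + Y^2 - 2XY)/(1 - Y^2)$ from below.

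The main obstacle is algebraic: working directly in $(x, y, z)$ produces mixed-sign multivariate polynomials for which monotonicity is hard to read off. The crux is finding the trigonometric substitution above, which simultaneously exposes the clean factorization for (2b) and the quadratic-in-$\tau$ structure with non-negative coefficients for (2a), so that both monotonicity claims reduce to one-line positivity checks.
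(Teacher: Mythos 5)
Your proposal is correct. Part (1) is essentially the computation in the paper: both derive $\nabla f_i = u_i\mathbf{n}_i - u_i^2\mathbf{w}$ from scale invariance on the unit sphere and reduce the numerator and denominator to the same quadratic forms in $x,y$ (you route the variance through $\mathbb{E}\|g_i\|^2-\|\bar g\|^2$ while the paper uses the two-sample identity $\mathrm{Var}=\|(g_1-g_2)/2\|^2$, a cosmetic difference). Part (2) is where you genuinely diverge, and your route is cleaner. The paper works in the variables $S=\sin^2\varphi$, $R=\sin^2\alpha$: for (2a) it shows the numerator is a downward parabola in $S$ whose vertex lies beyond $\sin^2\alpha$ and separately that the denominator is maximized at $\varphi=0$, so it only establishes endpoint minimality; for (2b) it differentiates in $r^2$, observes the $r$-dependence cancels, and then relies on a \texttt{sympy}-assisted factorization $(S-R)(8RS^2-8RS+R-4S^2+3S)$ followed by a linearity-in-$R$ sign check. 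Your double-angle substitution $X=\cos 2\varphi$, $Y=\cos 2\alpha$, $s=r^2$ buys two things: in (2a) you get strict monotonicity of $\mathrm{SNR}^2$ in $X$ on $[Y,1]$ (I verified the derivative's numerator is indeed $-\bigl(Y\tau^2+sX(1+Y^2-XY)\tau+sXY^2\bigr)$ with $\tau=1-s$, all coefficients nonnegative and not all zero), which is strictly stronger than the paper's endpoint claim; and in (2b) the derivative in $s$ factors by hand as a positive constant times $-(X-Y)(X+Y-2X^2Y)$, replacing the computer-algebra step with a two-line case split. Both proofs share the same degenerate edge case at $\varphi=\pm\alpha$, where the derivative in $s$ vanishes and the SNR is constant rather than strictly increasing toward its limit; neither treatment is harmed by this, but it is worth flagging the "from below" in (2b) as non-strict there. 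Your limit value $(1+Y^2-2XY)/(1-Y^2)$ correctly reduces to the paper's $\tan^2\alpha$ at $\varphi=0$.
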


\begin{proof}
(1). The proof is mostly technical and requires straightforward derivation of the gradients and SNR.
First, denote $v=(x, y, z)^T$, $A_1 = x\cos\alpha + y\sin\alpha$, and $A_2 = x\cos\alpha - y\sin\alpha$. The functions $f_1$ and $f_2$ can be shortened to:
\[
f_1 = \frac{A_1^2}{2\|v\|^2} \quad\quad\quad f_2 = \frac{A_2^2}{2\|v\|^2} 
\]
Now, we derive their gradients:
\begin{align*}
    g_1 &:= \nabla f_1 = \frac{A_1 \|v\|^2 \cdot \nabla A_1 - A_1^2 \cdot v}{\|v\|^4} = \{\|v\| = 1\} = A_1 \cdot \nabla A_1 - A_1^2 \cdot v \\
    g_2 &:= \nabla f_2 = \frac{A_2 \|v\|^2 \cdot \nabla A_2 - A_2^2 \cdot v}{\|v\|^4} = \{\|v\| = 1\} = A_2 \cdot \nabla A_2 - A_2^2 \cdot v,
\end{align*}
where $\nabla A_1 = (\cos \alpha, \sin \alpha, 0)^T$ and $\nabla A_2 = (\cos \alpha, -\sin \alpha, 0)^T$. We will also require several auxiliary quantities:
\begin{gather*}
    A_1 + A_2 = 2x\cos \alpha \quad\quad\quad A_1 - A_2 = 2y\sin \alpha \\
    A_1^2 - A_2^2 = (A_1 - A_2)(A_1 + A_2) = 4xy \sin\alpha\cos\alpha \\
    A_1^2 + A_2^2 = \frac{1}{2} \left((A_1 + A_2)^2 + (A_1 - A_2)^2\right) = 2x^2\cos^2\alpha + 2y^2\sin^2\alpha =: 2B
\end{gather*}
Next, we the express the SNR via $g_1$ and $g_2$:
\begin{align*}
    \text{SNR} &= \frac{\|(g_1 + g_2)/2\|}{\sqrt{0.5 \cdot \|g_1 - (g_1 + g_2)/2\|^2 + 0.5 \cdot \|g_2 - (g_1 + g_2)/2\|^2}} = \\
    &= \frac{\|(g_1 + g_2)/2\|}{\sqrt{0.5 \cdot \|(g_1 - g_2)/2\|^2 + 0.5 \cdot \|(g_2 - g_1)/2\|^2}} = \frac{\|(g_1 + g_2)/2\|}{\sqrt{\|(g_1 - g_2)/2\|^2}} = \frac{\|g_1 + g_2\|}{\|g_1 - g_2\|}
\end{align*}
The sum of gradients is:
\begin{align*}
    g_1 + g_2 &= A_1 \nabla A_1 + A_2 \nabla A_2 - (A_1^2 + A_2^2) v = \\ 
    &=
    \begin{pmatrix}
    x\cos^2\alpha + y\sin\alpha\cos\alpha \\
    x\cos\sin\alpha + y\sin^2 \alpha \\
    0
    \end{pmatrix} + 
    \begin{pmatrix}
    x\cos^2\alpha - y\sin\alpha\cos\alpha \\
    -x\cos\sin\alpha + y\sin^2 \alpha \\
    0
    \end{pmatrix} - 2Bv = \\
    &= \underbrace{2 \begin{pmatrix}
    x\cos^2\alpha \\
    y\sin^2 \alpha \\
    0
    \end{pmatrix}}_{=:2u} - 2Bv = 2(u - Bv)
\end{align*}
The difference of gradients is:
\begin{align*}
    g_1 - g_2 &= A_1 \nabla A_1 - A_2 \nabla A_2 - (A_1^2 - A_2^2) v = \\ 
    &=
    \begin{pmatrix}
    x\cos^2\alpha + y\sin\alpha\cos\alpha \\
    x\sin\alpha\cos\alpha + y\sin^2 \alpha \\
    0
    \end{pmatrix} - 
    \begin{pmatrix}
    x\cos^2\alpha - y\sin\alpha\cos\alpha \\
    -x\sin\alpha\cos\alpha + y\sin^2 \alpha \\
    0
    \end{pmatrix} - 4xy\sin\alpha\cos\alpha \cdot v = \\
    &= \underbrace{2 \begin{pmatrix}
    y\sin\alpha\cos\alpha \\
    x\sin\alpha\cos\alpha \\
    0
    \end{pmatrix}}_{=: 2\sin\alpha\cos\alpha \cdot w} - 4xy\sin\alpha\cos\alpha \cdot v = 2 \sin\alpha\cos\alpha (w - 2xy \cdot v)
\end{align*}
Now, we derive the squared norms:
\begin{align*}
    \|g_1 + g_2\|^2 &= 4\Big(\|u\|^2 + B^2 \|v\|^2 - 2B\langle u, v\rangle\Big) = 4\Big(x^2\cos^4\alpha + y^2\sin^4 \alpha + B^2 - 2B \cdot B \Big) = \\
    &= 4 \Big(x^2\cos^4\alpha + y^2\sin^4 \alpha - B^2 \Big) \\\\
    \|g_1-g_2\|^2 &=  4 \sin^2 \alpha \cos^2 \alpha \Big(\|w\|^2 + 4x^2y^2 \|v\|^2 - 4xy \langle w, v\rangle\Big) = \\
    &= 4 \sin^2 \alpha \cos^2 \alpha \Big(x^2 + y^2 + 4x^2y^2 - 4xy \cdot 2xy\Big) = \\
    &= 4 \sin^2 \alpha \cos^2 \alpha \Big(x^2 + y^2 - 4x^2y^2\Big)
\end{align*}
Dividing the first by the second and substituting $B$, we get:
\[
\text{SNR}^2(x,y,z) = \frac{\|g_1 + g_2\|^2}{\|g_1 - g_2\|^2} = \frac{x^2\cos^4\alpha + y^2\sin^4 \alpha - (x^2\cos^2\alpha + y^2\sin^2\alpha)^2}{\sin^2\alpha\cos^2\alpha (x^2 + y^2 - 4x^2y^2)}
\]
The gradients of scale-invariant function are orthogonal to the radial direction, so the ratio between their norms can be expressed via $x$ and $y$ and thus SNR does not depend on $z$.

(2). The next step is to plug in the polar parameterization $x=r\sin\varphi, y=r\cos\varphi$ to analyze the behavior of SNR w.r.t $r$ and $\varphi$:
\begin{align*}
    SNR^2(r, \varphi) &= \frac{r^2\sin^2\varphi \cos^4 \alpha +r^2\cos^2\varphi\sin^4\alpha - (r^2 \sin^2\varphi\cos^2\alpha + r^2\cos^2\varphi\sin^2\alpha)^2}{\sin^2\alpha\cos^2\alpha (r^2-4r^4\sin^2\varphi\cos^2\varphi)} \\
    &= \frac{\sin^2\varphi \cos^4 \alpha + \cos^2\varphi\sin^4\alpha - r^2 (\sin^2\varphi\cos^2\alpha + \cos^2\varphi\sin^2\alpha)^2}{\sin^2\alpha\cos^2\alpha (1-4r^2\sin^2\varphi\cos^2\varphi)} 
\end{align*}

(2a). We want to find the minimum of SNR over $\varphi$ for a fixed $r$.
As we have $0 \le \varphi \le \alpha < \pi/4$, the maximum of the denominator is achieved at $\varphi = 0$ and is equal to $\sin^2\alpha \cos^2\alpha$.
Let us show that $\varphi=0$ also minimizes the nominator. 
Define $S:=\sin^2\varphi$ so that $0\le S \le \sin^2\alpha$ and $1-S=\cos^2\varphi$:
\begin{align*}
    &S\cos^4\alpha + (1-S)\sin^4\alpha - r^2(S\cos^2\alpha + (1-S)\sin^2\alpha)^2 = \\
    &= \sin^4\alpha + S(\cos^4\alpha - \sin^4\alpha) - r^2(\sin^2\alpha + S\big(\cos^2\alpha - \sin^2\alpha)\big)^2 = \\
    &= \big\{\cos^4\alpha - \sin^4\alpha = \cos^2\alpha - \sin^2\alpha =: \Delta \big\} =
    \sin^4\alpha + S\Delta - r^2(\sin^2\alpha + S\Delta\big)^2 = \\
    &= (1-r^2)\sin^4\alpha + S(1-2r^2\sin^2\alpha)\Delta - S^2r^2\Delta^2
\end{align*}
This is a parabola with the negative leading coefficient.
Thus, its minimum is achieved at one of the two boundaries.
We show that the maximum point $S_m > \sin^2\alpha$, so the minimum is reached at $S=0$ (and thus $\varphi=0$):
\begin{align*}
    S_m = \frac{(1-2r^2\sin^2\alpha)\Delta}{2r^2\Delta^2} = \frac{1}{\Delta} \left(\frac{1}{2r^2} - \sin^2\alpha\right) > \sin^2\alpha 
\end{align*}
We take the $\inf_{0\le r \le 1} \{S_m\} = (1/2-\sin^2\alpha)/\Delta$ and substitute $\Delta = \cos^2\alpha-\sin^2\alpha = 1-2\sin^2\alpha$:
\begin{gather*}
    \frac{1}{2} - \sin^2\alpha > \Delta \cdot \sin^2\alpha = \sin^2\alpha - 2\sin^4\alpha \\
    2\sin^4\alpha - 2 \sin^2\alpha + \frac{1}{2} = 2\left(\sin^2\alpha - \frac{1}{2}\right)^2 > 0
\end{gather*}
Therefore, both the nominator achieves its minimum and the denominator achieves its maximum at $\varphi=0$, giving us the minimum of the squared SNR, which is equal to:
\[
\text{SNR}^2(r, \varphi=0) = (1-r^2)\tan^2\alpha
\]
Note that this value is strictly positive for $r < 1$.

(2b). Let us represent the squared SNR as a function of $r^2$. Define:
\begin{align*}
    M &= \sin^2\varphi \cos^4 \alpha + \cos^2\varphi\sin^4\alpha \\
    P &= (\sin^2\varphi\cos^2\alpha + \cos^2\varphi\sin^2\alpha)^2 \\
    Q &= 4 \sin^2\varphi\cos^2\varphi 
\end{align*}
Omitting positive constants in the denominator, the squared SNR is:
\[
SNR^2(r^2) \propto \frac{M-Pr^2}{1 - Qr^2}
\]
We need to show that this function decreases monotonically for $0 < r^2 < 1$, as this would mean that for $r\to 0_+$ there is ascending convergence to the limit value.
Let us calculate the derivative over $r^2$:
\[
\frac{d(\text{SNR}^2)}{d(r^2)} = \frac{-P(1 - Qr^2)+Q(M-Pr^2)}{(1-Qr^2)^2} = \frac{MQ-P}{(1-Qr^2)^2}
\]
Thus, it is sufficient to show that $MQ-P \le 0$ for $0 \le \varphi \le \alpha < \pi/4 $.
Denote $S=\sin^2\varphi, R=\sin^2\alpha$ with $0 \le S \le R < 1/2$. The quantities $M, P, Q$ could be expressed as:
\begin{align*}
    M &= S(1-R)^2 + (1-S)R^2 \\
    P &= \big(S(1-R) + (1-S)R\big)^2 \\
    Q &= 4S(1-S)
\end{align*}
We use the \texttt{sympy}\footnote{https://docs.sympy.org/latest/index.html} Python library designed for symbolic calculations to expand the brackets in the expression $MP-Q$ and then factor the result.
The notebook \texttt{code/toy/theorem1.ipynb} for reproducing this step is included into supplementary materials.
The resulting factorization is:
\begin{align*}
    MP-Q &= (S-R)(8RS^2-8RS+R-4S^2+3S)
\end{align*}
As $S\le R$, the first bracket is non-positive. It is only left to check that the second bracket is non-negative. For a fixed $S$, the second bracket is a linear function w.r.t $R$, so we can check that the values at the boundaries $R=0$ and $R=1/2$ are non-negative:
\begin{align*}
    R = 0 &\Rightarrow -4S^2+3S = 4S\underbrace{(1/2 - S)}_{\ge 0} + \underbrace{S}_{\ge 0} \ge 0 \\
    R = 1/2 &\Rightarrow 4S^2 - 4S + 1/2 - 4S^2 + 3S = 1/2 - S \ge 0,
\end{align*}
which proves the claim (2b).
\end{proof}

\section{Additional results for spherical toy example}
\label{app:sphere_result}

\begin{figure}
    \centering
    \newcommand{\specialcell}[2][c]{%
    \begin{tabular}[#1]{@{}c@{}}#2\end{tabular}}
    \renewcommand{\arraystretch}{1.2}

    \begin{tabular}{cccc}
        loss & SNR & \specialcell{full \\ grad. norm} & \specialcell{mean stoch.\\grad norm} \\
        \hline\multicolumn{4}{c}{Underparameterized (UP)}\\ \hline
        \includegraphics[width=0.22\textwidth]{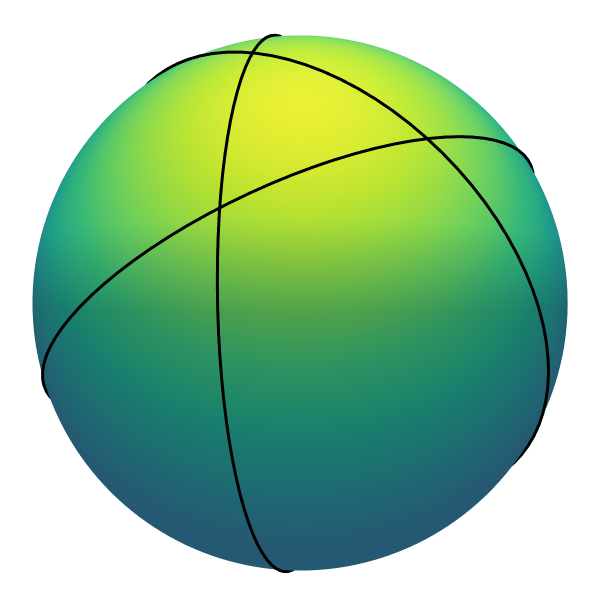} &
        \includegraphics[width=0.22\textwidth]{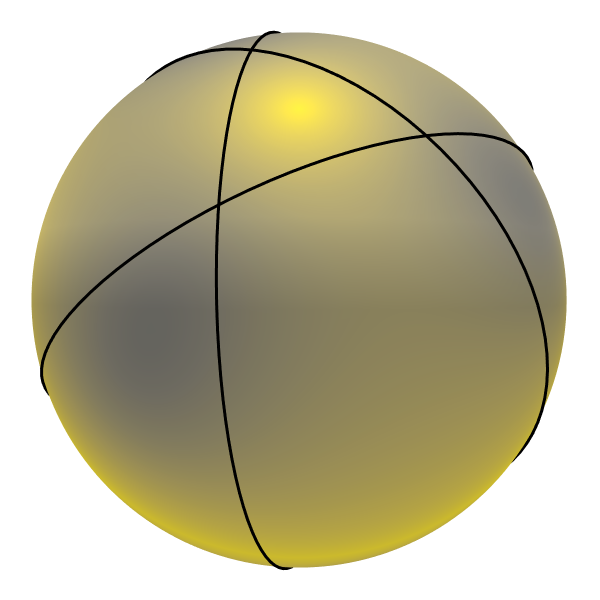} &
        \includegraphics[width=0.22\textwidth]{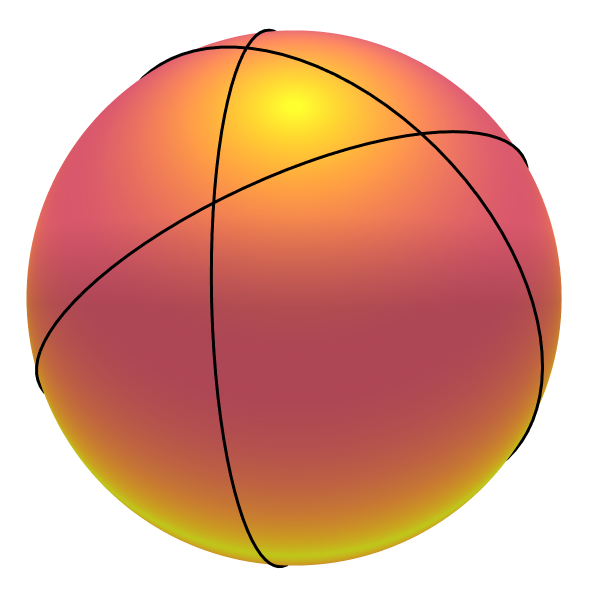} &
        \includegraphics[width=0.22\textwidth]{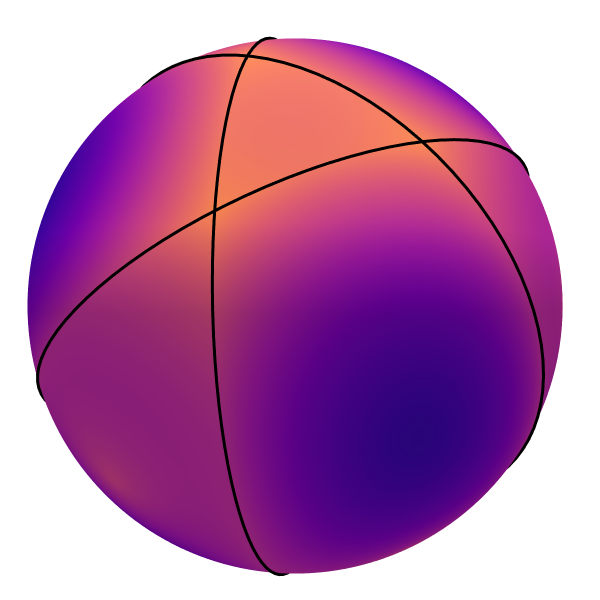} \\
        \hline\multicolumn{4}{c}{Overparameterized (OP)}\\ \hline
        \includegraphics[width=0.22\textwidth]{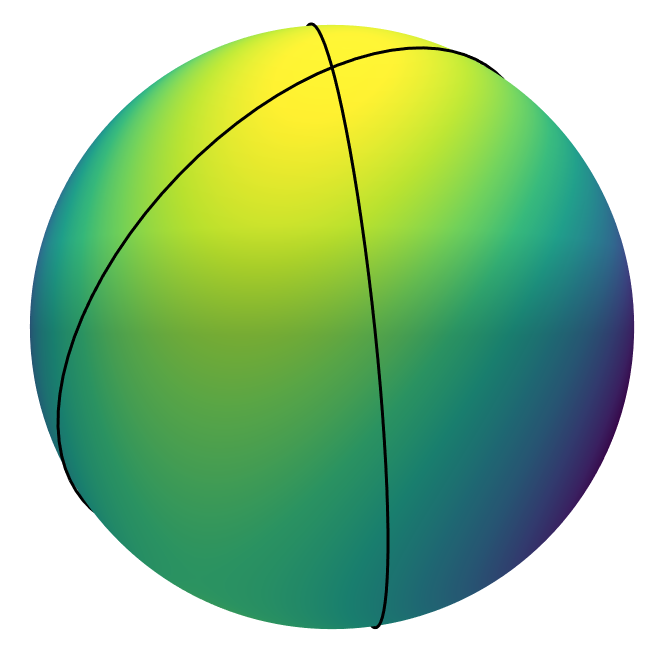} &
        \includegraphics[width=0.22\textwidth]{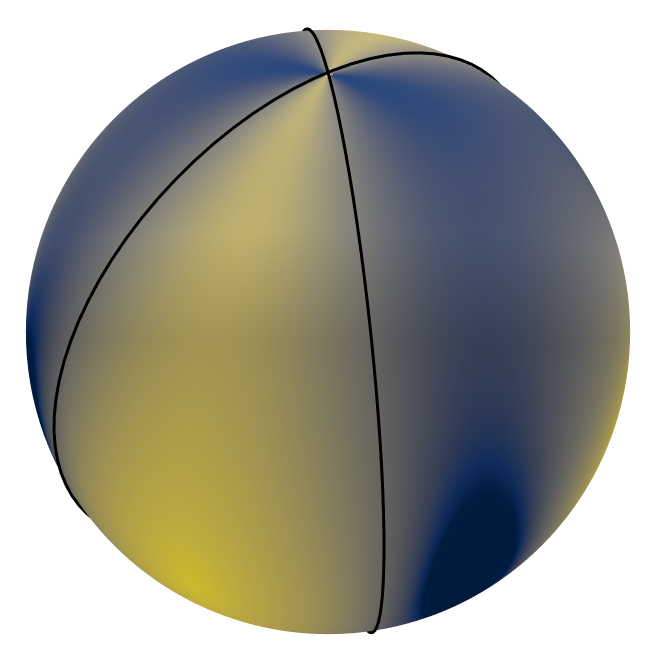} &
        \includegraphics[width=0.22\textwidth]{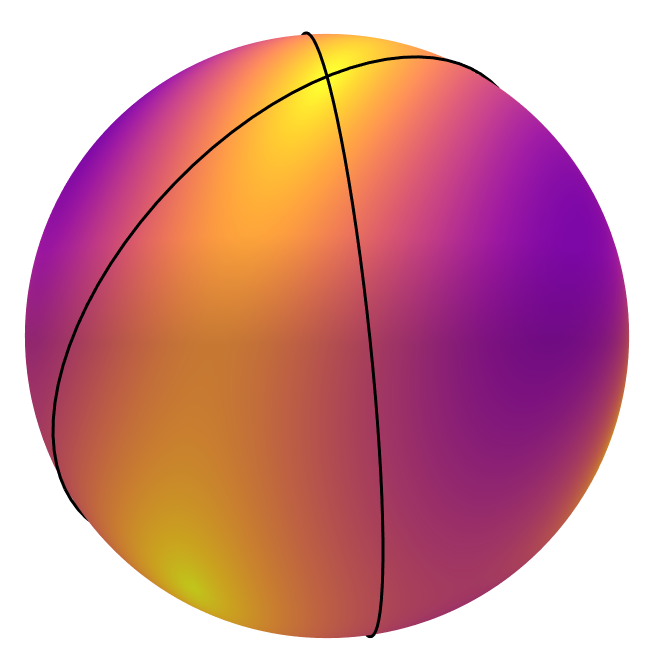} &
        \includegraphics[width=0.22\textwidth]{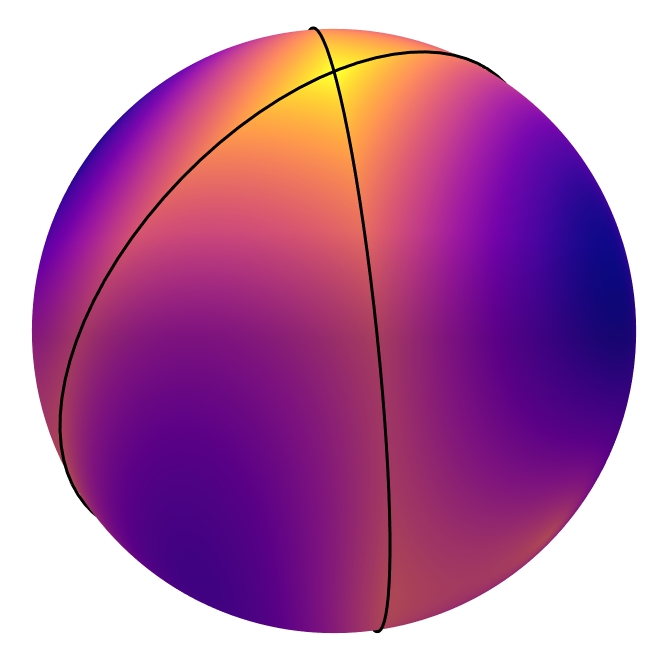} \\
        \hline
        \includegraphics[width=0.22\textwidth]{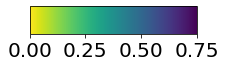} & 
        \includegraphics[width=0.22\textwidth]{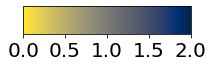} & 
        \multicolumn{2}{c}{\includegraphics[width=0.38\textwidth]{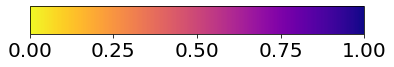}} \\ 
    \end{tabular}
    \caption{Loss, SNR, full gradient norm and mean stochastic gradient norm of UP and OP setups for 3D toy example.}
    \label{fig:app_sphere_metrics}
\end{figure}

\begin{figure}
    \centering
    \newcommand{\specialcell}[2][c]{%
    \begin{tabular}[#1]{@{}c@{}}#2\end{tabular}}
    \renewcommand{\arraystretch}{1.2}

    \begin{tabular}{cccc}
        \toprule\multicolumn{4}{c}{Underparameterized (UP)}\\ \midrule
        \includegraphics[width=0.22\textwidth]{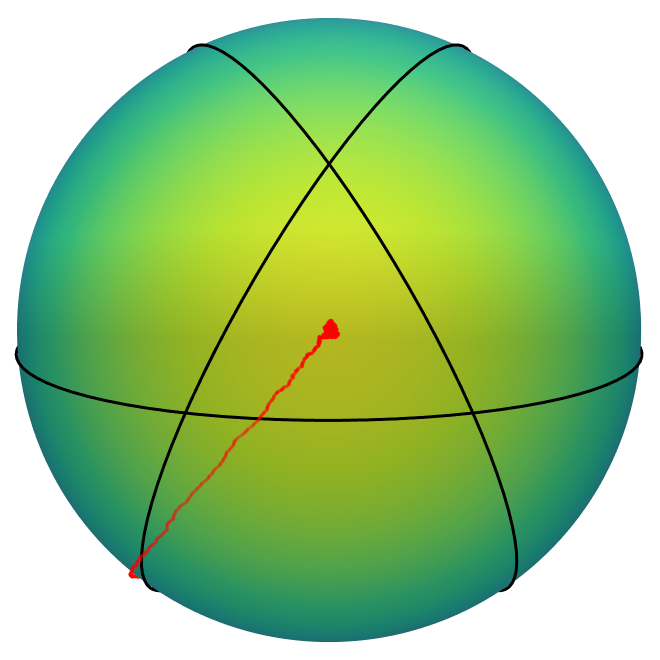} &
        \includegraphics[width=0.22\textwidth]{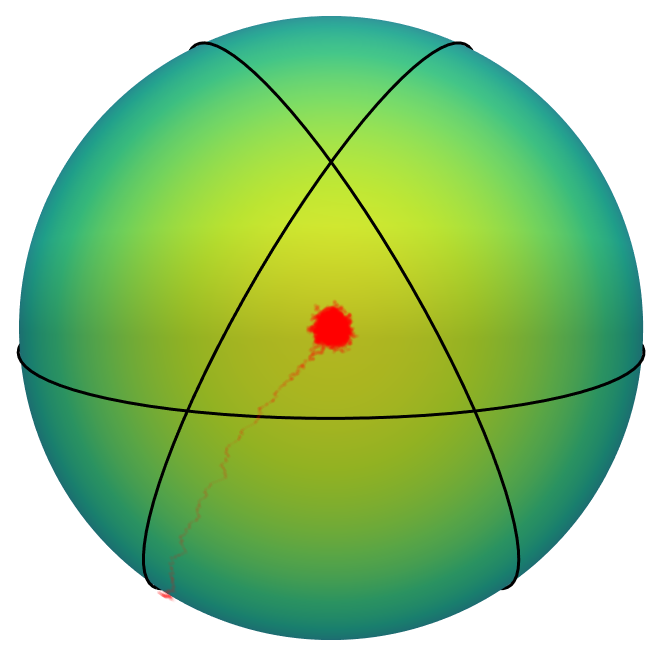} &
        \includegraphics[width=0.22\textwidth]{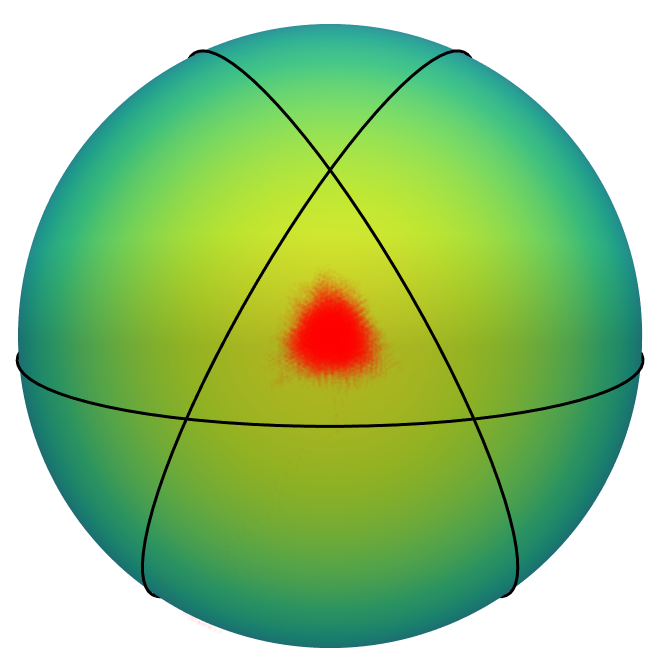} &
        \includegraphics[width=0.22\textwidth]{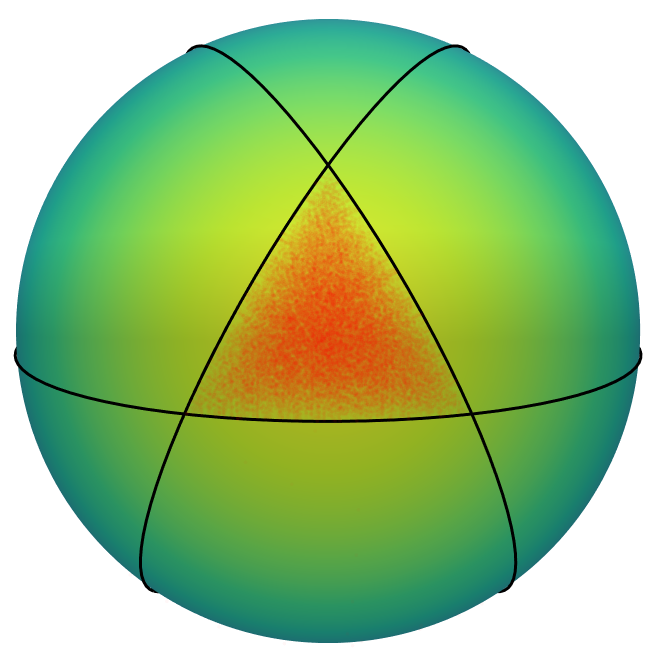} \\
        $\text{LR} = 10^{-3}$ & $\text{LR} = 6\cdot10^{-3}$ & $\text{LR} = 3.5\cdot10^{-2}$ & $\text{LR} = 2.1\cdot10^{-1}$ \\ 
        \midrule \multicolumn{4}{c}{Overparameterized (OP)} \\ \midrule \\
        \includegraphics[width=0.22\textwidth]{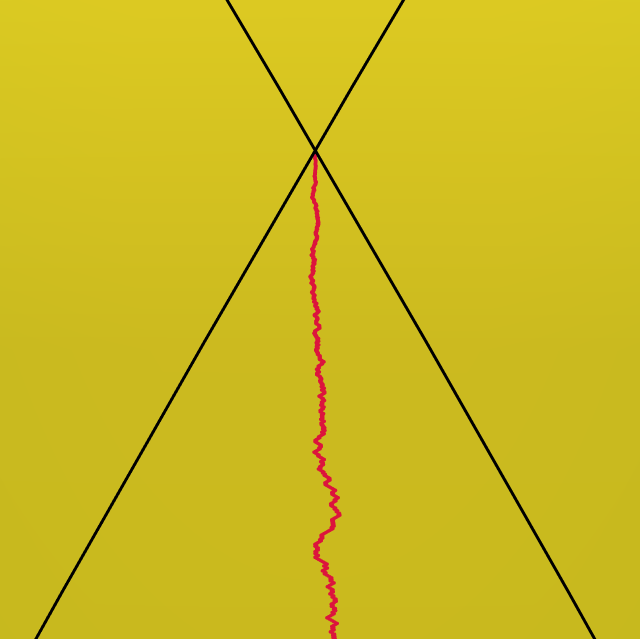} &
        \includegraphics[width=0.22\textwidth]{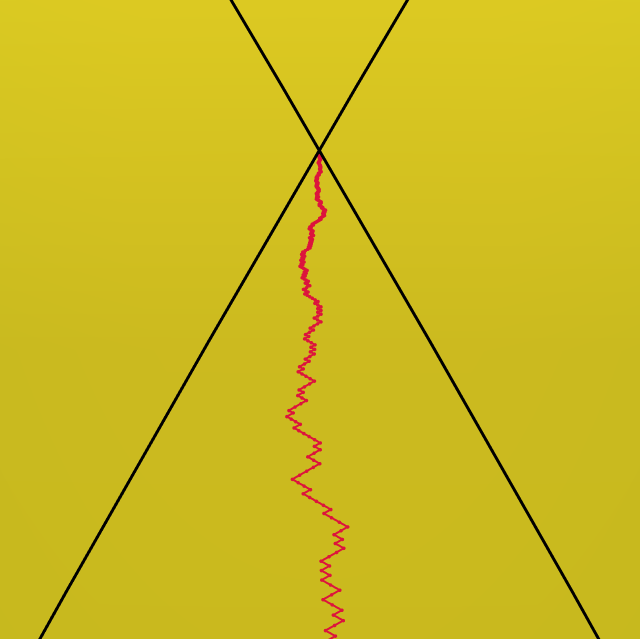} &
        \includegraphics[width=0.22\textwidth]{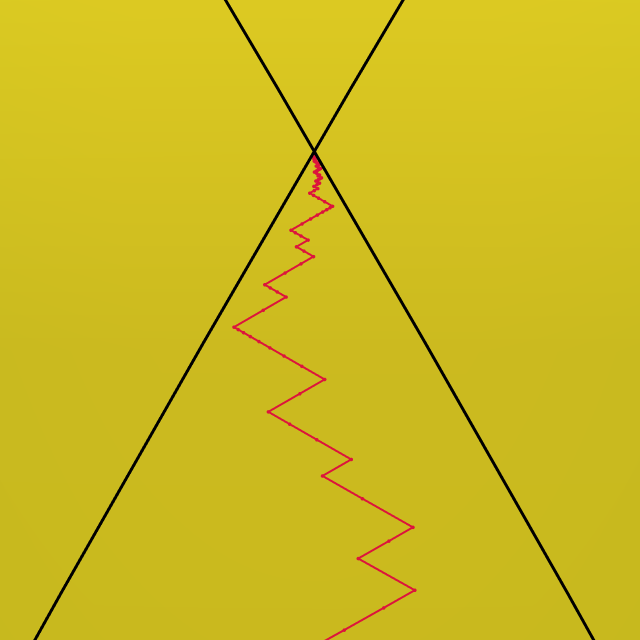} &
        \includegraphics[width=0.22\textwidth]{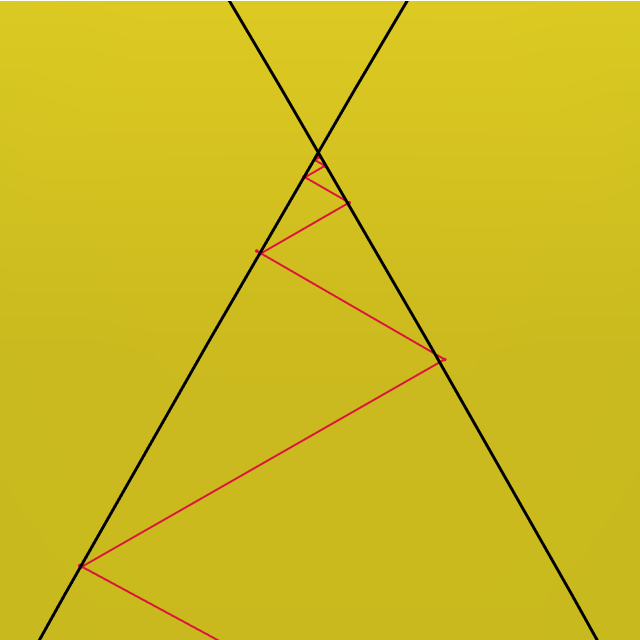} \\
        $\text{LR} = 4.8\cdot10^{-3}$ & $\text{LR} = 2.3\cdot10^{-2}$ & $\text{LR} = 1.1\cdot10^{-1}$ & $\text{LR} = 5.1\cdot10^{-1}$ \\ 
        \midrule
        \multicolumn{2}{c}{\includegraphics[width=0.38\textwidth]{figures/sphere_legend/sphere_colorbar.png}} & 
        \multicolumn{2}{c}{\includegraphics[width=0.38\textwidth]{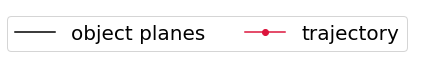}} \\ 
        \bottomrule
    \end{tabular}
    \caption{Stationary distribution for UP and convergence to optimum for OP for different LRs. For the OP setup in the bottom row, we zoom in closer to the minimum to compare the convergence dynamics.}
    \label{fig:app_sphere_stationary}
\end{figure}

 \paragraph{Experimental setup} 
For the OP configuration, we use two normal vectors: $(\sqrt{3}/2, 1/2, 0)$ and $(\sqrt{3}/2, -1/2, 0)$.
This setup yields a minimum at the point $(0, 0, 1)$ on the unit sphere.
For the UP configuration, we also place the minimum at $(0, 0, 1)$, and use three great circles that (1) are equidistant from the minimum, and (2) have equal pairwise angles between their normal vectors.
These conditions result in the normal vectors $(1, 0, 0.2)$, $(-1/2, \sqrt{3}/2, 0.2)$, and $(-1/2, -\sqrt{3}/2, 0.2)$.
Their $z$-coordinate controls the distance to the minimum.
These vectors are then normalized to unit length.
We train both setups using projected SGD for $50$K iterations, or until the loss in the OP setup falls below $10^{-16}$.
At each iteration, one object is sampled independently for stochastic gradient estimation.

\paragraph{Results} 
In Figure~\ref{fig:app_sphere_metrics}, we show the loss, SNR, and gradient norms over the sphere for both configurations.
In the UP setup, the SNR is exactly zero at the minimum and varies continuously in its neighborhood.
In contrast, the OP setup exhibits undefined SNR at the minimum, and the limiting value depends on the direction (meridian) from which the minimum is approached, as proven in Theorem~\ref{th:1}.
Although the full gradient is zero at the minimum in both setups, the behavior of stochastic gradients differs substantially.
In the UP setup, the mean norm of stochastic gradients remains positive and almost constant, with a slight decrease close to the great circles.
In the OP setup, the mean norm is exactly zero at the minimum and is continuous in its vicinity.

Figure~\ref{fig:app_sphere_stationary} illustrates the behavior of SGD under different LRs.
This hyperparameter significantly affects the dynamics in both setups.
In the UP configuration, a small ($\text{LR} = 10^{-3}$) causes the trajectory to fluctuate very close to the minimum.
Increasing the LR leads to wider oscillations and a broader stationary distribution.
In the OP configuration, all considered LRs converge directly to the minimum, but smaller LRs tend to follow paths close to the central meridian.
Together with Theorem~\ref{th:1}, this explains why higher LRs result in larger asymptotic SNR values—they cause the iterates to oscillate farther from the central meridian.

\section{Additional results on SNR}
\label{app:snr_results}

\begin{wrapfigure}{r}{0.365\textwidth}
    \vspace{-0.6cm}
    \begin{center}
    \begin{tabular}{c}
    ResNet-18 on CIFAR-10 \\
    \includegraphics[width=0.33\textwidth]{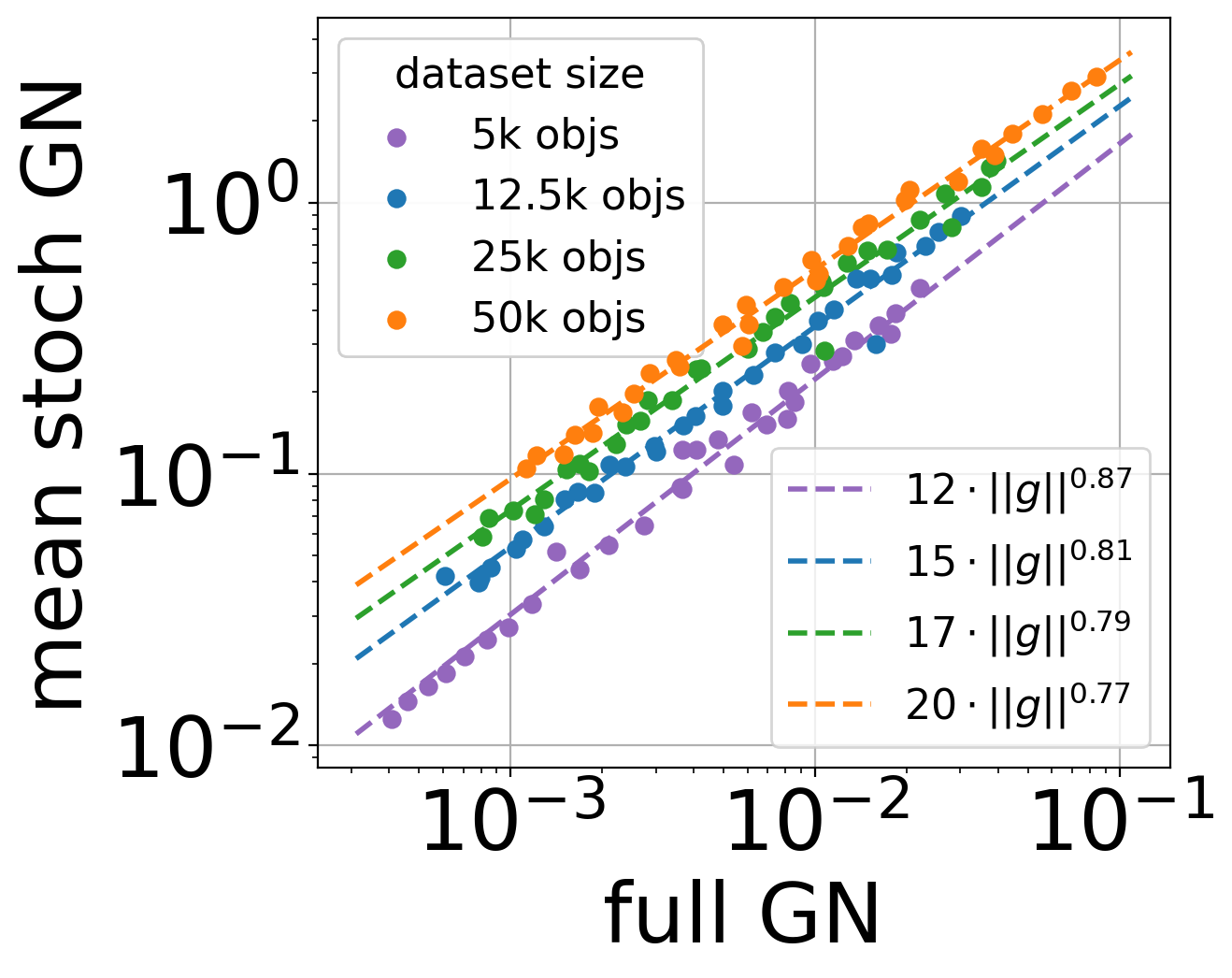}
    \end{tabular}
    \end{center}
    \caption{Phase diagram of mean norm of stochastic grad. vs. full grad. norm for OP ResNet on CIFAR-10 subsets of different size. For all four datasets we use LR equal to $2.1 \cdot 10^{-5}$. Dashed lines show power law approximation. This figure complements Figure~\ref{fig:grad_phase} from the main text.}
    \label{fig:app_grad_phase}
\end{wrapfigure}
In this section, we present supplementary results on the behavior of stochastic gradients in neural networks. 
Figures~\ref{fig:app_snr_resnet} and \ref{fig:app_snr_convnet} show the evolution of the full gradient norm, mean stochastic gradient norm, and SNR during training, complementing Figure~\ref{fig:snr_convnet} from the main text.

In the UP setting, all three metrics stabilize, consistent with the ConvNet CIFAR-10 experiment. 
However, prior to stabilization, both gradient norms grow rapidly, suggesting that their stationary distributions are concentrated in regions of high sharpness.
Notably, in CIFAR-100 experiments (for both ResNet-18 and ConvNet), SNR also increases before leveling off.
Overall, training with smaller LRs results in higher stationary gradient norms but lower stationary SNR values.

In contrast, the OP setting exhibits a steady decline in both full and stochastic gradient norms throughout training.
In ResNet-18 on CIFAR-10 and ConvNet on CIFAR-100, this decline is mirrored by a decreasing SNR.
However, a different pattern emerges in the ResNet-18 CIFAR-100 experiment, where we use a wider model ($k=48$).
Here, SNR stabilizes even as both gradient norms continue to decay, supporting our claim that higher degrees of overparameterization eventually lead to stable SNR levels.
These stationary SNR values increase with the LR, consistent with the 3D sphere toy example in Figure~\ref{fig:sphere_snr}.

We also extend the results of Figure~\ref{fig:grad_phase} by varying the CIFAR-10 subset size for ResNet-18.
The corresponding gradient phase diagram is shown in Figure~\ref{fig:app_grad_phase}.
As in the main text, the power-law exponent increases with smaller subsets, indicating that greater overparameterization improves coherence between the decay of full and stochastic gradients.

\begin{figure}
\centering
    \addtolength{\tabcolsep}{-0.4em}
    \begin{tabular}{ccc}
        \multicolumn{3}{c}{UP ResNet-18 on CIFAR-10} \\
        \includegraphics[width=0.31\textwidth]{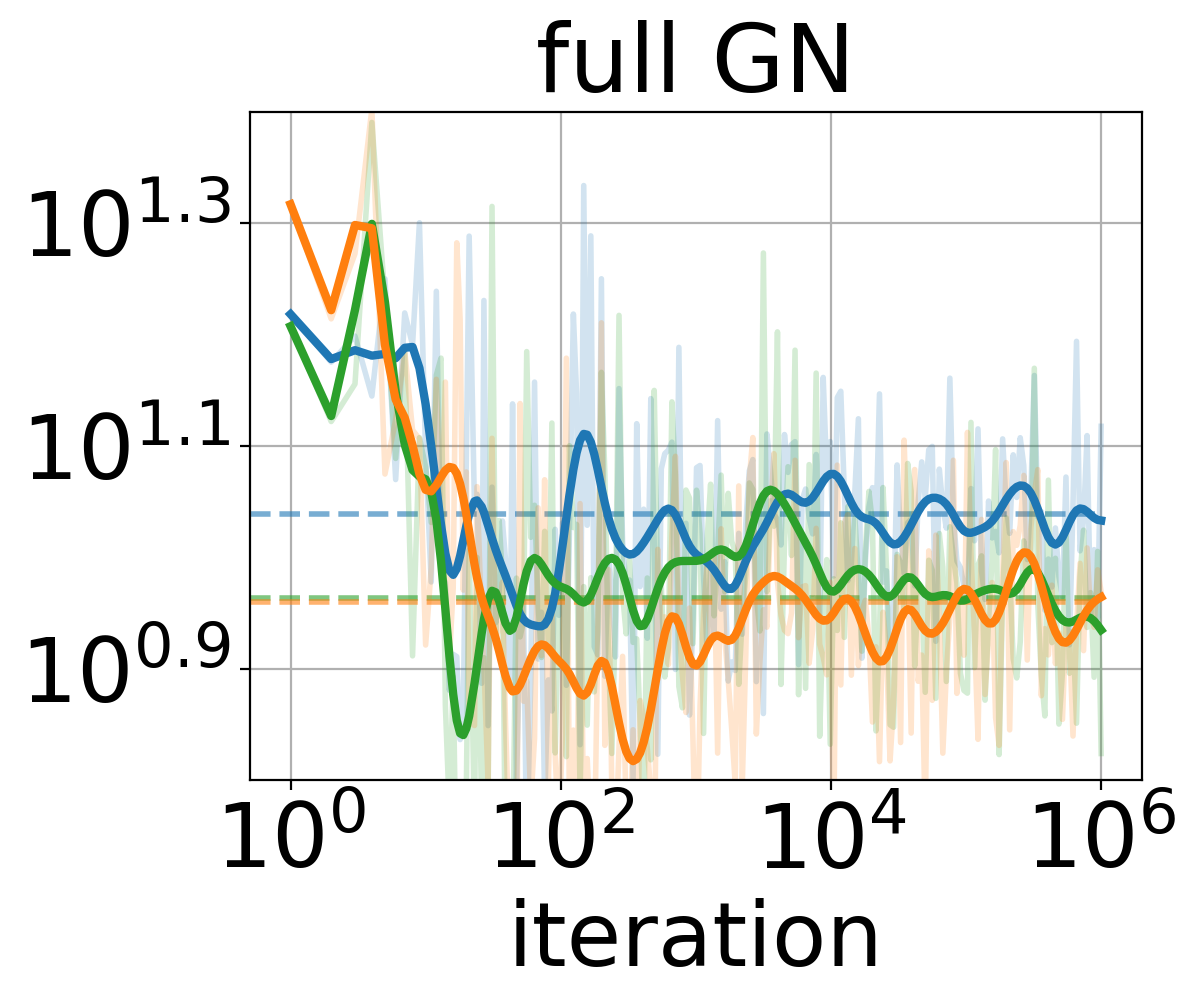} &
        \includegraphics[width=0.31\textwidth]{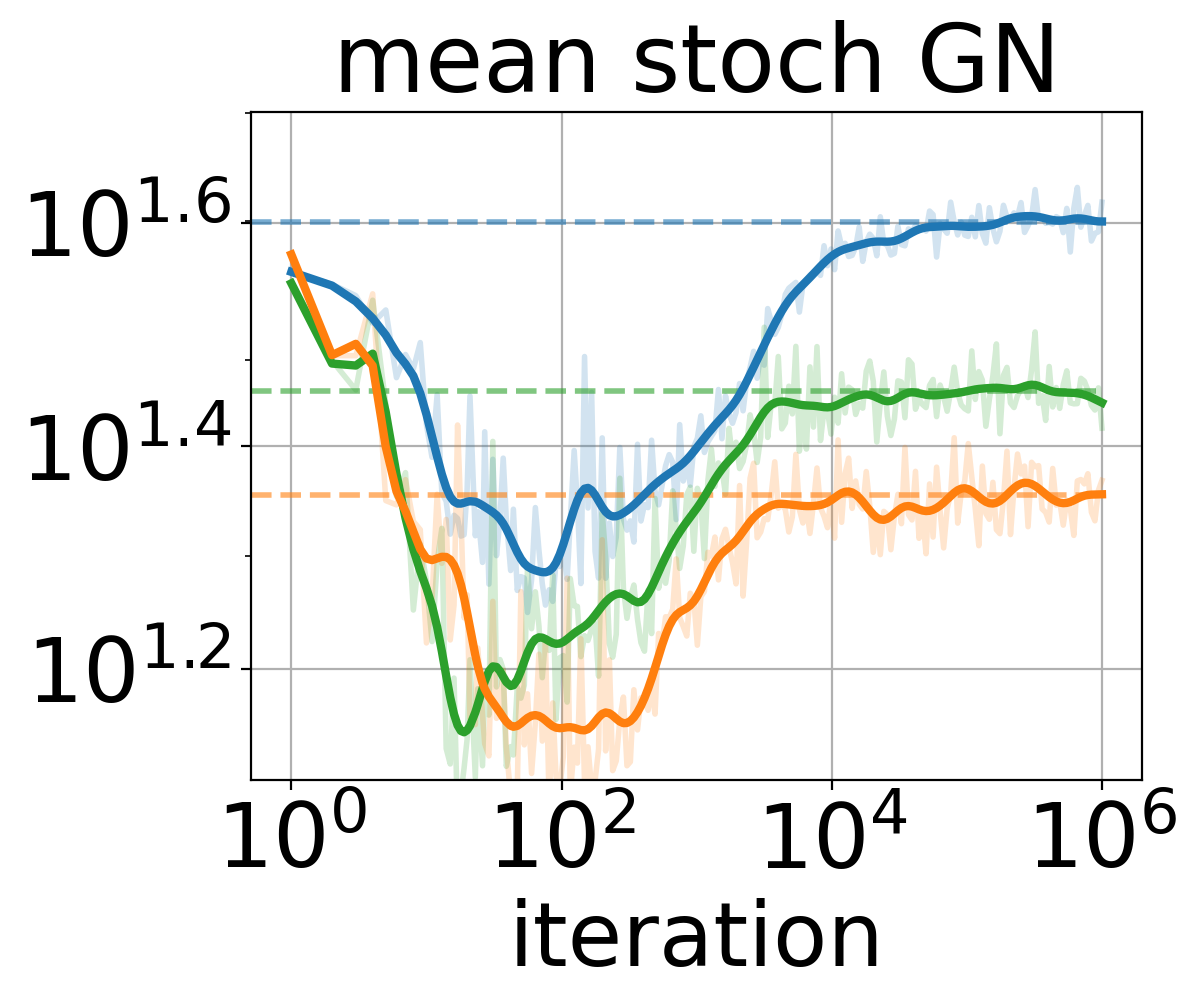} & 
        \includegraphics[width=0.325\textwidth]{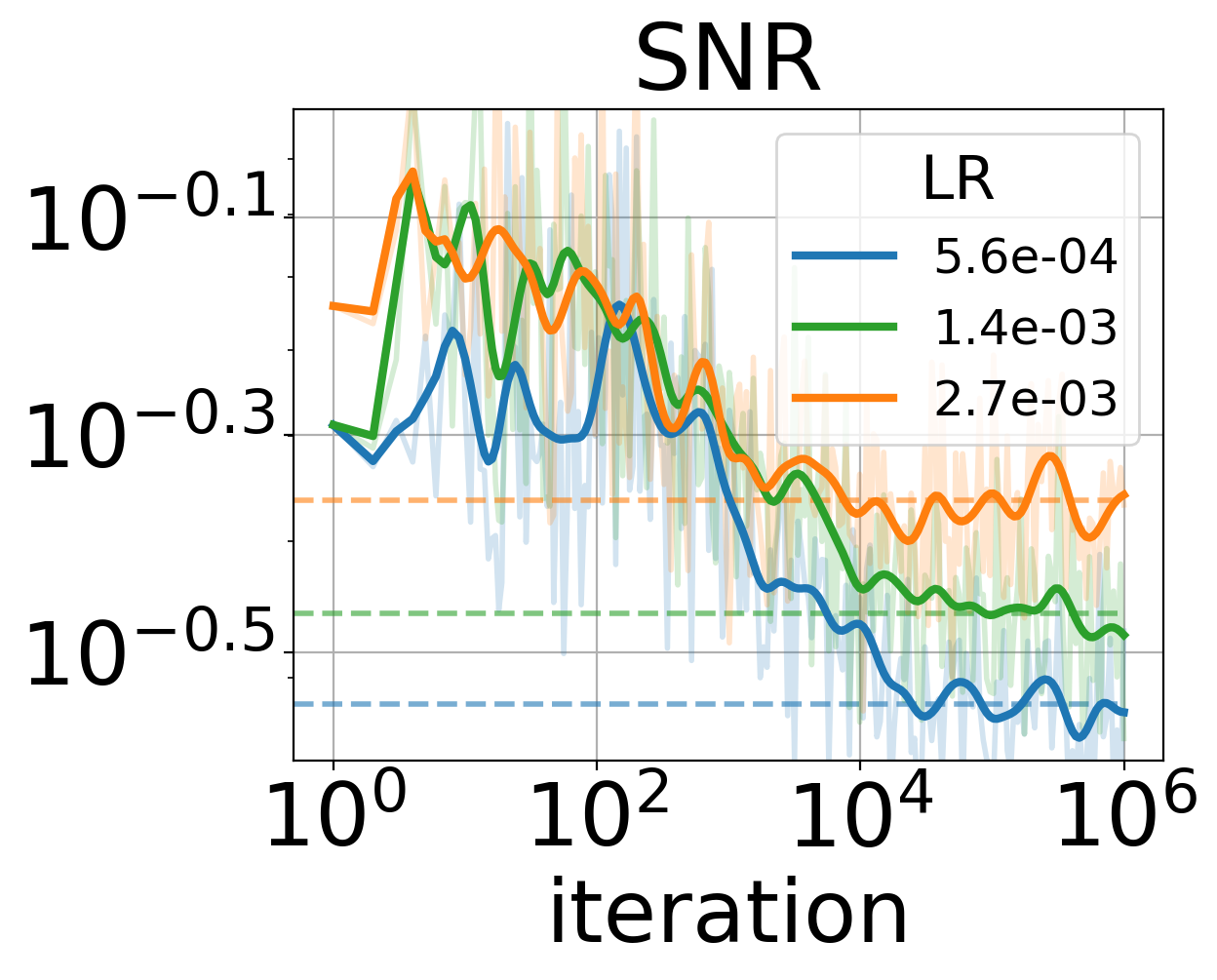} \\
        \multicolumn{3}{c}{OP ResNet-18 on CIFAR-10} \\
        \includegraphics[width=0.31\textwidth]{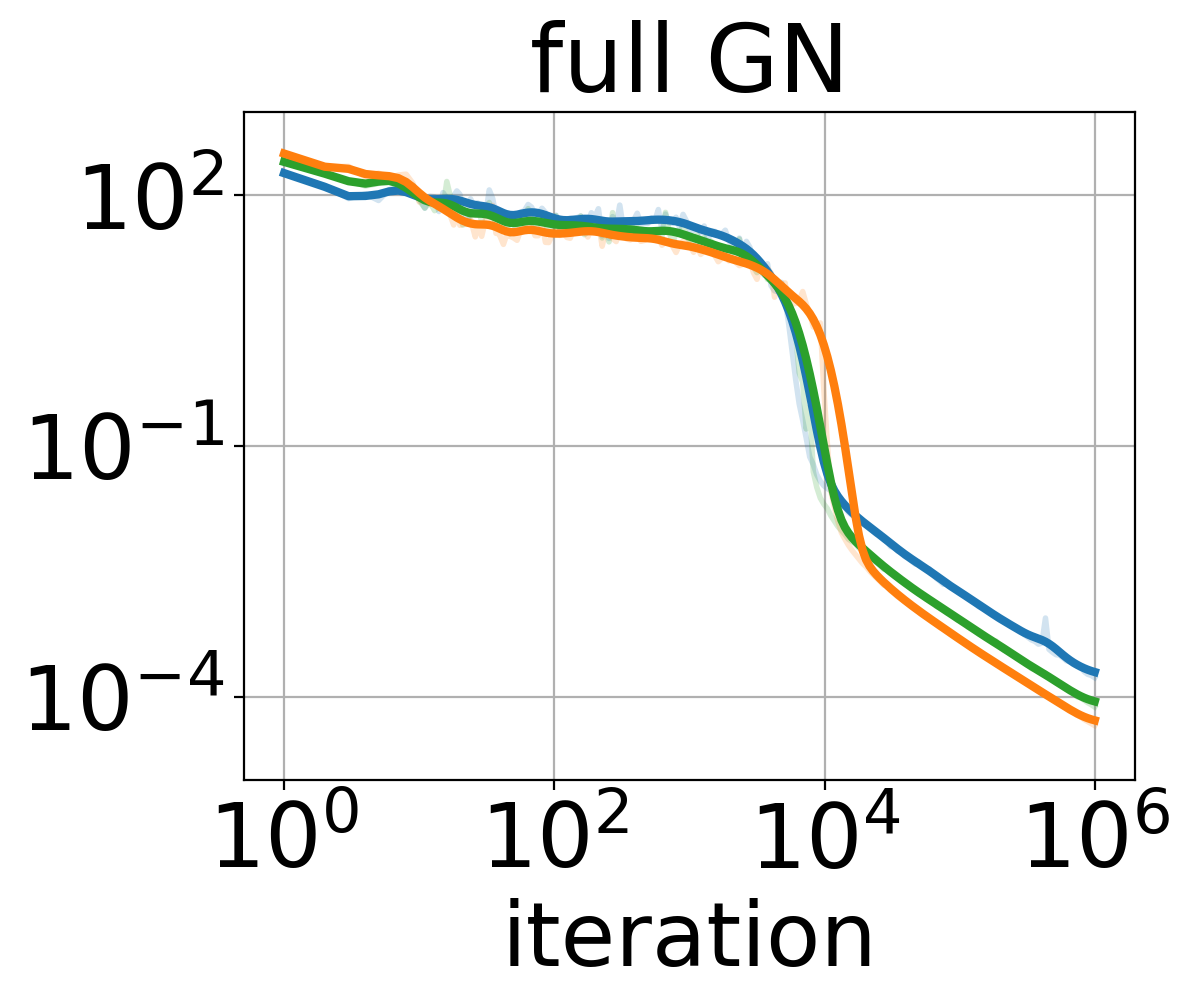} &
        \includegraphics[width=0.31\textwidth]{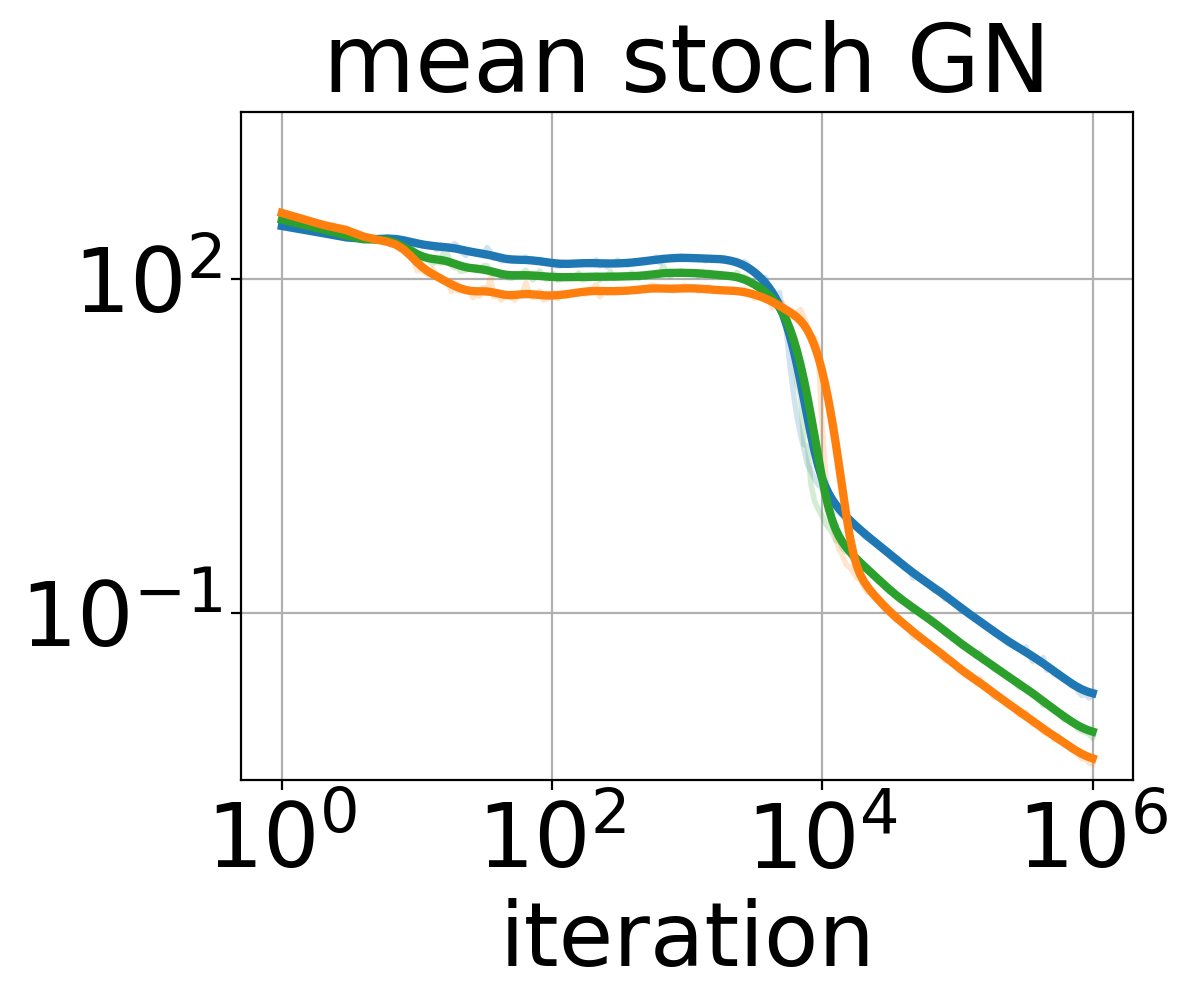} & 
        \includegraphics[width=0.315\textwidth]{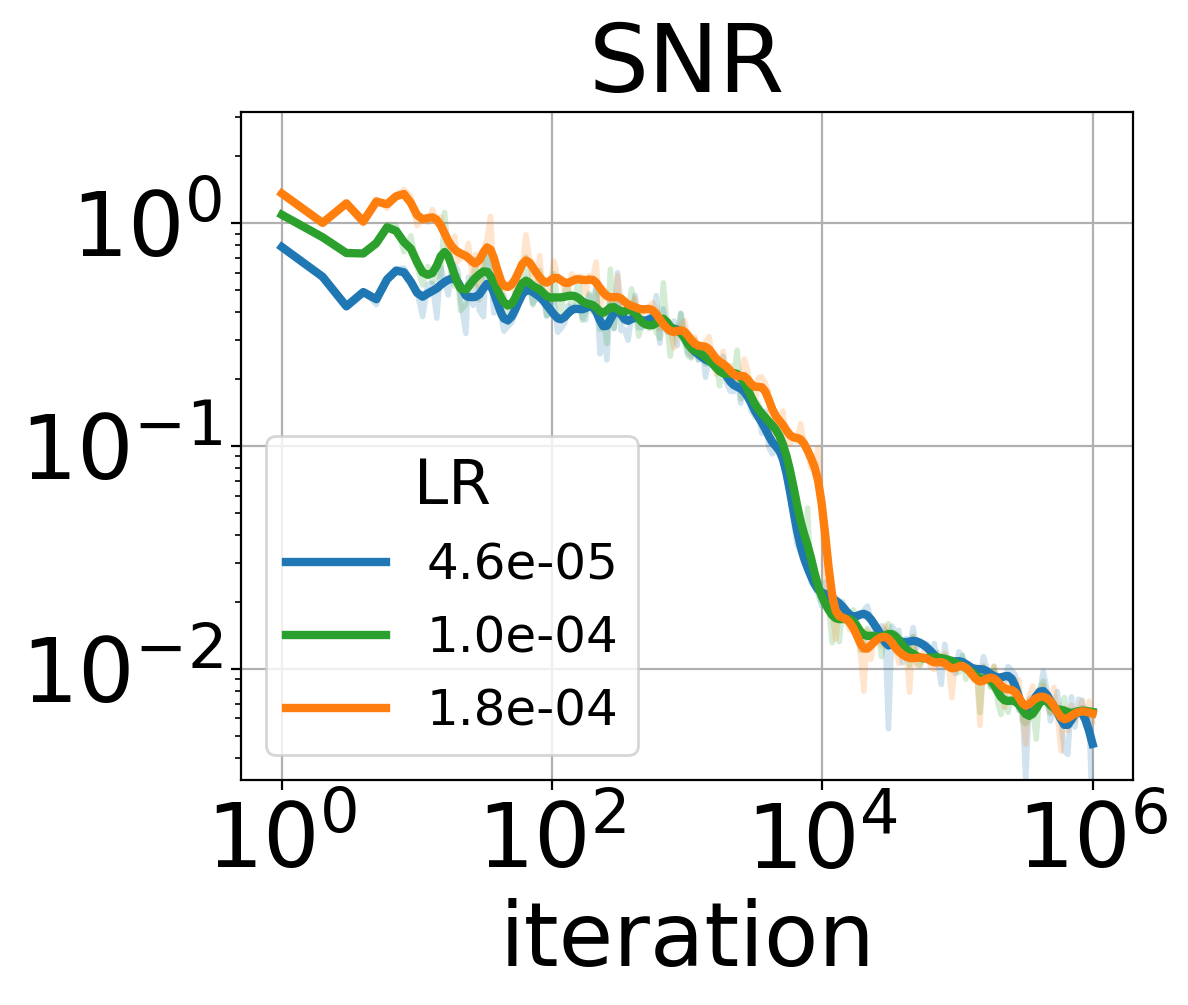} \\
        \multicolumn{3}{c}{UP ResNet-18 on CIFAR-100} \\
        \includegraphics[width=0.31\textwidth]{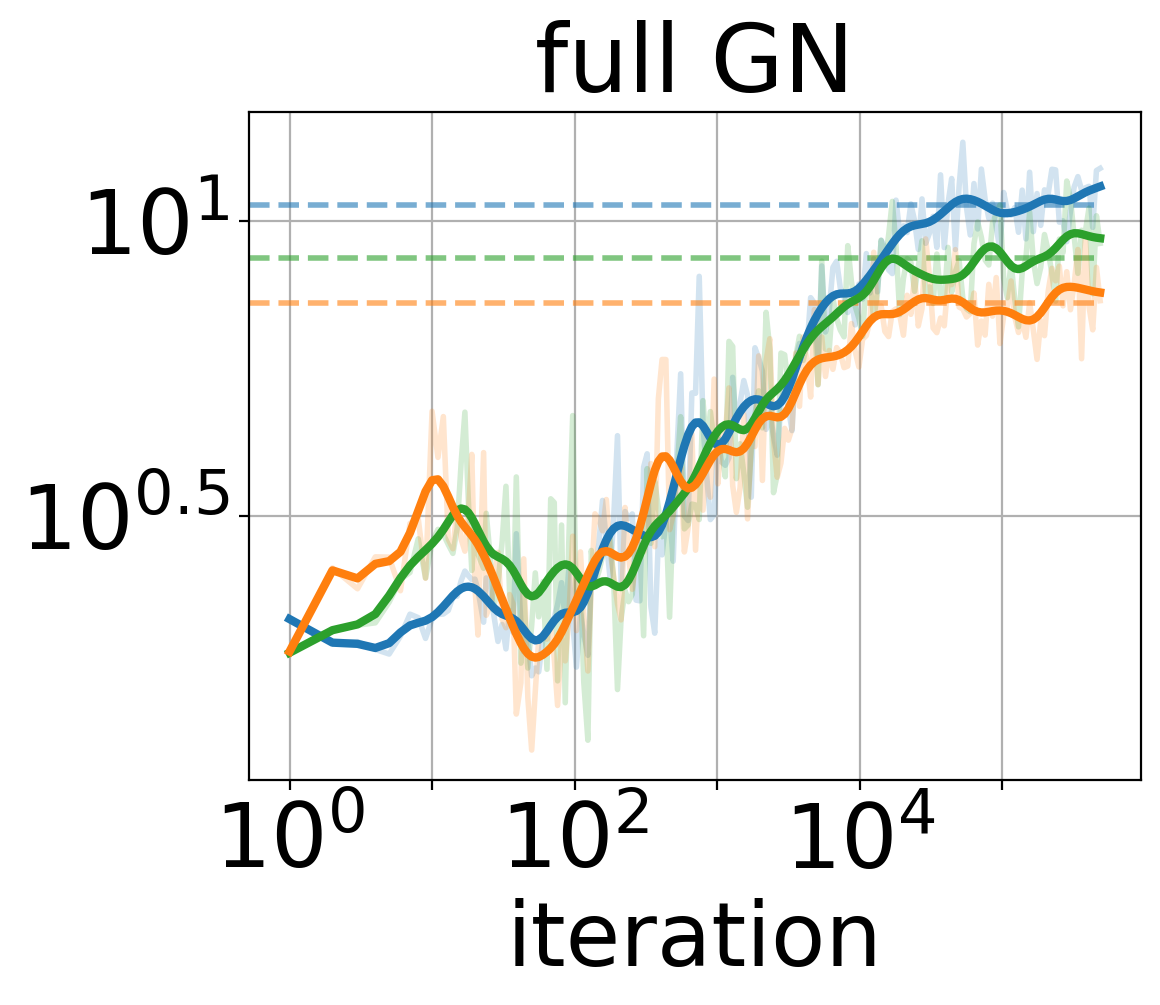} &
        \includegraphics[width=0.31\textwidth]{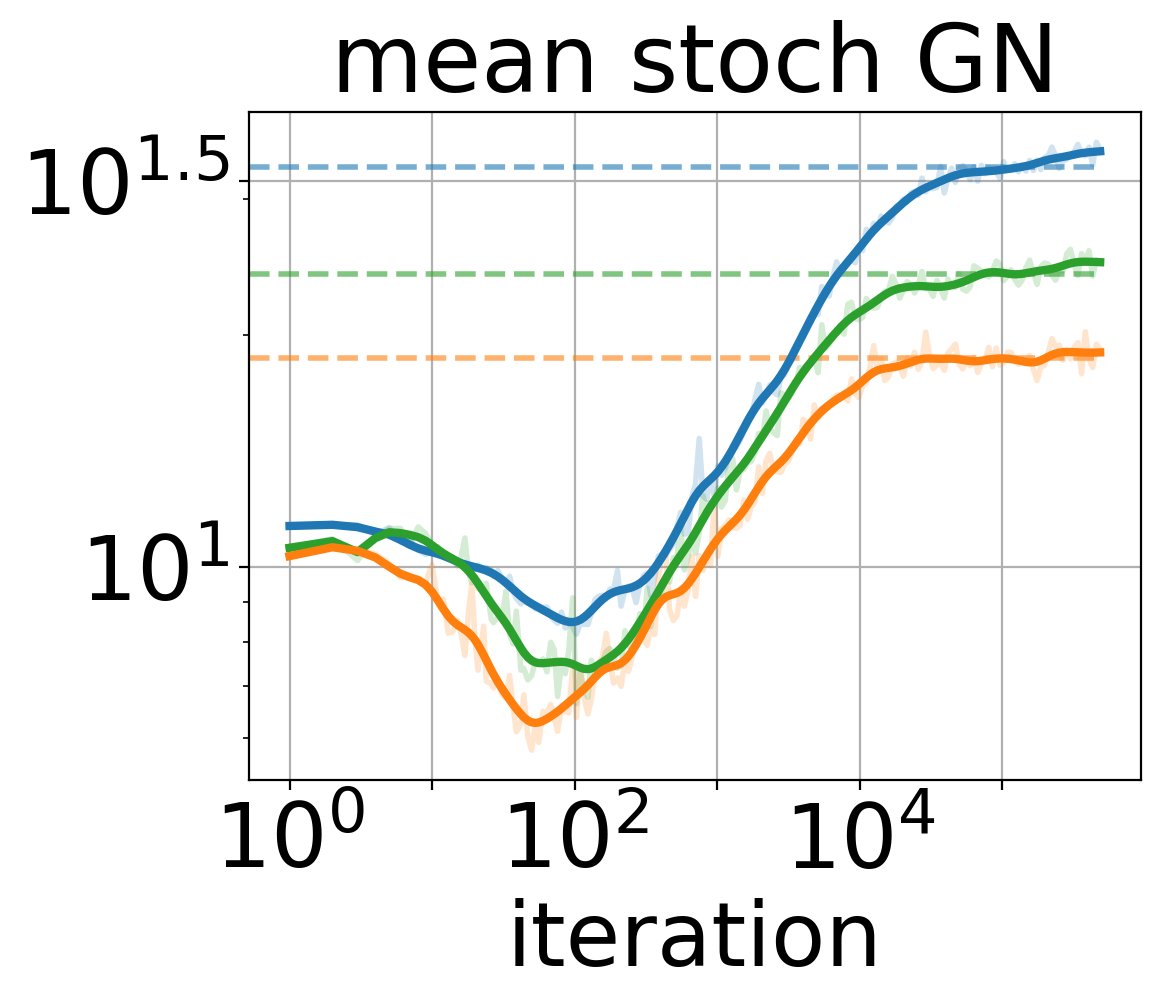} & 
        \includegraphics[width=0.325\textwidth]{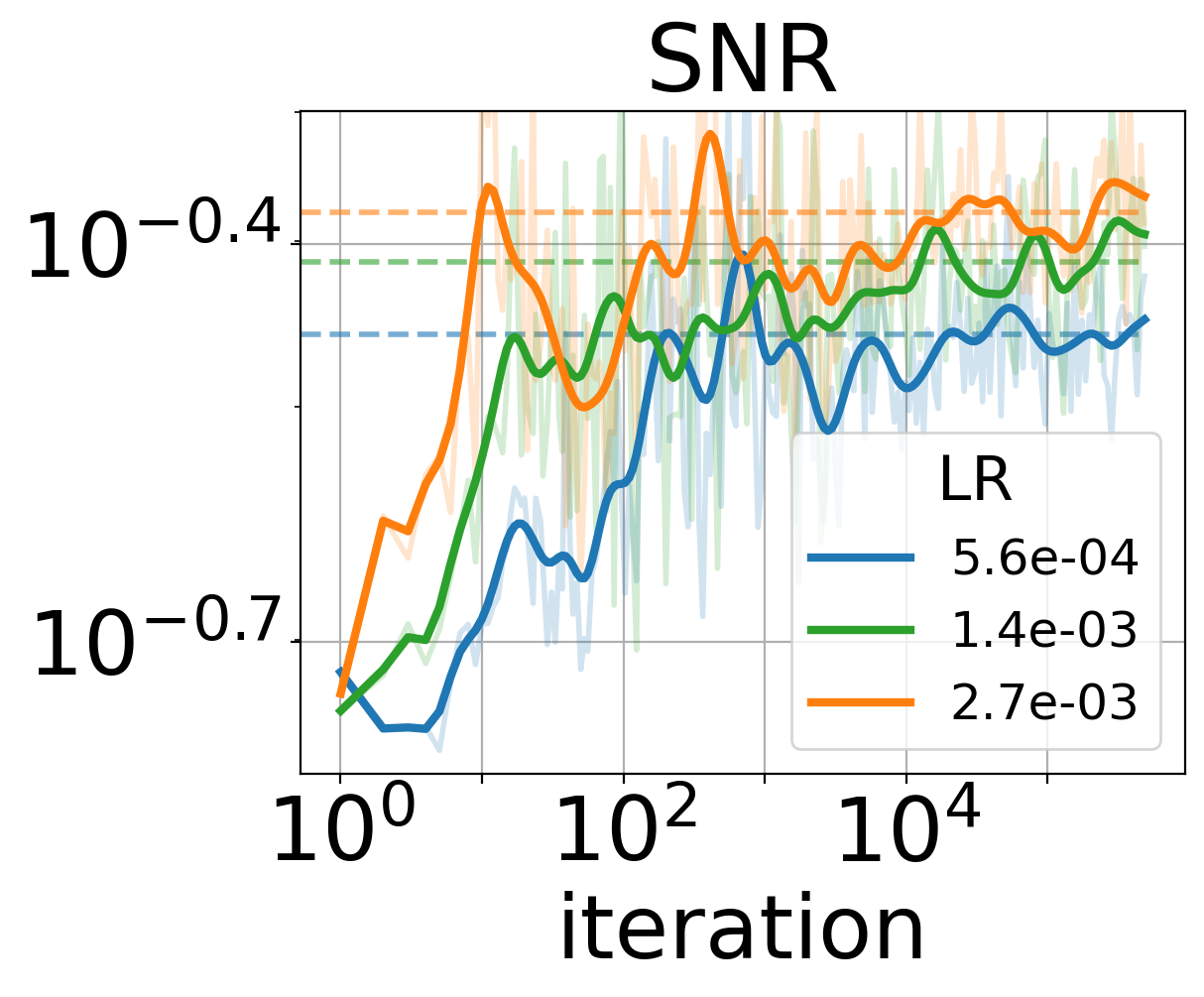} \\
        \multicolumn{3}{c}{OP ResNet-18 on CIFAR-100} \\
        \includegraphics[width=0.31\textwidth]{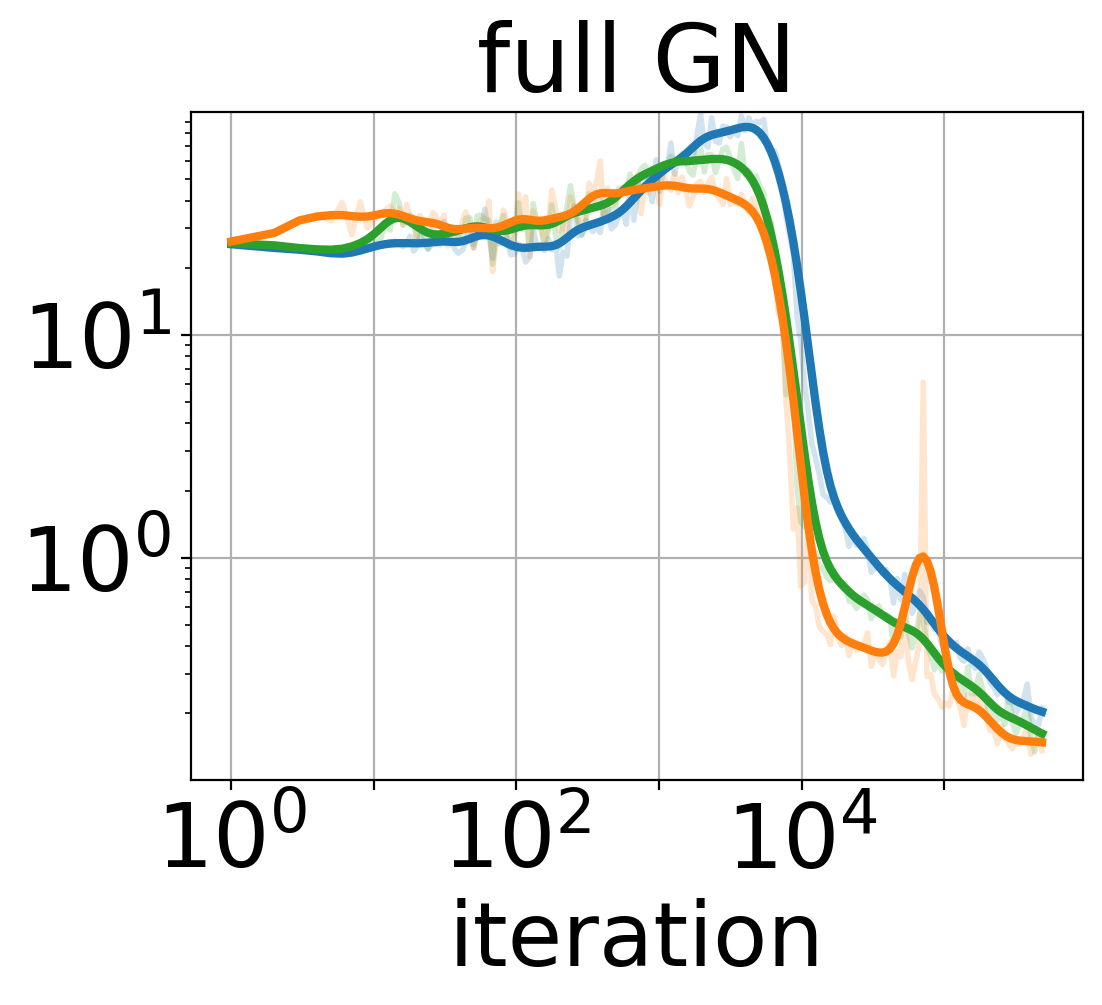} &
        \includegraphics[width=0.31\textwidth]{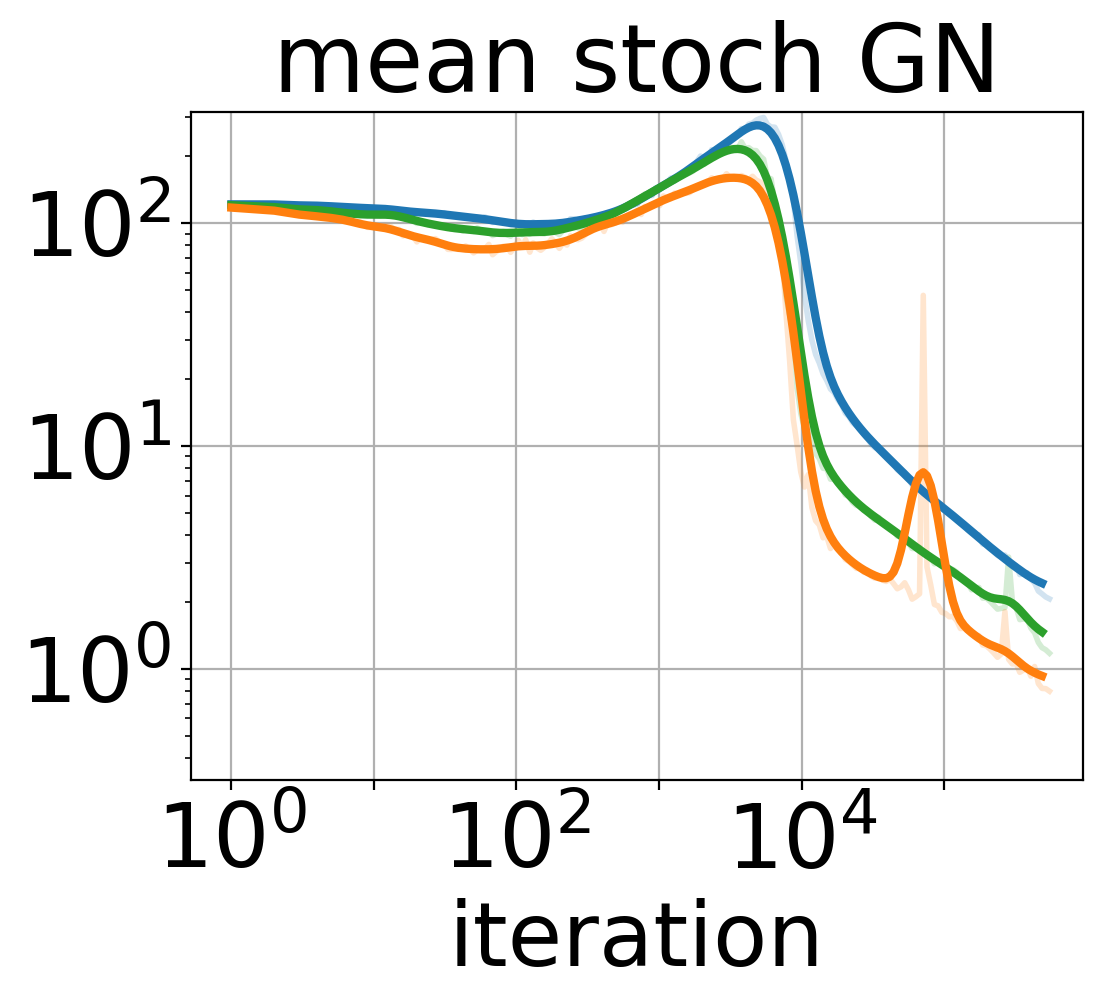} & 
        \includegraphics[width=0.335\textwidth]{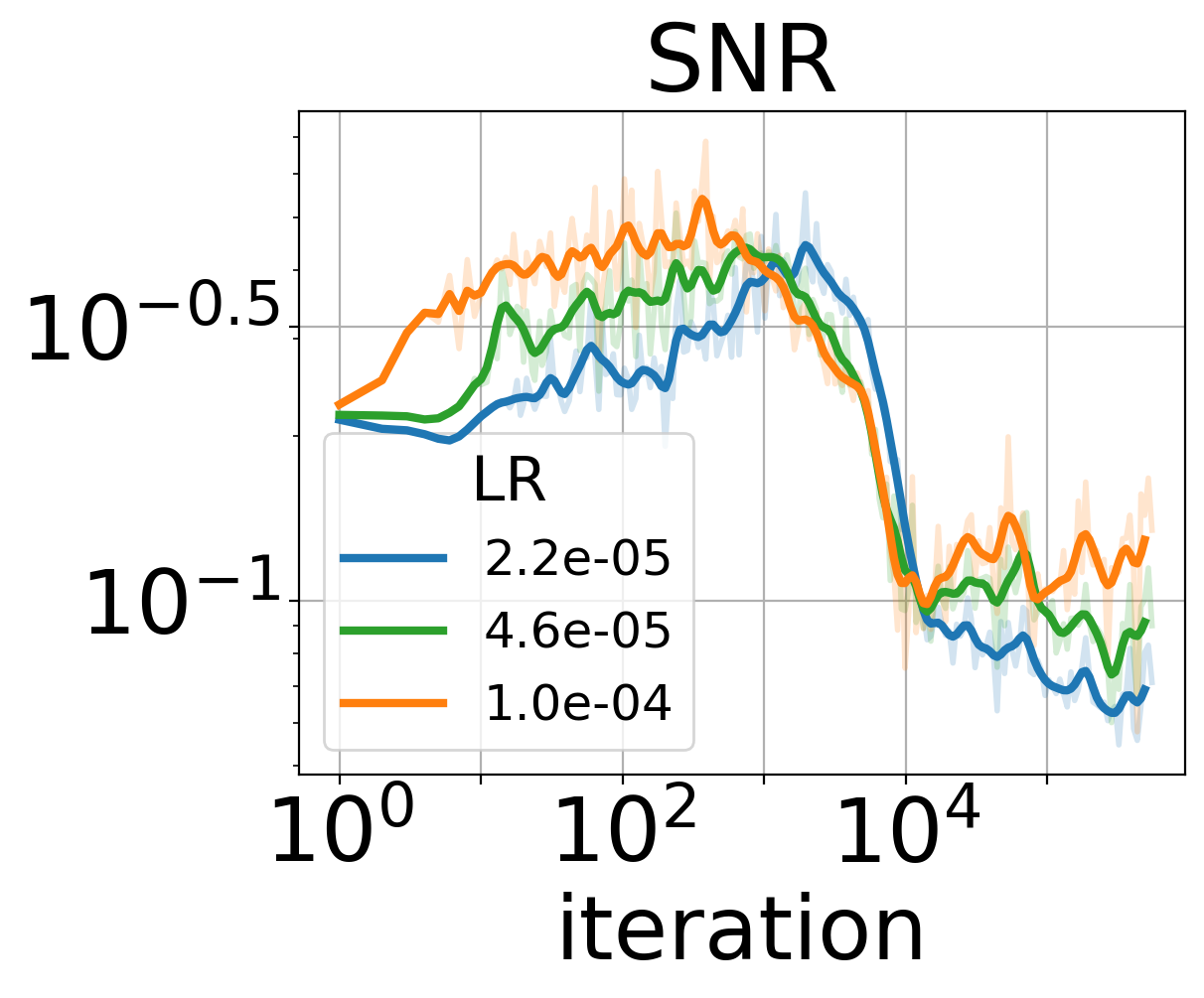}  \\
    \end{tabular}
    \caption{Norm of full gradient (left), mean norm of stoch. gradient (center) and SNR (right) for small LRs of UP and OP setups for ResNet-18 on CIFAR-10 and CIFAR-100 datasets. Dashed lines indicate stationary values for the UP setup. This figure extends Figure~\ref{fig:snr_convnet} from the main text.}
    \label{fig:app_snr_resnet}
\end{figure}

\begin{figure}
\centering
    \addtolength{\tabcolsep}{-0.4em}
    \begin{tabular}{ccc}
        \multicolumn{3}{c}{UP ConvNet on CIFAR-100} \\
        \includegraphics[width=0.31\textwidth]{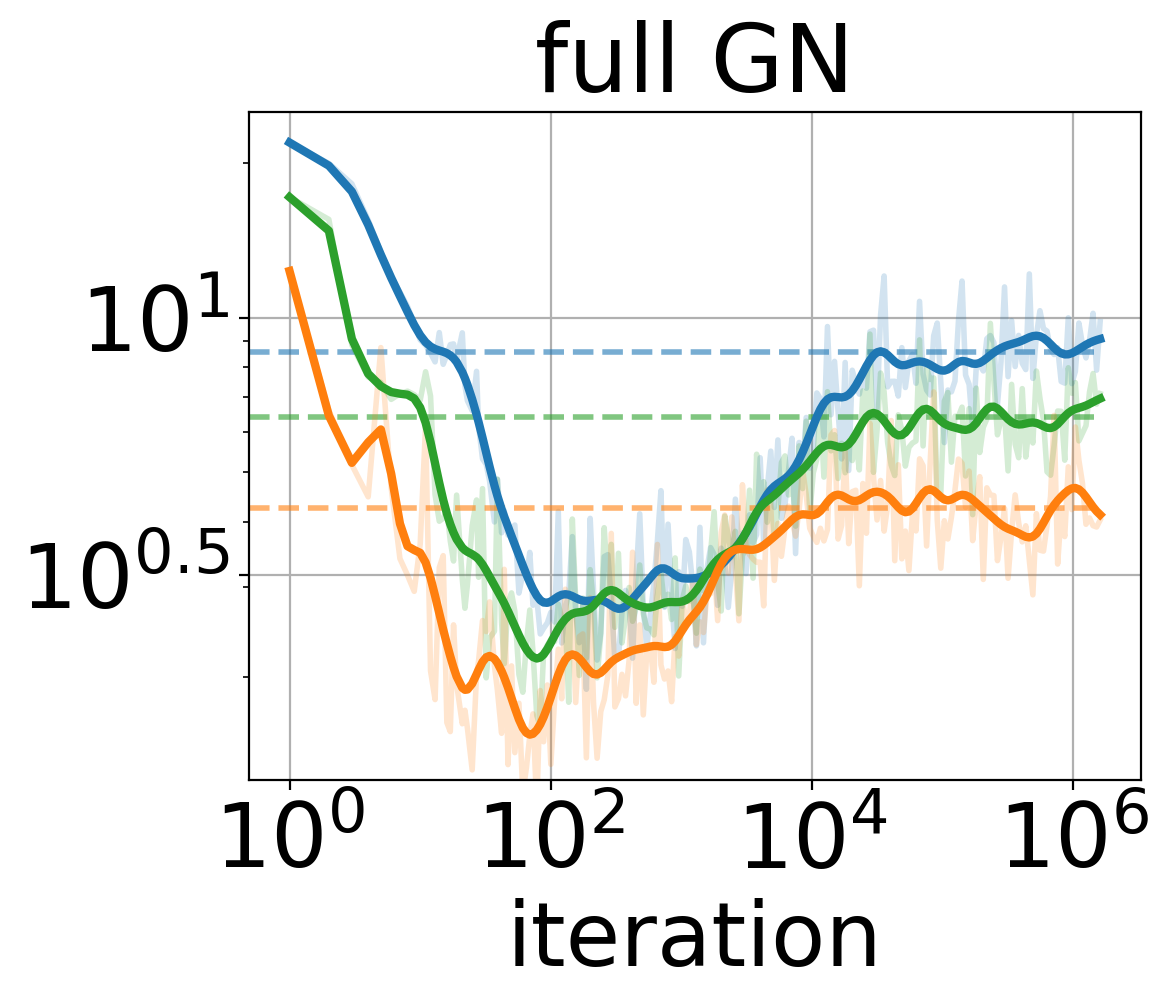} &
        \includegraphics[width=0.31\textwidth]{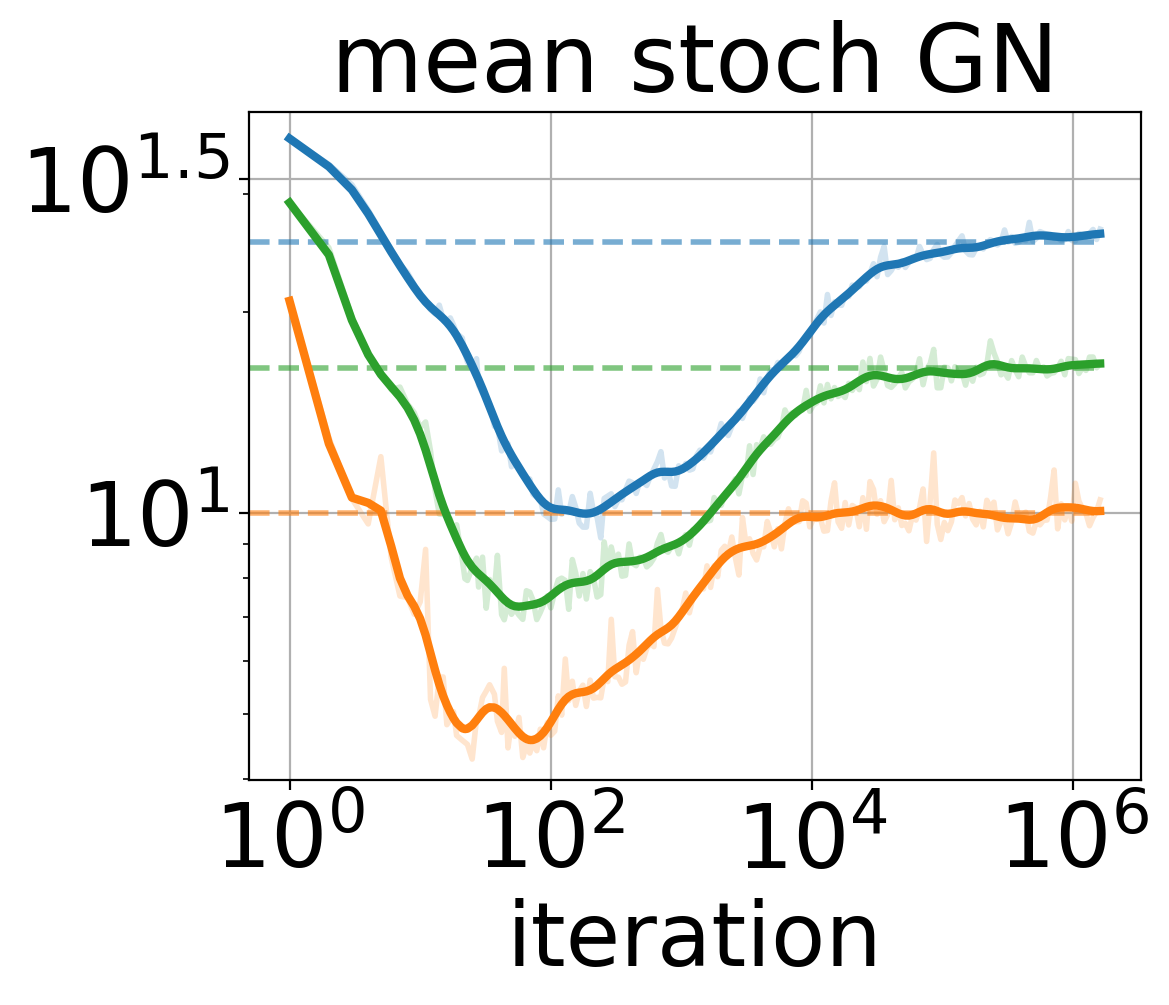} & 
        \includegraphics[width=0.325\textwidth]{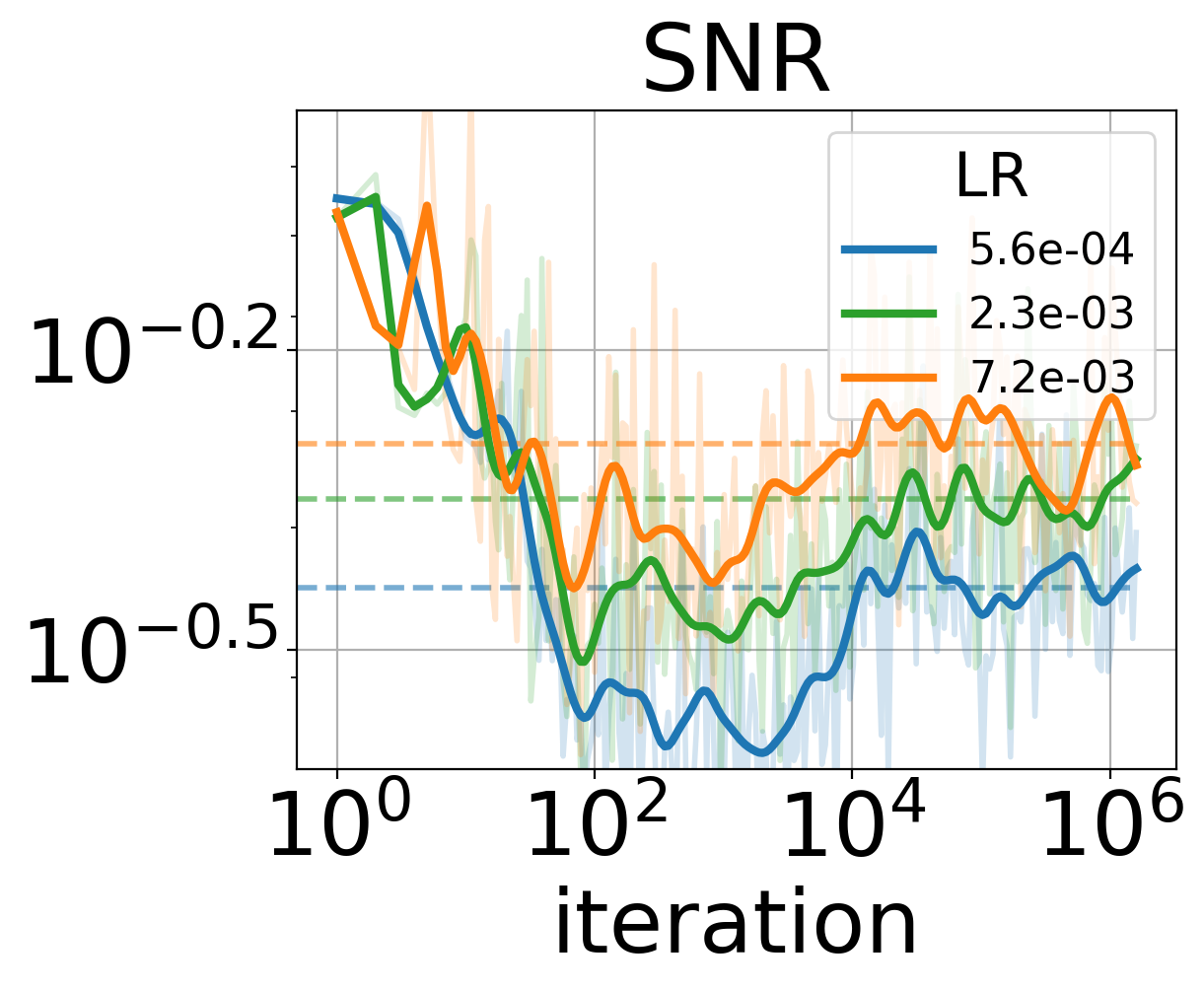} \\
        \multicolumn{3}{c}{OP ConvNet on CIFAR-100} \\
        \includegraphics[width=0.31\textwidth]{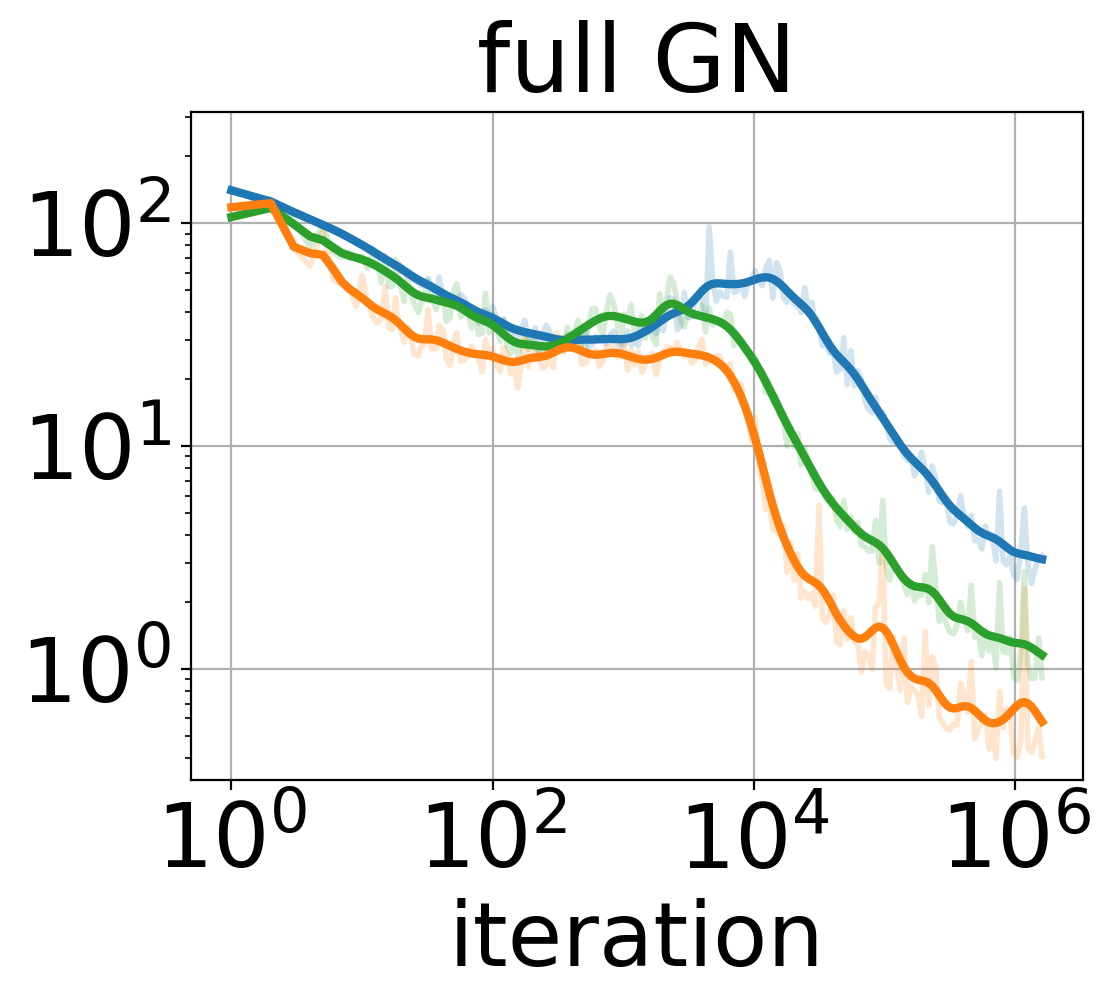} &
        \includegraphics[width=0.31\textwidth]{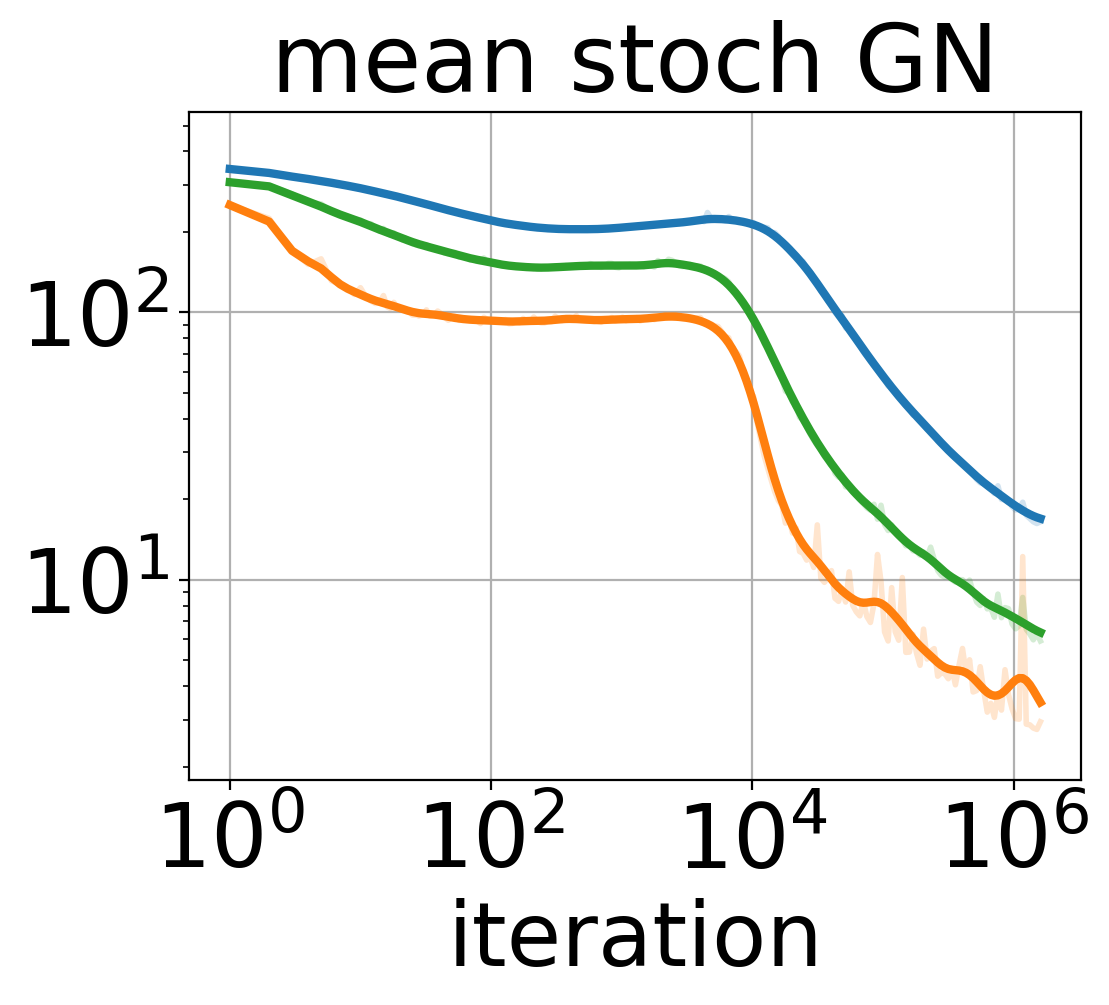} & 
        \includegraphics[width=0.335\textwidth]{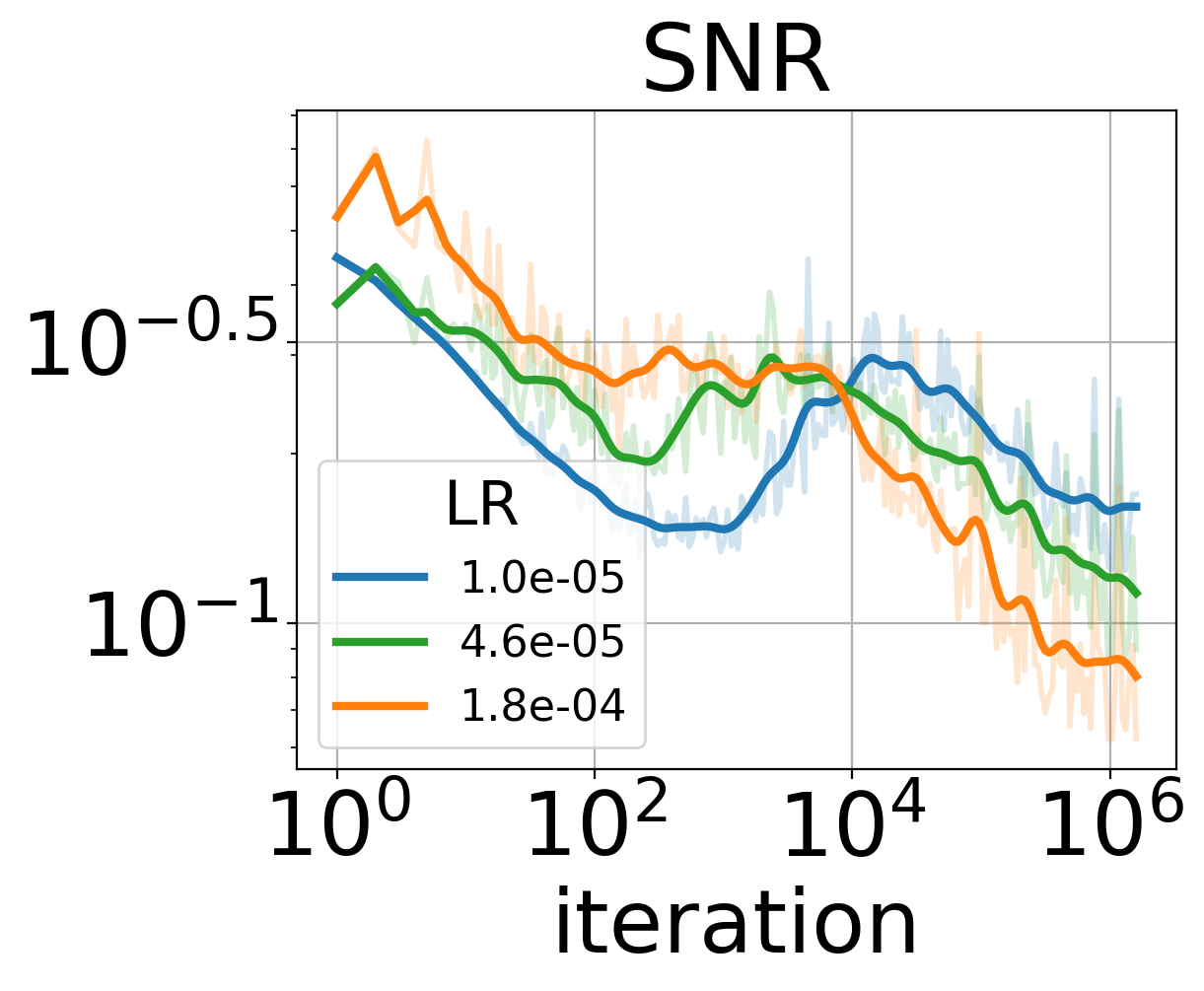} 
    \end{tabular}
    \caption{Norm of full gradient (left), mean norm of stoch. gradient (center) and SNR (right) for small LRs of UP and OP setups for ConvNet on CIFAR-100. Dashed lines indicate stationary values for the UP setup. This figure extends Figure~\ref{fig:snr_convnet} from the main text.}
    \label{fig:app_snr_convnet}
\end{figure}

%%%%%%%%%%%%%%%%%%%%%%%%%%%%%%%%%%%%%%%%%%%%%%%%%%%%%%%%%%%%

\end{document}